\documentclass[letterpaper]{article}

\usepackage{microtype}
\usepackage{graphicx}
\usepackage{subcaption}
\usepackage{booktabs}

\usepackage{xurl}
\usepackage{hyperref}

\usepackage[accepted]{icml2026}

\hypersetup{hidelinks}

\usepackage{amsmath}
\usepackage{amssymb}
\usepackage{mathtools}
\usepackage{amsthm}
\usepackage{algorithm}
\usepackage{algorithmic}
\usepackage{multirow}
\usepackage{xcolor}
\usepackage{tikz}
\usetikzlibrary{patterns,arrows,shapes,positioning,calc}
\usepackage{enumitem}
\usepackage{pgfplots}
\pgfplotsset{compat=1.17}

\theoremstyle{plain}
\newtheorem{theorem}{Theorem}[section]
\newtheorem{proposition}[theorem]{Proposition}
\newtheorem{lemma}[theorem]{Lemma}

\theoremstyle{definition}
\newtheorem{definition}[theorem]{Definition}
\newtheorem{assumption}[theorem]{Assumption}
\theoremstyle{remark}
\newtheorem{remark}[theorem]{Remark}

\newcommand{\state}{\boldsymbol{\sigma}}
\newcommand{\target}{\boldsymbol{\tau}}
\newcommand{\ops}{\mathcal{O}}
\newcommand{\sfe}{\textsc{SFE}}
\newcommand{\ssj}{\textsc{SSJ}}
\newcommand{\Leff}{L_{\mathrm{eff}}}

\icmltitlerunning{The Deterministic Horizon: When Extended Reasoning Fails and Tool Delegation Becomes Necessary}

\begin{document}

\twocolumn[
\icmltitle{The Deterministic Horizon: When Extended Reasoning Fails and Tool Delegation Becomes Necessary}

\begin{icmlauthorlist}
\icmlauthor{Dongxin Guo}{hku}
\icmlauthor{Jikun Wu}{bil,sai}
\icmlauthor{Siu Ming Yiu}{hku}
\end{icmlauthorlist}

\icmlaffiliation{hku}{The University of Hong Kong, Hong Kong, China}
\icmlaffiliation{bil}{Brain Investing Limited, Hong Kong, China}
\icmlaffiliation{sai}{Stellaris AI Limited, Hong Kong, China}

\icmlcorrespondingauthor{Dongxin Guo}{bettyguo@connect.hku.hk}

\icmlkeywords{large language models, chain-of-thought reasoning, transformer architectures, inference-time compute, attention mechanisms, state-space search, information theory, tool augmentation, mechanistic interpretability, reasoning limits}

\vskip 0.3in
]

\printAffiliationsAndNotice{}

\begin{abstract}
	Extended chain-of-thought reasoning can degrade performance on deterministic state-tracking tasks, not solely because of preference biases but, on the evidence we present, because of information-theoretic limits in the capacity of decoder-only attention. We present: (1)~an Attention Bottleneck analysis providing evidence that total state-tracking capacity in bits is bounded in terms of head count, head dimension, and context length under stated modeling assumptions, and that total capacity is not the binding constraint; (2)~a context-dependent error model with a depth-dependent quadratic term in the error exponent; (3)~the State-Space Jaccard metric measuring state drift; and (4)~a Deterministic Horizon $d^* \in [19, 31]$ (at $\alpha = 0.5$) marking the depth at which unaided accuracy crosses 50\%. Across twelve models and eight task domains (including SWE-Bench, WebArena, and SQL-Multi), tool-integrated reasoning reaches 76--94\% accuracy versus 17--42\% for neural chain-of-thought on PermutationProbe. Fine-tuning on optimal-length traces yields $<$3 percentage-point improvement, supporting an architectural ceiling.
\end{abstract}

\section{Introduction}
\label{sec:intro}

The dominant paradigm in large language model reasoning posits that extended deliberation improves accuracy. OpenAI's o1 \citep{openai2024o1}, DeepSeek-R1 \citep{guo2025deepseek}, and similar architectures invest heavily in chain-of-thought (CoT) generation, with the implicit thesis that additional inference-time compute yields improved performance \citep{snell2025scaling,brown2024large}. Recent work shows that test-time compute can be more effective than parameter scaling \citep{snell2025scaling}, with compute-optimal inference strategies further analyzed by \citet{wu2025scaling} and efficiency gains exceeding $4\times$ relative to a best-of-$N$ baseline \citep{snell2025scaling}. However, \citet{balachandran2025inference} demonstrate that improvements diminish with problem difficulty, and \citet{sui2025stop} provide a broad survey documenting the ``overthinking'' phenomenon.

We challenge this thesis for \textbf{deterministic state-space search}: tasks that require exact sequences of operations transforming an initial state $\state_0$ to a target $\target$ through a finite operator set $\ops$. Such problems pervade software engineering, formal verification, and sequential planning, domains where transformers exhibit well-documented compositional and planning failures \citep{dziri2023faith,kambhampati2024llms,valmeekam2023planning}. In these domains, correctness is binary and approximation is failure. Unlike open-ended generation, where ``mostly correct'' may suffice, state-space search demands exact tracking, a requirement that exposes architectural limitations of decoder-only attention.

\paragraph{The Core Phenomenon.} On permutation puzzles solvable by BFS in $<$0.1s, state-of-the-art reasoning models fail after \emph{seconds} of deliberation. Analysis of failed traces reveals \textbf{State-Space Decoherence}: accumulated errors that cause complete divergence from ground truth (Appendix~\ref{app:failure}). This failure is \emph{caused} by extended reasoning, not mitigated by it. The pattern is striking: at depth~10, GPT-4o maintains 78\% accuracy; by depth~30, accuracy drops to 34\% (PermutationProbe C1); beyond depth~50, performance falls below 10\% (Figure~\ref{fig:decay}).

\paragraph{Two Competing Effects.} By analogy with the complexity-at-training-time versus flexibility-at-inference-time trade-off identified by \citet{kim2025masked} in masked diffusion models, we identify a parallel tension in the reasoning setting:
\begin{itemize}[leftmargin=*,itemsep=1pt]
\item \textbf{Complexity at reasoning time:} Deeper CoT increases cumulative error probability through attention entropy growth (analogous to the working-memory capacity limits documented by \citet{gong2024attention} in N-back tasks) and over-squashing \citep{barbero2024oversquash}. Each step degrades signal-to-noise ratio in the residual stream.
\item \textbf{Flexibility at tool time:} External computation delegates deterministic sub-tasks to program interpreters and symbolic solvers \citep{gao2023pal,gou2024tora,pan2023logiclm}. Tools provide exact computation without attention-based state tracking.
\end{itemize}

\paragraph{Distinguishing from Prior Work.} \citet{wu2026more} document the inverted-U curve, explaining it via per-step error accumulation whose rate depends on sub-task difficulty (their ``Simplicity Bias'' refers to a separate result: that optimal CoT length shrinks with model capability). We propose a \emph{complementary diagnosis} that locates the error source differently: per-step error grows with depth because autoregressive attention progressively dilutes state information, not merely because sub-tasks are hard. We term the conflation of token prediction with algorithm execution the \textbf{Simulator Fallacy} (a framing device, not a measurable quantity), extending the form-versus-meaning distinction of \citet{bender2020climbing} to the reasoning setting.

\paragraph{Divergent Predictions.} The two frameworks make testable predictions (Table~\ref{tab:predictions}). (i)~Under a preference-based account, fine-tuning on optimal-length traces should yield recovery comparable to C2's uplift ($\geq$5~pp; approximately the mean C2 uplift of 5.7~pp across the twelve models, range 3.6--6.9~pp);\footnote{This threshold is our operationalization of what a generic preference-based account implies; \citet{wu2026more} do not state a quantitative fine-tuning prediction.} our architectural diagnosis predicts $<$3~pp. (ii)~A preference-based account predicts that prompt-level length encouragement yields large gains ($>$10~pp); we observe $<$2~pp.

\begin{table}[t]
\caption{Divergent predictions distinguishing a preference-based account (``Simplicity Bias''; cf.\ \citealt{wu2026more}) from Decoherence (this work). The ``Pref.-based'' column entries are our own extrapolations of what a generic preference-based account implies, not predictions stated in \citealt{wu2026more}; thresholds are our operationalization. The preference-based fine-tuning threshold ($\geq$5~pp) is approximately the mean C2 uplift across all twelve models (range 3.6--6.9~pp, mean 5.7~pp); note that C2 itself meets this threshold, so the fine-tuning result (0.9~pp) provides the sharper discrimination.}
\label{tab:predictions}
\vskip 0.1in
\centering
\footnotesize
\begin{tabular}{@{}p{2.2cm}ccc@{}}
\toprule
\textbf{Prediction} & \textbf{Pref.-based (ours)} & \textbf{Ours} & \textbf{Result} \\
\midrule
Fine-tuning recovery & $\geq$5 pp & $<$3 pp & 0.9 pp \\
Length prompt gain & $>$10 pp & $<$2 pp & 0.8 pp \\
\bottomrule
\end{tabular}
\vskip -0.1in
\end{table}

\paragraph{Contributions.} Our contributions:
\begin{enumerate}[leftmargin=*,itemsep=1pt]
\item \textbf{Attention Bottleneck analysis} showing that \emph{total} attention capacity is not the binding constraint on state tracking (the bound $O(H \cdot \log(L) \cdot \sqrt{d_h})$ far exceeds task requirements), which motivates the per-step bandwidth perspective that drives our error model (Section~\ref{sec:theory}).
\item \textbf{Context-dependent error model} $\epsilon(d) = \epsilon_0 + \gamma d/\Leff$ derived from attention entropy, yielding accuracy decay with a statistically significant quadratic depth term (Section~\ref{sec:theory}).
\item \textbf{Deterministic Horizon} $d^*$ with closed-form formula, fitted via architecture ablations on open-weight models ($d^* \propto \sqrt{d_{\mathrm{model}}}$) and shown robust to context-window truncation (Section~\ref{sec:theory}).
\item \textbf{SSJ metric} with precision/recall decomposition measuring state drift into fictitious states (Section~\ref{sec:prelim}).
\item \textbf{Empirical validation} across twelve models, eight task domains including real-world benchmarks (SWE-Bench, WebArena, SQL-Multi), with cross-architecture comparison (Section~\ref{sec:experiments}).
\item \textbf{Fine-tuning experiment} consistent with the architectural ceiling prediction, providing convergent evidence against preference-based explanations alongside the C2 oracle bound (Section~\ref{sec:finetune}).
\end{enumerate}

\paragraph{Conflict of Interest Disclosure.} J.\ Wu is affiliated with Brain Investing Limited and Stellaris AI Limited. We disclose that these entities are AI companies that may have commercial interests in tool-augmented and agentic AI systems; however, they provided no funding for this research, developed none of the models evaluated, and had no role in the study design, data collection, analysis, or manuscript preparation. All experiments were conducted independently using publicly available APIs and open-weight checkpoints. The remaining authors declare no competing interests.

\section{Related Work}
\label{sec:related}

Our work builds on CoT reasoning \citep{wei2022chain,kojima2022large,wang2023selfconsistency,yao2023tree}, its theoretical expressivity \citep{li2024chain,feng2023expressiveness,merrill2024expressive}, and the ``overthinking'' literature \citep{wu2026more,chen2025do,sui2025stop}. \citet{wang2025thoughts} identify a complementary failure mode, ``underthinking,'' in which models prematurely abandon promising reasoning paths; \citet{marjanovic2026thoughtology} document a related tendency toward persistent rumination in DeepSeek-R1. Mechanistically, we draw on attention entropy \citep{gong2024attention}, over-squashing \citep{barbero2024oversquash}, and expressivity bounds \citep{hahn2020theoretical,peng2024limitations,strobl2024formal}; empirically, on tool-augmented reasoning \citep{gao2023pal,gou2024tora,schick2023toolformer,yao2023react,qin2025tool} and compositional failure analysis \citep{dziri2023faith,petty2024depth}. Detailed positioning appears in Appendix~\ref{app:related}. \textbf{Our differentiation:} we derive context-dependent error motivated by attention entropy, characterize when tool delegation becomes empirically dominant via the Deterministic Horizon, and test on real-world tasks with fine-tuning experiments that discriminate from preference-based explanations.

\section{Problem Setup and Metrics}
\label{sec:prelim}

\paragraph{State-Space Search.} Let $\mathcal{S}$ denote a finite state space and $\ops = \{o_1, \ldots, o_k\}$ deterministic operators. Given initial state $\state_0 \in \mathcal{S}$ and target $\target \in \mathcal{S}$, the task is finding a sequence $(o_{i_1}, \ldots, o_{i_m})$ such that $o_{i_m} \circ \cdots \circ o_{i_1}(\state_0) = \target$. Grading applies the model's emitted operator sequence to $\state_0$ and checks whether the resulting state equals $\target$ exactly; partial-credit and path-optimality are not assessed.

We use $d$ throughout to denote the BFS-optimal depth of the instance (the minimum number of operators required). The model's realised chain length $m$ may differ from $d$; the per-step error model (Definition~\ref{def:context_error}) is indexed by chain position, and we test its predictions against instance depth as a proxy. A complete reference for the symbols used throughout appears in Appendix~\ref{app:notation}.

\begin{definition}[Step-to-First-Error (\sfe)]
Given trace $r = [(s_1, o_1), \ldots, (s_m, o_m)]$ where $s_i$ is the claimed state, \sfe\ is the smallest $i$ where $s_i \neq o_{i-1} \circ \cdots \circ o_1(\state_0)$.
\end{definition}

\begin{definition}[State-Space Jaccard (\ssj)]
\label{def:ssj}
At depth $d$, let $\mathcal{S}_\text{true}^d$ be the ground-truth states obtained by applying the model's operator sequence to the true state\footnote{Operationalized in Algorithm~\ref{alg:ssj} as the ground-truth states reached by applying the model's own operator sequence, not the full BFS-reachable set.} and $\hat{\mathcal{S}}_\text{model}^d$ be claimed states:
\begin{equation}
\ssj(d) = \frac{|\mathcal{S}_\text{true}^d \cap \hat{\mathcal{S}}_\text{model}^d|}{|\mathcal{S}_\text{true}^d \cup \hat{\mathcal{S}}_\text{model}^d|}
\end{equation}
We decompose into precision $= |\cap|/|\hat{\mathcal{S}}_\text{model}|$ and recall $= |\cap|/|\mathcal{S}_\text{true}|$.
\end{definition}

\textbf{Diagnostic power:} Under capability failure, both precision and recall decay (drift into fictitious states). Under Algorithm~\ref{alg:ssj}'s incremental construction, $P/R = |\mathcal{S}_\text{true}|/|\hat{\mathcal{S}}_\text{model}|$, so $P > R$ occurs exactly when the model repeats a state or emits an unparseable step, which is a measure of trajectory degeneracy, not of preference. The observed parallel decay (Section~\ref{sec:ssj_validation}) is consistent with capability failure; we report the decomposition as a drift descriptor and do not use it to discriminate failure modes.

\subsection{Scope and Applicability}
\label{sec:scope}
The phenomenon we analyze is specific to \emph{deterministic state-tracking tasks}: problems in which a unique correct state must be maintained exactly across a sequence of operations, so that any single deviation constitutes failure. Canonical instances include tracing program state through sequential mutations, composing permutations or rotations, multi-hop entity tracking across long contexts, and planning multi-table SQL joins.

These contrast with two task classes for which our framework makes \emph{no} prediction of decoherence. The first is \emph{open-ended generation} (summarization, dialogue, creative writing), where many outputs are acceptable and there is no single ground-truth state to corrupt. The second is \emph{approximate or probabilistic reasoning}, where partial credit is meaningful and short chains suffice. Most GSM8K-style arithmetic word problems, for example, require relatively few sequential reasoning steps, well within the Deterministic Horizon, so extended chain-of-thought remains beneficial there rather than harmful.

The Deterministic Horizon therefore predicts both \emph{when} extended reasoning helps ($d < d^*$) and when it degrades ($d > d^*$), for the deterministic class specifically. It is not a claim that longer reasoning is universally counterproductive. This specificity reconciles our findings with the inverted-U curve reported in the overthinking literature: accuracy improves with depth up to $d^*$, then degrades once accumulated per-step error sets in.

\section{Theoretical Framework}
\label{sec:theory}

\subsection{Context-Dependent Error Model}

In their simplified linear case, \citet{wu2026more} model per-step error as $E(N, M, T) = T/(NM)$: error depends on complexity but not \emph{position} (their Appendix~F generalizes to stochastic error models with heterogeneous subtask error rates). We propose context-dependent error motivated by attention mechanics:

\begin{definition}[Context-Dependent Error]
\label{def:context_error}
Per-step state-tracking error at depth $d$:
\begin{equation}
\epsilon(d) = \epsilon_0 + \gamma \cdot \frac{d}{\Leff}
\label{eq:epsilon}
\end{equation}
where $\epsilon_0$ is the baseline per-step error, $\gamma$ is the attention decay rate, and $\Leff$ is the \emph{effective decoherence length}: the number of reasoning steps over which attention maintains usable resolution of self-generated state. $\Leff$ is a per-model parameter; we fix it at 150 for all illustrative calculations involving GPT-4o (see Table~\ref{tab:sensitivity_decoherence} for sensitivity). Only the ratio $\gamma/\Leff$ is identified from the data; the individual values $\gamma = 0.15$ and $\Leff = 150$ are fixed by assumption (Appendix~\ref{app:sensitivity_ext}). At $\Leff = O(10^2)$ steps, the effective decoherence length is far smaller than the raw context window $L = O(10^5)$ tokens. The first-order Taylor expansion is valid for $d \ll \Leff$; the model gives $\epsilon(d) \leq 1$ for $d < 980$ at the canonical parameters. At the operating point $d^* = 22$, $d/\Leff = 0.15$.
\end{definition}

\paragraph{Derivation from Attention Entropy.} Motivated by \citet{gong2024attention} (NeurIPS 2024 Workshop), who show that total attention entropy increases with $N$ across separately trained single-layer, single-head transformers on synthetic N-back tasks (a cross-model observation, not a within-trace measurement), let $\mathcal{H}_d$ denote attention entropy at depth $d$. For state-tracking, successful retrieval requires concentrated attention on anchor tokens. Empirically, $\mathcal{H}_d$ grows linearly with task load ($p < 0.001$; Section~\ref{sec:validation}), consistent with the positional performance degradation documented by \citet{liu2024lost} and the attention-sink concentration identified by \citet{xiao2024efficient}, both suggesting that effective use of task-relevant positions degrades as context lengthens. Under a signal-to-noise interpretation of attention:
\begin{equation}
\epsilon(d) \propto \frac{\mathcal{H}_d}{A_{\text{anchor}}(d)} \approx \frac{\mathcal{H}_0 + \beta d}{A_0 - \kappa d} \approx \epsilon_0 + \gamma d/\Leff
\end{equation}
using first-order Taylor expansion valid for $d \ll \Leff$. We test this linear form in Section~\ref{sec:validation}. The derivation is best read as a mechanistic motivation for the linear form rather than a quantitative derivation of its coefficients. Quantitatively, matching the empirical slope $\gamma/\Leff = 10^{-3}$ via the Taylor expansion requires $\beta/\mathcal{H}_0 + \kappa/A_0 \approx 0.05$ per step. The measured entropy slope $\beta = 0.023$ bits/step with typical $\mathcal{H}_0 \approx 4.2$ bits contributes $\beta/\mathcal{H}_0 \approx 0.0055$, roughly 11\% of the required budget; the remaining ${\sim}89\%$ must come from anchor-attention decay ($\kappa/A_0$), which we do not measure independently. The entropy channel is therefore a minority contributor to the per-step error growth; the correlation $r = -0.68$ to $-0.74$ (Section~\ref{sec:validation}) establishes association, not dominance.

\begin{theorem}[Decoherence Bound]
\label{thm:decoherence}
Under the context-dependent error of Equation~\eqref{eq:epsilon}, $\epsilon(d) = \epsilon_0 + \gamma d/\Leff$, with per-step error conditional on correct history equal to $\epsilon(i)$:
\begin{equation}
P(\text{correct at depth } m) \leq \exp\left(-m\epsilon_0 - \frac{\gamma m(m+1)}{2\Leff}\right)
\end{equation}
The quadratic term contributes a depth-dependent acceleration; over the measured range ($d \leq 60$) this acceleration is modest (Appendix~\ref{app:sensitivity_ext}).
\end{theorem}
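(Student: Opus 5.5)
The plan is to factor the probability of a fully correct trace through the chain rule and then bound each factor by an exponential. Let $C_i$ denote the event that the claimed state at chain position $i$ is correct. Being correct at depth $m$ means $C_1 \cap \cdots \cap C_m$ holds. The chain rule gives
\begin{equation*}
P(\text{correct at depth } m) = \prod_{i=1}^{m} P\bigl(C_i \mid C_1 \cap \cdots \cap C_{i-1}\bigr).
\end{equation*}
By the hypothesis of Theorem~\ref{thm:decoherence}, the per-step error conditional on a correct history is exactly $\epsilon(i)$, with $\epsilon$ as in Definition~\ref{def:context_error}. Each factor is therefore $1-\epsilon(i)$. No independence assumption is needed beyond this conditional specification. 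For the factors to be valid probabilities we need $\epsilon(i) \in [0,1]$ for $i \le m$, which holds in the range stated after Definition~\ref{def:context_error} ($d < 980$ at the canonical parameters). If instead some $\epsilon(i) > 1$, the model is degenerate, and the left side is $0$, so the bound holds trivially.

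Next, apply the elementary inequality $1 - x \le e^{-x}$, valid for all real $x$, to each factor. This gives
\begin{equation*}
P(\text{correct at depth } m) \le \exp\Bigl(-\sum_{i=1}^{m} \epsilon(i)\Bigr).
\end{equation*}
Every factor is nonnegative, so the product of the individual upper bounds is an upper bound on the product.

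Finally, evaluate the sum in closed form:
\begin{equation*}
\sum_{i=1}^{m} \Bigl(\epsilon_0 + \gamma \tfrac{i}{\Leff}\Bigr) = m\epsilon_0 + \frac{\gamma}{\Leff}\cdot\frac{m(m+1)}{2}.
\end{equation*}
Substituting this into the exponent yields exactly the stated bound. The remark that the quadratic term is modest for $d \le 60$ is a numerical check rather than part of the proof. At $\gamma/\Leff = 10^{-3}$, the quadratic term at $m=60$ is $0.001 \cdot 60 \cdot 61/2 \approx 1.83$, which can be compared with the linear term $60\epsilon_0$ after deferring to Appendix~\ref{app:sensitivity_ext}.

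The main obstacle is not computational but interpretive. One must state precisely that the indexing is by chain position $i$ (not instance depth $d$), and that "correct at depth $m$" means every intermediate state is correct rather than only the final one. If only the final state mattered, a model could err and later self-correct, and the chain-rule factorization would give only a bound on the all-correct event. That event is a subset of final-state correctness, so the bound would not transfer. I would therefore fix the definition of correctness as trajectory-level correctness, consistent with \sfe${}>m$, before starting the computation.
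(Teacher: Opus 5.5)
Your proposal is correct and follows the paper's proof essentially step for step: the chain-rule factorization into conditional-on-correct-history factors $1-\epsilon(i)$, then the inequality $1-x\le e^{-x}$ (the paper's Lemma~\ref{lem:product}), then the arithmetic-series evaluation (Lemma~\ref{lem:error_accum}). Your reading of ``correct at depth $m$'' as the all-steps-correct event also matches the paper, which proves the bound for $P(\text{all correct})$ and explicitly declines to transfer it to final-state accuracy.
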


\begin{remark}[Error Propagation]
\label{rem:error_prop}
The bound of Theorem~\ref{thm:decoherence} is an upper bound on the probability of an error-free trace. We measure positive forward error propagation: errors at one step increase the probability of errors at the next (Appendix~\ref{app:error_correlation}). The bound of Theorem~\ref{thm:decoherence} holds regardless of autocorrelation magnitude, since it requires only the conditional-on-correct-history error rates $\epsilon(i)$ and makes no independence assumption across error events. We note that $\epsilon_0$ and $\gamma$ are estimated from the aggregate accuracy--depth curve rather than from directly measured conditional-on-correct-history rates; the two coincide only under an error model we do not independently verify. The fitted parameters should be read as calibrating the functional form, not as measurements of the conditional rates the bound requires.
\end{remark}

\subsection{Attention Bottleneck Theorem}

We derive information-theoretic capacity bounds on state tracking under empirically grounded modeling assumptions. The key insight is that autoregressive attention must compress all historical state information through a fixed-capacity channel at each reasoning step.

\begin{assumption}[Effective Attention Window]
\label{ass:window}
Each head concentrates 95\% of its attention mass on at most $O(\sqrt{L})$ positions.
\end{assumption}

\begin{assumption}[Value Decorrelation]
\label{ass:decorr}
After layer normalization, value vectors satisfy $|\rho_{ij}| \leq \rho_{\max}$ with $\rho_{\max} \ll 1$.
\end{assumption}

Assumption~\ref{ass:window} is used by Proposition~\ref{thm:lower} (lower bound) and motivates the capacity picture; it does not enter Theorem~\ref{thm:bottleneck}.

\paragraph{Justification of Assumptions.} Assumption~\ref{ass:window} formalizes the well-documented concentration of softmax attention: as a sequence lengthens, attention mass localizes on a small subset of positions (the ``attention sink'' phenomenon documented by \citet{xiao2024efficient}), while over-squashing \citep{barbero2024oversquash} reduces sensitivity to individual token contributions, so the number of positions a head can resolve at usable weight grows far more slowly than $L$. Intuitively, because each step must route all relevant history through this narrowing channel, the model cannot simultaneously keep many distinct prior states ``in focus,'' which is the mechanism behind the capacity bound below. We adopt the $O(\sqrt{L})$ rate as a tractable summary of this sub-linear concentration rather than an exact law; Appendix~\ref{app:sensitivity_ext} shows the resulting \emph{log-capacity} shifts by $<$20\% under $\pm$20\% perturbation of $H$ and of this exponent (Table~\ref{tab:sensitivity_bottleneck}), so our qualitative conclusions do not hinge on its precise form. We note that the upper bound (Theorem~\ref{thm:bottleneck}) uses the conservative $\log_2(L)$ addressing term (each head attends over all $L$ positions in standard multi-head attention) rather than the tighter $\log_2(\sqrt{L})$ implied by Assumption~\ref{ass:window}; the assumption constrains which positions carry usable weight without entering the bound's exponent directly. Consistent with this, the effective decoherence length $\Leff$ is fixed at 150 steps for all illustrative calculations (only the ratio $\gamma/\Leff$ is identified from data; see Appendix~\ref{app:sensitivity_ext}), far below the raw context window; low post-LayerNorm inter-value correlation supports Assumption~\ref{ass:decorr} (Appendix~\ref{app:sensitivity}). We therefore treat these as empirically grounded modeling assumptions and, accordingly, report the capacity result as a bound consistent with our measurements rather than an unconditional law.

\begin{theorem}[Attention Bottleneck, scaling form]
\label{thm:bottleneck}
Under Assumption~\ref{ass:decorr}, the number of distinct states resolvable through a single decoder-only attention layer's read channel satisfies:
\begin{equation}
\log_2 |\mathcal{S}_{\text{read}}^{(1)}| = O\!\left(H \cdot \log_2(L) \cdot d_h^{1/2}\right)
\end{equation}
where $H$ is heads, $L$ context length, $d_h$ head dimension, and the implied constant depends on the weight cutoff $\delta$ and the correlation bound $\rho_{\max}$. The theorem prices a single attention layer's read channel and is agnostic to depth in the network and to non-attention pathways (MLP, residual stream); a transformer with $N$ layers may exceed this per-layer capacity through the residual stream.
\end{theorem}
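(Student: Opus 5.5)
We bound the read channel by counting distinguishable outputs, working one head at a time and then multiplying across heads. Fix a single attention layer with $H$ heads. The output of head $h$ at the current query position is $z_h=\sum_{j\le L} a_{hj} v_{hj}$, where $a_{h\cdot}$ is a softmax distribution over the $L$ positions and $v_{hj}\in\mathbb{R}^{d_h}$ are post-LayerNorm value vectors. The layer's read output is the tuple $(z_1,\dots,z_H)$, followed by a fixed linear map $W_O$. This map cannot increase the number of distinguishable outcomes, so
\[
|\mathcal{S}_{\text{read}}^{(1)}|\le \prod_h N_h ,
\]
where $N_h$ is the number of resolvable outputs of head $h$. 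Taking logarithms gives a sum over heads, which produces the factor $H$. It therefore remains to show $\log_2 N_h = O(\log_2(L)\, d_h^{1/2})$.

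\textbf{Addressing term.} We truncate each attention distribution at the weight cutoff $\delta$. Only positions with $a_{hj}\ge\delta$ contribute to the resolvable state, and there are at most $1/\delta$ of them. Choosing the support costs at most $\binom{L}{1/\delta}$ choices, i.e.\ $O(\delta^{-1}\log_2 L)$ bits. The residual mass from the sub-cutoff positions is then bounded in norm and absorbed into the resolution scale below. Following the remark preceding the theorem, we deliberately use the full $\log_2 L$ addressing cost per selected position rather than invoking Assumption~\ref{ass:window}. Up to the $\delta$-dependent constant, the addressing contributes a factor $\log_2 L$.

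\textbf{Content term.} This is the step where Assumption~\ref{ass:decorr} does its work. Conditional on the selected support, we bound the number of $\eta$-distinguishable vectors that a convex combination of $O(1/\delta)$ near-orthogonal unit-norm values can take. The plan is to write the Gram matrix of the selected values as $G=I+E$, with $\|E\|_{\mathrm{op}}\le (1/\delta)\rho_{\max}$ by Gershgorin, so that $G$ is well conditioned once $\rho_{\max}\ll\delta$. Consequently $z_h$ has norm $\Theta(1)$ and lies in an effectively low-dimensional region.

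The target exponent $d_h^{1/2}$ (rather than $d_h$) must come from a concentration argument. After LayerNorm, each coordinate of a near-orthogonal combination has typical magnitude $d_h^{-1/2}$. A readout that distinguishes outputs robustly against $\rho_{\max}$-sized cross-talk can therefore separate only about $d_h^{1/2}$ independent sign- or level-patterns. I would formalize this as follows. The cross-talk noise on any direction has standard deviation of order $\rho_{\max}\sqrt{d_h}$ relative to the signal scale. Hence at most $O(\sqrt{d_h})$ orthogonal directions retain SNR above a fixed threshold, and each of these carries $O(1)$ bits. A covering-number bound over that subspace then yields $O(\sqrt{d_h}\log(1/\eta))$ bits. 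The implied constant depends on $\rho_{\max}$ and $\eta$.

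\textbf{Combination and main obstacle.} Combining the addressing and content terms multiplicatively (the addressing choice indexes which content combination is realized) gives
\[
\log_2 N_h = O(\log_2(L)\, d_h^{1/2}),
\]
and summing over the $H$ heads yields the stated bound. The main obstacle is justifying $d_h^{1/2}$ rather than the naive $d_h$ from a plain volume argument. If the concentration/SNR argument cannot be made rigorous, the fallback is to state explicitly as a modeling step that resolvable directions scale as $\sqrt{d_h}$ under Assumption~\ref{ass:decorr}. This is consistent with the theorem being labeled a ``scaling form.''
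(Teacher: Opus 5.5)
\textbf{The gap is the content term}, exactly where you expected trouble, and your concentration argument does not close it.

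Your SNR step fails. A cross-talk bound of relative size $\rho_{\max}\sqrt{d_h}$ on \emph{every} direction gives every direction the same SNR. That yields a threshold (all $d_h$ directions usable, or none), never a count of $O(\sqrt{d_h})$.

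More fundamentally, Assumption~\ref{ass:decorr} cannot cap effective rank below $d_h$. Exactly orthogonal values satisfy it with $\rho_{\max}=0$, and pairwise correlations at most $\rho_{\max}$ allow $e^{\Omega(\rho_{\max}^2 d_h)}$ unit vectors. If the tracked state lives in the values, as the state-tracking picture suggests, one sharply attended position can return any unit vector. A head's outputs then fill a $d_h$-dimensional ball, so $\log_2 N_h=\Theta(d_h\log(1/\eta))$, which is not $O(\log_2(L)\,d_h^{1/2})$ as $d_h\to\infty$ at fixed $L$. No argument from the stated hypothesis gives the exponent $1/2$.

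The Gershgorin step does not help. It needs $\rho_{\max}\ll\delta$, which is not assumed and fails at the paper's own values ($\delta\approx 0.01$, measured $|\rho_{ij}|\approx 0.08$). Also, a lower bound on the Gram spectrum bears on distinguishability, not on an upper bound for counts.

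The paper does not derive $d_h^{1/2}$ either. Lemma~\ref{lem:value_info} \emph{posits} an effective value rank $\Theta(d_h^{1/2})$, explicitly as a modeling assumption rather than a consequence of decorrelation. It then lets each such channel address one of $L$ positions, which is where the multiplicative $\log_2(L)\cdot d_h^{1/2}$ comes from, and sums over heads via Lemma~\ref{lem:multihead}. So your fallback is the paper's actual route, and the theorem stands only modulo that ansatz.

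\textbf{Further remarks.}

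Your bookkeeping is additive, not multiplicative. If the addressing choice indexes the content, then $N_h\le(\#\text{supports})\cdot(\#\text{contents})$, so $\log_2 N_h\le O(\delta^{-1}\log_2 L)+(\text{content bits})$. This is harmless, indeed stronger, for an $O(\cdot)$ upper bound.

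The residual step needs an argument, since sub-cutoff mass need not be small (uniform attention puts all of it there). Decorrelation supplies one: $\|r\|^2\le\sum_j a_j^2+\rho_{\max}(\sum_j a_j)^2\le\delta+\rho_{\max}$. So $r$ is absorbed once $\eta\gtrsim\sqrt{\delta+\rho_{\max}}$, and this is where the $(\delta,\rho_{\max})$-dependence of the constant genuinely enters.

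With that fix, suppose the context values are fixed and only the query varies, which is the reading suggested by $I(\mathbf{q};\mathbf{v}_{\text{retrieved}})$. Then your decomposition finishes with no $\sqrt{d_h}$ argument at all. Outputs lie within $\eta/2$ of the image of a simplex of dimension at most $1/\delta$, so $\log_2 N_h=O(\delta^{-1}(\log_2 L+\log_2(1/\eta)))$, which implies the stated bound. Your ``main obstacle'' came from trying to match $d_h^{1/2}$ rather than merely not exceed it.

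Finally, counting output tuples, $|\mathcal{S}_{\text{read}}^{(1)}|\le\prod_h N_h$, needs no independence across heads. Lemma~\ref{lem:multihead} instead sums per-head information under an approximate-independence assumption and concedes that synergistic heads could break it. That part of your route is a genuine improvement over the paper.
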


\paragraph{Per-Step Bandwidth Interpretation.} The total-capacity bound of Theorem~\ref{thm:bottleneck} is large: for GPT-4o estimates,\footnote{Architectural parameters for proprietary models such as GPT-4o and Claude are not officially disclosed; the values in this illustrative example are widely cited community estimates (the values $H=96$, $d_h=128$ correspond to GPT-3 175B's published architecture; they are used here for order-of-magnitude intuition only). All quantitative validation of the $\sqrt{d_{\mathrm{model}}}$ scaling (Section~\ref{sec:ablations}) uses open-weight models with published $H$ and $d_h$.} it yields $\approx 2^{18{,}425}$ distinguishable states, and does not bind in the tasks we study. The operationally relevant constraint is instead the \emph{per-step} information bandwidth: we model each reasoning step as requiring full re-resolution of the state, routing $\log_2 |\mathcal{S}|$ bits (e.g., $\log_2 16! \approx 44.3$ bits for a 16-element permutation) through attention; this is an upper bound on what attention must carry, since the residual stream may propagate state without re-reading. As the chain lengthens, per-step bandwidth degrades due to attention entropy growth ($\beta = 0.023$ bits/step, Section~\ref{sec:validation}), producing the context-dependent error of Definition~\ref{def:context_error}. The formal link between the total-capacity bound and the per-step error model is mechanistic (both arise from softmax concentration) rather than deductive; establishing this bridge rigorously is an open problem.

\begin{proposition}[Lower-Bound Construction]
\label{thm:lower}
There exist state-tracking tasks requiring states satisfying $\log_2|\mathcal{S}| = \Omega(H \cdot \log(\sqrt{L}) \cdot d_h^{1/2})$ that transformers can solve. Its $d_h^{1/2}$ exponent inherits the per-head effective-rank ansatz of Lemma~\ref{lem:value_info}, so we present it as an achievability result for the functional form of Theorem~\ref{thm:bottleneck} rather than a proof of tightness in $d_h$. The addressing term uses $\log(\sqrt{L})$ rather than $\log(L)$ to remain compatible with Assumption~\ref{ass:window}'s $O(\sqrt{L})$ effective window; the upper and lower bounds are separated by exactly a factor of two in the addressing term ($\log_2(L) / \log_2(\sqrt{L}) = 2$).
\end{proposition}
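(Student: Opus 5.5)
The construction is explicit: a task whose state decomposes into $H$ independent components, one per head. Each component is read by a single head as a block of $\sqrt{L}$ addressable slots, and each slot carries an $\Omega(d_h^{1/2})$-bit payload. Multiplying these out gives $\log_2|\mathcal{S}| = H\cdot\log_2(\sqrt{L})\cdot\Omega(d_h^{1/2})$, and the remaining work is to show that a concrete single-layer transformer recovers the state exactly.

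\textbf{Task.} Partition the context into $H$ segments, each with at most $\sqrt{L}$ ``live'' positions, so that Assumption~\ref{ass:window} is respected by design. A state is a tuple $(a_1,\dots,a_H)$. Each $a_h$ is the index of one of the $\sqrt{L}$ slots in segment $h$, together with a codeword drawn from a codebook $\mathcal{C}\subset\mathbb{R}^{d_h}$ with $\log_2|\mathcal{C}| = \Omega(d_h^{1/2})$. Concretely, $\mathcal{C}$ consists of vectors with $k=\Theta(\sqrt{d_h})$ nonzero $\pm1$ entries on fixed coordinate blocks. This is the effective-rank ansatz of Lemma~\ref{lem:value_info}, which I will invoke rather than re-derive. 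Pairwise inner products in $\mathcal{C}$ are bounded, so the decorrelation condition of Assumption~\ref{ass:decorr} holds by construction. The state space therefore has $|\mathcal{S}| = \bigl(\sqrt{L}\,|\mathcal{C}|\bigr)^H$ elements, which gives the claimed $\Omega$ up to the additive $\log$ terms absorbed in the product form.

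\textbf{Solvability.} I will exhibit explicit weights. Positions carry orthogonal (or nearly orthogonal, via random $\pm1$) positional keys. Head $h$ uses a query that selects the designated slot in segment $h$ with a logit margin of $\Theta(\log L)$. Softmax then places mass $\ge 1-\delta$ on that slot, and the leaked mass is at most $\sqrt{L}\,e^{-\Theta(\log L)}$. The value projection passes the codeword through unchanged. A read-out MLP decodes $\mathcal{C}$ by nearest codeword; this succeeds because the leaked mass times the maximum codeword norm falls below half the minimum codeword distance. Concatenating the $H$ head outputs recovers the full state tuple exactly, so the transformer solves the task.

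\textbf{Main obstacle.} The hardest step is reconciling the payload with the $d_h^{1/2}$ exponent without the construction trivially achieving $\Omega(d_h)$ bits. With exact arithmetic, a single $d_h$-dimensional value already carries unboundedly many bits, so the result only has content under the bounded-precision and effective-rank conventions of Lemma~\ref{lem:value_info}. I will state these conventions explicitly and verify that the sparse codebook is the largest one compatible with them. In keeping with the statement's caveat, the result should be read as achievability of the functional form rather than tightness in $d_h$. Finally, comparing the addressing term $\log_2\sqrt{L}$ with the $\log_2 L$ of Theorem~\ref{thm:bottleneck} gives the factor-of-two gap noted in the statement.
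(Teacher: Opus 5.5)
\paragraph{Gap: your count gives a sum, not a product.} The argument breaks at the counting step. For head $h$ your state is a pair (slot index, codeword), so $|\mathcal{S}| = (\sqrt{L}\,|\mathcal{C}|)^H$ and therefore
\[
\log_2|\mathcal{S}| = H\bigl(\tfrac12\log_2 L + \log_2|\mathcal{C}|\bigr) = H\bigl(\tfrac12\log_2 L + \Theta(d_h^{1/2})\bigr).
\]
This adds the addressing and payload contributions; it does not multiply them. Write $a=\tfrac12\log_2 L$ and $b=\Theta(d_h^{1/2})$. The ratio of what you obtain to the target is $(a+b)/(ab) = 1/a + 1/b$, which tends to $0$ as $L$ and $d_h$ grow. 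So the claimed $\Omega(H\log(\sqrt{L})\,d_h^{1/2})$ does not follow.

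Your phrase ``additive $\log$ terms absorbed in the product form'' is where this goes wrong. Since $\log_2(\sqrt{L}\,|\mathcal{C}|) = \log_2\sqrt{L} + \log_2|\mathcal{C}|$, a single choice among $\sqrt{L}$ slots per head pays the $\log_2\sqrt{L}$ bits once per head. The statement needs them paid once per effective value direction. Your Solvability section is sound as a retrieval construction, but it certifies only a sum-type state space.

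\paragraph{How the paper gets the product, and how to repair yours.} The paper's sketch obtains the product by positing $H\cdot d_h^{1/2}$ independent directions (the effective-rank ansatz of Lemma~\ref{lem:value_info}), each carrying its own $\log_2\sqrt{L}$ bits. This gives $|\mathcal{S}| = (\sqrt{L})^{H d_h^{1/2}}$.

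You can repair your construction the same way. Make the payload in each slot an element of $\{1,\dots,\sqrt{L}\}^m$ with $m=\Theta(d_h^{1/2})$. That is, each effective value coordinate carries its own index, with $\tfrac12\log_2 L$ bits of precision (the usual $O(\log L)$-precision convention). Then $\log_2|\mathcal{C}| = \Theta(d_h^{1/2}\log_2\sqrt{L})$, and the slot index becomes a lower-order term.

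Your retrieval argument survives this change, but the decoding condition tightens in two ways:
\begin{itemize}
\item The leaked softmax mass must be summed over all $L$ positions, not only the $\sqrt{L}$ live ones, since softmax does not mask the others.
\item That mass, times the coordinate range, must fall below half the quantization step, i.e.\ a relative error of $\Theta(1/\sqrt{L})$.
\end{itemize}
A logit margin of $c\ln L$ with a sufficiently large constant $c$ provides this.

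\paragraph{A secondary error.} Drop the claim that Assumption~\ref{ass:decorr} holds by construction. Two $\pm1$ patterns on a common support of size $k$ that differ in one sign have correlation $(k-2)/k\to 1$. For an achievability result you need neither that assumption (it is a hypothesis of the upper bound) nor the maximality of your codebook.
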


Proofs and, for Proposition~\ref{thm:lower}, a construction sketch, appear in Appendix~\ref{app:proofs}. The lower bound uses an explicit construction via sparse functions, inspired by the representational capacity analysis of \citet{sanford2023representational}. The construction exhibits $H\cdot\sqrt{d_h}$ directions to match the upper bound's ansatz; more directions are available, so this establishes the functional form rather than tightness.

\subsection{Deterministic Horizon}

\begin{theorem}[Deterministic Horizon: Necessary Condition]
\label{thm:horizon}
Let $\alpha$ be the target success probability. Theorem~\ref{thm:decoherence} bounds the probability of an \emph{error-free trace}: $P(\text{all steps correct}) \leq \exp(-d\epsilon_0 - \gamma d^2/(2\Leff))$. Setting this upper bound equal to $\alpha$ (continuous approximation $d(d+1) \approx d^2$) yields a threshold $d^*$ beyond which error-free reasoning at probability $\geq \alpha$ is precluded. We operationalize $d^*$ empirically as the depth at which task-level accuracy (correct final state) crosses $\alpha$; these are distinct quantities, and we do not claim the bound applies directly to task-level accuracy:
\begin{equation}
d^* = \frac{-\epsilon_0 \Leff + \sqrt{\epsilon_0^2 \Leff^2 + 2\gamma \Leff \ln(1/\alpha)}}{\gamma}
\end{equation}
For GPT-4o ($\epsilon_0 = 0.02$, $\gamma = 0.15$, $\Leff = 150$, $\alpha = 0.5$) this gives $d^* = 22.3$; across the primary model suite $d^* \in [19, 31]$. Because $d^*$ is an explicit function of~$\alpha$ via Equation~\eqref{eq:epsilon}, it drops sharply at higher reliability targets (e.g., $d^* \approx 2.4$ at $\alpha = 0.95$ for GPT-4o); see the Impact Statement for deployment implications. Open-weight 7--8B models sit at the lower edge of this range ($d^* = 19$--$20$), rising to $d^* = 28$ at the 70--72B scale, in line with the $\sqrt{d_{\mathrm{model}}}$ capacity scaling (Table~\ref{tab:arch_ablation}).
\end{theorem}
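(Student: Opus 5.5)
The plan is to derive the threshold directly from Theorem~\ref{thm:decoherence}, treating the statement as a necessary condition on error-free traces. First I would recall that, under conditional-on-correct-history error rates $\epsilon(i)$, the chain rule gives $P(\text{all } d \text{ steps correct}) = \prod_{i=1}^{d}(1-\epsilon(i))$. Applying $1-x \le e^{-x}$ termwise and summing $\sum_{i=1}^d \epsilon(i) = d\epsilon_0 + \gamma d(d+1)/(2\Leff)$ reproduces the bound of Theorem~\ref{thm:decoherence}. Next I would pass to the continuous approximation $d(d+1)\approx d^2$. This yields a function $g(d) = \exp(-d\epsilon_0 - \gamma d^2/(2\Leff))$, which is strictly decreasing on $d\ge 0$ whenever $\epsilon_0,\gamma,\Leff>0$, and which satisfies $g(0)=1$.

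Monotonicity is the key structural step. Since $g$ decreases strictly from $1$ to $0$, for any $\alpha\in(0,1)$ there is a unique $d^*>0$ with $g(d^*)=\alpha$. For every $d>d^*$ we then have $P(\text{all steps correct}) \le g(d) < \alpha$, so error-free reasoning with probability $\ge\alpha$ is precluded. This is exactly the necessary-condition content of the statement.

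To solve for $d^*$, I would take logarithms of $g(d^*)=\alpha$ and obtain the quadratic $\frac{\gamma}{2\Leff}d^2 + \epsilon_0 d - \ln(1/\alpha) = 0$. Its constant term is negative, so the roots have opposite signs. Multiplying through by $2\Leff/\gamma$ and keeping the positive root gives the displayed formula for $d^*$. I would then check the numerics: with $\epsilon_0=0.02$, $\gamma=0.15$, $\Leff=150$, $\alpha=0.5$, we have $\epsilon_0\Leff=3$ and $2\gamma\Leff\ln 2\approx 31.19$, so $d^*=(-3+\sqrt{40.19})/0.15\approx 22.3$. Repeating at $\alpha=0.95$ gives the stated $\approx 2.4$. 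From the formula, $\partial d^*/\partial\alpha<0$, which supports the ``drops sharply'' remark.

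The main obstacle is not the algebra but keeping track of what is actually proved. The argument only concerns the bound on error-free traces, and it uses the $d^2$ approximation. The exact $d(d+1)$ version shifts the root by less than one step, and I would note this rather than hide it. The empirical interval $[19,31]$ and the open-weight values cannot be derived; they come from fitting $\epsilon_0$ and $\gamma/\Leff$ per model. For those, I would state explicitly that they are operationalizations of task-level accuracy, not consequences of the theorem, consistent with Remark~\ref{rem:error_prop}.
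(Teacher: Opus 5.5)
Your proposal is correct and takes the same route as the paper: set the continuous-approximation bound from Theorem~\ref{thm:decoherence} equal to $\alpha$, take logarithms, keep the positive root of the quadratic (the paper's Lemma~\ref{lem:quadratic}), and evaluate at the GPT-4o parameters to get $d^*\approx 22.3$. Your monotonicity and intermediate-value argument for why $d>d^*$ forces the bound below $\alpha$, and your check that the exact $d(d+1)$ root shifts by less than one step, tighten points the paper only asserts; one small correction is that the paper's suite values $d^*\in[19,31]$ are read off the measured 50\%-accuracy crossings, not computed from per-model fits of $\epsilon_0$ and $\gamma/\Leff$.
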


\paragraph{Empirical Architecture Scaling.}
We observe empirically that $d^*$ scales approximately as $\sqrt{d_{\mathrm{model}}}$ across our four open-weight models (Section~\ref{sec:ablations}). All four share $d_h = 128$, so the contribution of $d_h$ cannot be isolated in the current data. We write the empirical fit as $d^* \approx 0.314\sqrt{d_{\mathrm{model}}}$ with the caveat that since $d_h \cdot H = d_{\mathrm{model}}$ in standard multi-head attention, the fit cannot separate head count from hidden width; layer count and parameter count are additional confounds. We stress that this is an empirical trend fitted on four data points, not a derived consequence of Theorem~\ref{thm:bottleneck}; a pure parameter-count law $d^* \propto N^{0.16}$ fits the same data at least as well (see the scaling caveats in Appendix~\ref{app:sensitivity_ext}). The raw context window $L$ does not enter this relation, and $d^*$ is empirically insensitive to context truncation far above the reasoning-trace length (Appendix~\ref{app:context_scaling}).

\paragraph{Empirical Fine-Tuning Ceiling.}
Our experiments (Section~\ref{sec:finetune}) show that fine-tuning on depth-$d$ traces with $d > d^*$ yields diminishing improvement that decays approximately as $O(d^*/d)$, supporting an architectural ceiling imposed by per-step bandwidth limits. This empirical pattern provides the key discrimination from the sub-task-difficulty account: if that mechanism were the sole cause of accuracy degradation, fine-tuning on optimal-length traces should recover substantially more than the observed $<$3~pp.

\section{Experiments}
\label{sec:experiments}

\subsection{Experimental Setup}

\paragraph{Tasks.} We evaluate on eight task domains spanning synthetic and real-world benchmarks, designed to require deterministic state tracking at varying depths:

\textbf{Synthetic (controlled complexity):}
\begin{itemize}[leftmargin=*,itemsep=0pt]
\item \textbf{PermutationProbe}: $n$-element permutation puzzles via adjacent transpositions; BFS-optimal depths 1--60; $n \in \{8, 12, 16\}$
\item \textbf{FSA-Sim}: $k$-state deterministic automaton simulation; $k \in \{4, 8, 16\}$; sequence lengths 10--100
\item \textbf{ArithChain}: Multi-step modular arithmetic with carry propagation
\item \textbf{CircuitTrace}: Boolean circuit evaluation through layered gates
\item \textbf{CodeProbe}: Variable tracking through Python execution traces
\end{itemize}

\textbf{Real-world (production-relevant):}
\begin{itemize}[leftmargin=*,itemsep=0pt]
\item \textbf{SWE-Bench-State}: 500 instances from SWE-Bench \citep{jimenez2024swebench} requiring multi-file state tracking for bug localization
\item \textbf{WebArena-Nav}: 500 instances from WebArena \citep{zhou2024webarena} involving multi-step navigation with session state
\item \textbf{SQL-Multi}: 500 instances requiring 3+ table joins with schema tracking, derived from Spider \citep{yu2018spider}
\end{itemize}
Complete specifications for all eight task families, the instance-generation procedure, and the licensing of the derived subsets appear in Appendix~\ref{app:task_families}.

\paragraph{Models.} Twelve models across six organizations, covering general-purpose, reasoning-specialized, and open-weight categories (Table~\ref{tab:full_results} uses a finer grouping by access method):
\begin{itemize}[leftmargin=*,itemsep=0pt]
\item \textbf{General}: GPT-4o, Claude Sonnet 4.5, Claude Opus 4.5, Gemini-1.5-Pro
\item \textbf{Reasoning}: o3, o3-mini, DeepSeek-R1, Gemini-2.0-Flash-Thinking. DeepSeek-R1 ships open weights under the MIT License; we accessed it via API and classify it separately from models with undisclosed architectures. Its Multi-head Latent Attention (MLA) architecture differs from standard multi-head attention, making its head count non-comparable to the other open-weight models; we therefore exclude it from the $d^* \propto \sqrt{d_{\mathrm{model}}}$ fit (Table~\ref{tab:arch_ablation}) and report its $d^*$ as an empirical measurement only.
\item \textbf{Open-weight}: Llama-3.1-8B, Llama-3.3-70B, Qwen-2.5-7B, Qwen-2.5-72B
\end{itemize}
Open-weight models enable direct attention entropy extraction; API models provide commercial baseline. Reasoning-specialized models were evaluated at \texttt{medium} effort; the resulting $d^*$ values are lower bounds for those systems. We note that o3-mini achieves the highest measured $d^*$ (31) despite being smaller than o3 ($d^* = 29$) and Claude Opus 4.5 ($d^* = 27$); its elevated $d^*$ likely reflects inference-time reasoning strategies (implicit multi-step search) that extend the effective horizon beyond what architecture alone predicts. The $d^* \propto \sqrt{d_{\mathrm{model}}}$ scaling is therefore restricted to the four open-weight models with known architectures and standard decoding (Table~\ref{tab:arch_ablation}).

\paragraph{Conditions.} Five conditions: C1 (unconstrained CoT), C2 (oracle optimal depth), C3 (BFS solver access), C4 (length encouragement), C5 (fine-tuned on optimal traces). Exact prompt templates for all conditions appear in Appendix~\ref{app:prompts}.

\paragraph{Replication and Statistics.} Each model-task-condition combination was evaluated with \textbf{3 repeated runs} using seeds $\{42, 2024, 2025\}$; reported values are means $\pm$ standard deviation across runs (Appendix~\ref{app:reproduce} discusses the source of variance at temperature~0). We employ Holm-Bonferroni correction for multiple comparisons and TOST equivalence testing ($\Delta=5\%$). With only $n = 3$ run-level observations, the percentile bootstrap is degenerate (10 distinct multisets, so the 2.5th percentile is a.s.\ the sample minimum); we therefore report standard deviations rather than bootstrap CIs. Statistical model comparisons (Section~\ref{sec:ssj_validation}, Section~\ref{sec:analysis}) use likelihood-ratio tests on the per-instance data ($n = 5{,}000$). PermutationProbe uses 5{,}000 instances per model-condition cell and the remaining seven task families use 500; not every model-task-condition cell was run, and the populated cells are those reported in Appendix~\ref{app:results_ext} (Tables~\ref{tab:full_results}--\ref{tab:results_realworld}). Locally-run open-weight evaluations (4 models) incurred no API cost. Median per-model API runtime: 13.5s (range 3.1--21.4s; see Appendix~\ref{app:runtime}).

\subsection{Main Results}

\begin{table}[t]
	\caption{Main results on PermutationProbe (depth${}>{}$20 subset, 8 of 12 models). Open-weight models (top) enable full architectural analysis; remaining models serve as consistency checks with estimated architecture parameters. Tool delegation (C3) achieves 76--94\% while neural CoT (C1) plateaus at 17--42\% (ranges across all 12 models; see Table~\ref{tab:full_results}). Mean $\pm$ std over 3 runs. C5 (fine-tuning) results appear in Section~\ref{sec:finetune}.}
	\label{tab:main}
	\vskip 0.1in
	\centering
	\footnotesize
	\setlength{\tabcolsep}{2.5pt}
		\begin{tabular}{@{}lccccc@{}}
			\toprule
			\textbf{Model} & \textbf{C1} & \textbf{C2} & \textbf{C3} & \textbf{C4} & \textbf{$d^*$} \\
			\midrule
			\multicolumn{6}{l}{\textit{Open-Weight (local)}} \\
			Llama-3.1-8B & 18.4$\pm$1.5 & 22.1$\pm$1.6 & 78.3$\pm$1.7 & 19.0$\pm$1.5 & 20 \\
			Llama-3.3-70B & 24.6$\pm$1.7 & 29.8$\pm$1.8 & 85.2$\pm$1.4 & 25.3$\pm$1.6 & 28 \\
			Qwen-2.5-7B & 17.2$\pm$1.4 & 20.8$\pm$1.5 & 76.1$\pm$1.8 & 17.8$\pm$1.4 & 19 \\
			Qwen-2.5-72B & 27.8$\pm$1.8 & 33.4$\pm$1.9 & 87.9$\pm$1.2 & 28.5$\pm$1.7 & 28 \\
			\midrule
			\multicolumn{6}{l}{\textit{Open-Weight (API-accessed)}} \\
			DeepSeek-R1 & 39.7$\pm$2.1 & 45.9$\pm$2.2 & 93.1$\pm$1.1 & 40.4$\pm$2.0 & 29 \\
			\midrule
			\multicolumn{6}{l}{\textit{Closed-Source}} \\
			GPT-4o & 28.3$\pm$1.8 & 34.2$\pm$1.9 & 89.7$\pm$1.2 & 29.1$\pm$1.7 & 22 \\
			Claude Opus 4.5 & 34.8$\pm$2.0 & 41.3$\pm$2.1 & 93.6$\pm$0.9 & 35.6$\pm$1.9 & 27 \\
			o3-mini & 42.1$\pm$2.2 & 48.3$\pm$2.3 & 94.2$\pm$1.3 & 43.1$\pm$2.1 & 31 \\
			\bottomrule
		\end{tabular}
	\vskip -0.1in
\end{table}

\paragraph{Finding 1: Tool delegation dominates.} C3 achieves 76--94\% versus 17--42\% for C1 across all twelve models on PermutationProbe (Table~\ref{tab:main} shows a representative subset; the complete twelve-model breakdown appears in Table~\ref{tab:full_results}). On the four locally-run open-weight models, C3 reaches 76--88\% versus 17--28\% for C1; API-accessed and closed-source models show the same qualitative pattern at higher absolute accuracy. The improvement is large and consistent across all model families and six organizations (run-level Cohen's~$d \gg 1$ for every model). Per-task breakdowns appear in Tables~\ref{tab:results_synthetic} and~\ref{tab:results_realworld}.

\paragraph{C3 residual errors.} Although C3 uses an exact BFS solver, residual failure rates of 6--24\% arise from errors at the model--tool interface (formatting, parsing, transcription). Smaller models fail more often (Qwen-2.5-7B: 76\%) than larger ones (o3-mini: 94\%), consistent with tool-use proficiency scaling with capability.

\paragraph{C2 as upper bound on length interventions.} C2 supplies the correct depth and yields +3.6 to +6.9~pp over C1 (mean 5.7~pp; Table~\ref{tab:full_results}), but remains far below the C1--C3 gap ($>$50~pp). C2 accuracy still collapses with depth (91.5\% at depth $\leq$10 versus 8.0\% at depth $>$50; Table~\ref{tab:results_by_depth}). C2's mean uplift itself meets the $\geq$5~pp preference-based threshold in Table~\ref{tab:predictions}; the fine-tuning result (0.9~pp) provides the sharper discrimination.

\paragraph{Finding 2: Preference manipulation is ineffective.} C4 improves by only +0.6--1.2~pp over C1 across all twelve models ($p > 0.35$, TOST consistent with equivalence within $\Delta = 5\%$; Appendix~\ref{app:statistics}).

\paragraph{Finding 3: Real-world tasks show consistent patterns.} SWE-Bench-State, WebArena-Nav, and SQL-Multi exhibit the same decoherence phenomenon with $d^* \in [19, 26]$ for the two models shown in Table~\ref{tab:realworld}, indicating generalization beyond synthetic benchmarks. The pattern holds across all four models evaluated on real-world tasks (Table~\ref{tab:results_realworld}).

\begin{table}[t]
\caption{Real-world task evaluation on two representative closed-source models, shown as \textbf{consistency checks only}; architecture parameters (head count, head dimension) for these models are community estimates, not verified. No theoretical claim or validation depends on closed-source models; all scaling analysis uses open-weight models (Table~\ref{tab:arch_ablation}). Deterministic Horizon consistent across domains. Mean $\pm$ std over 3 runs. Complete four-model results appear in Table~\ref{tab:results_realworld}.}
\label{tab:realworld}
\vskip 0.1in
\centering
\footnotesize
\begin{tabular}{@{}llccc@{}}
\toprule
\textbf{Task} & \textbf{Model} & \textbf{C1} & \textbf{C3} & \textbf{$d^*$} \\
\midrule
SWE-Bench & GPT-4o & 24.1$\pm$2.3 & 86.4$\pm$1.8 & 19 \\
SWE-Bench & Claude Opus 4.5 & 29.6$\pm$2.5 & 91.2$\pm$1.4 & 24 \\
WebArena & GPT-4o & 21.3$\pm$2.4 & 84.2$\pm$1.9 & 19 \\
WebArena & Claude Opus 4.5 & 26.8$\pm$2.6 & 89.7$\pm$1.5 & 23 \\
SQL-Multi & GPT-4o & 31.4$\pm$2.1 & 88.9$\pm$1.6 & 21 \\
SQL-Multi & Claude Opus 4.5 & 36.2$\pm$2.3 & 93.1$\pm$1.2 & 26 \\
\bottomrule
\end{tabular}
\vskip -0.1in
\end{table}

\subsection{Fine-Tuning Experiment}
\label{sec:finetune}

This experiment provides additional evidence discriminating our account from alternative explanations. If the inverted-U curve were primarily caused by per-step sub-task difficulty rather than attention-capacity limits, then training on optimal-length traces should substantially recover performance.

\paragraph{Setup.} We fine-tune Llama-3.1-8B on 5,000 optimal-length CoT traces (depth = BFS-optimal), disjoint from the evaluation set. Training: 3 epochs, lr=$2\times10^{-5}$, cosine decay, 500 held-out validation instances. Full training configuration and per-depth results appear in Appendix~\ref{app:finetune}.

\paragraph{Results.} C5 improves by only +0.9~pp over C1 (Table~\ref{tab:full_results}; mean over depth${}>{}$20), far below the 5~pp threshold that would support a preference-based account (Table~\ref{tab:predictions}). The benefit vanishes beyond $d^*\approx20$ regardless of training data distribution (uniform, skewed-easy, skewed-hard all yield $\Delta < 2\%$), supporting an \textbf{architectural ceiling}.

\subsection{SSJ Validation}
\label{sec:ssj_validation}

\begin{table}[t]
	\caption{SSJ with precision/recall at increasing depths. Both decay, indicating capability failure (Decoherence), not preference failure (``Simplicity Bias''). Mean $\pm$ std over 3 runs. Note: SSJ is algebraically determined by precision and recall (Definition~\ref{def:ssj}); the three rows are not independent measurements.}
	\label{tab:ssj}
	\vskip 0.1in
	\centering
	\scriptsize
	\begin{tabular}{@{}lcccccc@{}}
		\toprule
		\textbf{Depth} & 5 & 10 & 20 & 30 & 40 & 50 \\
		\midrule
		SSJ & .83$\pm$.02 & .72$\pm$.03 & .45$\pm$.03 & .23$\pm$.02 & .11$\pm$.01 & .04$\pm$.01 \\
		Prec. & .93$\pm$.02 & .87$\pm$.03 & .65$\pm$.04 & .41$\pm$.04 & .24$\pm$.03 & .11$\pm$.02 \\
		Recall & .89$\pm$.02 & .81$\pm$.03 & .59$\pm$.04 & .35$\pm$.04 & .18$\pm$.03 & .07$\pm$.02 \\
		\bottomrule
	\end{tabular}
	\vskip -0.1in
\end{table}

Both precision and recall decay with depth (Table~\ref{tab:ssj}), consistent with capability failure. Precision exceeds recall at every depth (P/R rises from 1.04 to 1.57), but as noted above, Algorithm~\ref{alg:ssj}'s incremental construction couples the two metrics; we present SSJ as a descriptive drift measure rather than a test between failure modes.

Fitting $\ssj(d) = ae^{-bd-cd^2}$ yields $R^2 = 0.99$ (Figure~\ref{fig:ssj_decay}), consistent with the functional form of Theorem~\ref{thm:decoherence}. The quadratic term $cd^2$ is statistically significant ($p < 0.01$ on per-instance data), supporting context-dependent rather than constant error accumulation.

\subsection{Accuracy Decay Visualization}

Figure~\ref{fig:decay} visualizes the accuracy decay. A likelihood-ratio test on per-instance binary outcomes ($n = 5{,}000$, GPT-4o C1) rejects the constant-error null in favour of the quadratic depth term ($p < 0.01$).

\begin{figure}[t]
	\centering
	\definecolor{dhblue}{RGB}{0,114,178}
	\definecolor{dhvermilion}{RGB}{213,94,0}
	\definecolor{dhgreen}{RGB}{0,158,115}
	\begin{tikzpicture}
		\begin{axis}[
			width=0.98\columnwidth,
			height=3.8cm,
			xlabel={Reasoning depth $d$},
			ylabel={Accuracy (\%)},
			xmin=0, xmax=60, ymin=0, ymax=100,
			xtick={0,10,20,30,40,50,60},
			ytick={0,20,40,60,80,100},
			tick align=outside,
			tick label style={font=\footnotesize},
			label style={font=\small},
			axis line style={gray!60},
			grid=major,
			grid style={line width=0.3pt, draw=gray!25},
			every axis plot/.append style={line width=1.1pt},
			legend style={
				at={(0.5,-0.45)}, anchor=north,
				draw=none, fill=none,
				legend columns=3,
				column sep=1.2ex,
				font=\footnotesize,
			},
			clip=false,
			]
			\draw[gray!55, dotted, line width=0.9pt] (axis cs:0,50) -- (axis cs:60,50);
			\draw[gray!75, dashed, line width=0.9pt] (axis cs:22,0) -- (axis cs:22,100);
			\addplot[dhblue, smooth, dashed, domain=0:60, samples=120]
			{100*exp(-0.02*x - 0.00045*x*x)};
			\addlegendentry{Fitted model}
			\addplot[only marks, mark=*, mark size=2pt, dhvermilion,
			mark options={fill=dhvermilion},
			error bars/.cd, y dir=both, y explicit,
			error bar style={line width=0.7pt}]
			coordinates {
				(5,88)  +- (0,3)
				(10,78) +- (0,4)
				(15,64) +- (0,4)
				(20,53) +- (0,5)
				(25,43) +- (0,4)
				(30,34) +- (0,3)
				(40,19) +- (0,2)
				(50,10) +- (0,2)
			};
			\addlegendentry{Empirical}
			\addplot[dhgreen, dashed, mark=triangle*, mark size=1.5pt, mark options={fill=dhgreen, draw=white, line width=0.3pt}] coordinates {(5,94.2) (15,93.8) (25,92.1) (35,89.4) (45,86.7) (55,82.3)};
			\addlegendentry{Tool}
			\addplot[only marks, mark=*, mark size=2.4pt,
			mark options={fill=dhblue, draw=white, line width=0.5pt},
			forget plot] coordinates {(22,50)};
			\node[anchor=south west, font=\footnotesize, inner sep=1.5pt,
			fill=white, fill opacity=0.85, text opacity=1]
			at (axis cs:22,3) {$d^{*}$};
		\end{axis}
	\end{tikzpicture}
	\caption{Accuracy versus reasoning depth on PermutationProbe. Neural CoT (\emph{Empirical}, GPT-4o) tracks the functional form of Theorem~\ref{thm:decoherence} (\emph{Fitted model}, dashed), crossing $50\%$ (dotted) at the deterministic horizon $d^{*}\!\approx\!22$; tool delegation (\emph{Tool}, C3) declines only mildly, from 94\% to 82\%, across the same range. The curve is the bound of Theorem~\ref{thm:decoherence} evaluated at parameters fitted to these data (Appendix~\ref{app:sensitivity_ext}) and is shown as a functional-form comparison, not an independently derived bound. Empirical markers are per-depth means $\pm$ std over 3 runs; Tool markers are six depth-bin means from Table~\ref{tab:results_by_depth}.}
	\label{fig:decay}
\end{figure}

\section{Analysis}
\label{sec:analysis}

\subsection{Architecture Ablations}
\label{sec:ablations}

We test the scaling predictions from the empirical scaling trend (Section~\ref{sec:theory}) through controlled comparisons within model families. This isolates architectural effects from training data and optimization differences.

\begin{table}[t]
\caption{Architecture ablation for $d^* \approx 0.314\sqrt{d_{\mathrm{model}}}$ on open-weight models. The ``Fit'' column gives the empirical fit (not a theoretical prediction); all four models share $d_h = 128$, so $d_h \cdot H = d_{\mathrm{model}}$ and the fit effectively tests $d^* \propto \sqrt{d_{\mathrm{model}}}$. Mean $\pm$ std over 3 runs.}
\label{tab:arch_ablation}
\vskip 0.1in
\centering
\footnotesize
\begin{tabular}{@{}lccccc@{}}
\toprule
\textbf{Model} & \textbf{Size} & \textbf{$H$} & \textbf{$d_h$} & \textbf{$d^*$ (obs)} & \textbf{$d^*$ (fit)} \\
\midrule
Llama-3.1-8B & 8B & 32 & 128 & 20$\pm$1.3 & 20.1 \\
Llama-3.3-70B & 70B & 64 & 128 & 28$\pm$1.5 & 28.4 \\
Qwen-2.5-7B & 7B & 28 & 128 & 19$\pm$1.2 & 18.8 \\
Qwen-2.5-72B & 72B & 64 & 128 & 28$\pm$1.4 & 28.4 \\
\bottomrule
\end{tabular}
\vskip -0.1in
\end{table}

70B-class models achieve $d^* \approx 28$ versus $d^* \approx 19$--20 for 7--8B models (Table~\ref{tab:arch_ablation}). The Llama ratio of $28/20 = 1.40$ is consistent with the empirical $\sqrt{d_{\mathrm{model}}}$ trend ($\sqrt{2} \approx 1.41$ for the Llama hidden-width ratio). Two caveats apply: (i)~all four models share $d_h = 128$, so the $d_h$ factor in the fit is untested; and (ii)~under grouped-query attention, the number of KV heads differs from the number of query heads. We use query heads $H$ throughout, as these determine the per-step attention fan-out. We note that $d_h \cdot H = d_{\mathrm{model}}$ for all four models, so the empirical fit is equivalently $d^* \approx 0.314\sqrt{d_{\mathrm{model}}}$; the functional form of Theorem~\ref{thm:bottleneck} ($H \cdot \log_2(L) \cdot d_h^{1/2}$) differs from this product and is not the source of the fit.

\paragraph{Context-Window Insensitivity.} The horizon is governed by the effective decoherence length $\Leff$, not the raw context window $L$. Truncating Llama-3.3-70B's context window from 128K down to 8K leaves $d^*$ essentially unchanged ($\approx 28$; Table~\ref{tab:truncation}). This dissociation between $d^*$ and $L$ is a direct prediction of the $\Leff$ formulation and is inconsistent with raw context capacity as the binding constraint on deterministic state tracking.

\paragraph{Within-Family Scaling.} For Llama-3 (8B $\rightarrow$ 70B), $d^*$ increases from 20 to 28 (+40\%). For Qwen-2.5 (7B $\rightarrow$ 72B), $d^*$ increases from 19 to 28 (+47\%). Both approximately follow the empirical $\sqrt{d_{\mathrm{model}}}$ scaling, indicating that larger models gain capacity through increased hidden width rather than qualitatively different representations.

\subsection{Attention Entropy Analysis}
\label{sec:validation}

\paragraph{Consistency Checks on Closed-Source Models.} All $d^*$ scaling claims use the four locally-run open-weight models with known architectures (Table~\ref{tab:arch_ablation}); closed-source results throughout are consistency checks, not architectural evidence (see limitation~(i)).

We directly measure attention patterns in five open-weight models (Llama-3.3-70B, Llama-3.1-8B, Qwen-2.5-72B, Mistral-7B, and Mixtral-8x7B; the latter two are not in the 12-model evaluation roster but were included for entropy analysis breadth; Qwen-2.5-7B, which anchors the architecture fit, was not included in the entropy extraction) to test the mechanistic hypothesis underlying our theoretical framework.

\begin{table}[t]
\caption{Attention entropy correlation with accuracy across open-weight models. Consistent negative correlation across five architectures.}
\label{tab:attention}
\vskip 0.1in
\centering
\footnotesize
\begin{tabular}{@{}lc@{}}
\toprule
\textbf{Model} & \textbf{Entropy-Acc.\ $r$} \\
\midrule
Llama-3.3-70B & $-0.74$ \\
Llama-3.1-8B & $-0.71$ \\
Qwen-2.5-72B & $-0.73$ \\
Mistral-7B & $-0.68$ \\
Mixtral-8x7B & $-0.69$ \\
\bottomrule
\end{tabular}
\vskip -0.1in
\end{table}

A consistent negative correlation ($r \approx -0.70$) across five architectures provides correlational evidence that attention dilution is associated with decoherence (Table~\ref{tab:attention}). The correlation is strongest in later layers ($r = -0.74$ for the final quarter of Llama-3.3-70B layers) compared to early layers ($r = -0.42$). Both entropy and accuracy are functions of depth, so the reported $r$ partly reflects shared depth dependence; the layer-depth gradient provides indirect evidence beyond this confound, but the correlation should be read as consistent with the entropy hypothesis, not as establishing it.

\paragraph{Per-Step Analysis.} Attention entropy increases linearly with step number ($p < 0.001$; 500 traces). Entropy over a growing causal prefix is bounded by $\log_2(t)$, so some growth is partly definitional; the empirical slope $\beta = 0.023$ bits/step (minimum across five models; range 0.023--0.031, Table~\ref{tab:entropy_full}) exceeds this floor for $t > 50$.

\subsection{Failure Mode Analysis}

Analysis of failed traces (Appendix~\ref{app:failure}) reveals systematic patterns: state transposition (35\%), operator misapplication (27\%), premature termination (21\%), circular reasoning (11\%), and complete hallucination (6\%). The dominant mode, state transposition, reflects attention-based working memory limits.

\section{Discussion}
\label{sec:discussion}

\paragraph{Relationship to \citet{wu2026more}.} Our Decoherence framework complements Wu et al.'s sub-task-difficulty account. The C5 ($<$3~pp) and C4 ($<$2~pp) results are consistent with architectural limits that training interventions cannot overcome. Both mechanisms likely contribute: sub-task difficulty may dominate \emph{before} architectural limits bind, while Decoherence prevents recovery \emph{beyond} them.

\paragraph{Implications for System Design.} Agentic systems performing multi-step deterministic tasks should delegate deterministic sub-problems to verified solvers when the number of exact state transitions approaches $d^*$, reserving neural reasoning for deciding \emph{which} tool to invoke and \emph{how} to interpret its output. This aligns with the LLM-Modulo framework \citep{kambhampati2024llms} and tool-augmented architectures \citep{yao2023react,schick2023toolformer}, adding a quantitative criterion via the Deterministic Horizon.

\paragraph{Deployment Guideline.} Tool delegation outperforms neural CoT at \emph{every} measured depth (Table~\ref{tab:results_by_depth}), so $d^*$ is best read as a reliability diagnostic, not a crossover point; ``becomes necessary'' in the title refers to necessity for adequate reliability (accuracy above $\alpha$). In practice, tool overhead may be unjustified at shallow depths ($>$85\% unaided accuracy at $d \leq 10$); with imperfect tools (not evaluated here), a genuine crossover may exist. Three deployment settings illustrate where the diagnostic applies: (1)~\textbf{Code agents} ($d^*$ of 19--28 on SWE-Bench-State); (2)~\textbf{SQL query planning} ($d^*$ of 21--30); (3)~\textbf{Web automation} ($d^*$ of 19--26). Tasks outside the deterministic state-tracking scope are not subject to decoherence. For safety-critical deployments, mandatory tool verification should be enforced for deterministic tasks approaching $d^*$ \citep{kambhampati2024llms}.

\section{Conclusion}
\label{sec:conclusion}

We presented evidence that decoder-only transformers face information-theoretic limits on deterministic state tracking beyond the Deterministic Horizon ($d^* \in [19, 31]$ at $\alpha = 0.5$). Total attention capacity is not the binding constraint; instead, per-step bandwidth degrades with depth, producing accuracy decay with a quadratic depth term in the error exponent, supported by correlational entropy measurements ($r = -0.68$ to $-0.74$). Fine-tuning is consistent with an architectural ceiling ($<$3~pp recovery). Tool delegation achieves 76--94\% versus 17--42\% on PermutationProbe, with consistent patterns on SWE-Bench, WebArena, and SQL-Multi.

\paragraph{Limitations and Future Work.} (i)~\emph{Closed models.} Since $d^*$ depends on $H$ and $d_h$, public only for open-weight models, we report $d^*$ for proprietary systems as an empirical fit and validate the $\sqrt{d_{\mathrm{model}}}$ scaling only on open-weight checkpoints. (ii)~\emph{Fine-tuning.} The architectural-ceiling result uses one open-weight model (Llama-3.1-8B, 5{,}000 traces); larger models and datasets are needed to separate capacity from data limits. (iii)~\emph{Tool comparison.} Our tool condition uses an exact BFS solver, so the C1--C3 gap upper-bounds the benefit of delegation; imperfect tools are left to future work. (iv)~\emph{Sensitivity.} $d^*$ depends on $\epsilon_0$, which is sensitive to prompt format and few-shot conditioning, so the reported intervals are regime indicators rather than exact per-model constants. (v)~\emph{Scope.} We target deterministic, exactly-checkable state tracking; tasks tolerating approximate or stochastic solutions (e.g., GSM8K-style problems) need not obey the same horizon. (vi)~\emph{Contamination.} We did not audit the evaluated models' pretraining corpora; the five synthetic families are generated from scratch here, so the depth-dependence we report does not rest on the real-world benchmarks. Further limitations appear in Appendices~\ref{app:sensitivity_ext}, \ref{app:task_families}, and~\ref{app:reproduce}.

Three directions follow naturally. First, testing whether decoherence persists in architectures with explicit state-tracking mechanisms (e.g., state-space models, memory-augmented transformers) would clarify whether the phenomenon is specific to softmax attention. Second, evaluating imperfect tool policies would locate the crossover depth at which delegation outweighs tool noise, a question our exact-solver condition leaves open. Third, incorporating variable sub-task difficulty into the error model would yield a richer account of how instance structure and attention capacity jointly shape reasoning performance.

\section*{Impact Statement}

Our findings bear directly on deployments in which a single state-tracking error invalidates the result. At the $\alpha = 0.5$ threshold the Deterministic Horizon sits at approximately 20 state transitions, the lower end of the measured range; extrapolating below our shallowest measured depth ($d = 1$), at $\alpha = 0.95$ the same fitted model gives $d^* \approx 2.4$, so safety-critical deterministic reasoning should be tool-verified at essentially any depth. Pending further validation of the horizon parameters, practitioners deploying code agents, planning systems, and verification tools should treat unaided multi-step deterministic reasoning as unverified and implement tool-based checks. The same result motivates work on systems that detect when to delegate, on architectures with explicit state-tracking substrates, and on characterizing which reasoning tasks reward extended neural computation.

\bibliographystyle{icml2026}
\bibliography{references}

\newpage
\appendix

\onecolumn

The appendix is organized to follow the structure of the paper. Appendix~\ref{app:notation} fixes notation and Appendix~\ref{app:related} gives extended positioning. Appendices~\ref{app:proofs}--\ref{app:sensitivity_ext} supply the theory: proofs and supporting lemmas, task and instance specifications, and the estimation, sensitivity and worked evaluation of every parameter entering the bounds. Appendices~\ref{app:reproduce}--\ref{app:attention_extended} supply the experiments: protocol, extended results across all models and tasks, and the mechanistic measurements that probe the decoherence hypothesis directly. Appendices~\ref{app:ssj} and~\ref{app:statistics} document the SSJ metric and the statistical methodology.
\section{Notation and Symbol Reference}
\label{app:notation}

For reader convenience, Table~\ref{tab:notation_primary} lists all notation used in the paper: primary symbols (top), evaluation metrics (middle), and information-theoretic quantities (bottom).

\begin{table}[ht]
\centering
\caption{Notation used throughout the paper.}
\label{tab:notation_primary}
\small
\begin{tabular}{@{}clp{7cm}@{}}
\toprule
\textbf{Symbol} & \textbf{Name} & \textbf{Definition} \\
\midrule
\multicolumn{3}{@{}l}{\textit{Theoretical framework}} \\
$\mathcal{S}$ & State space & Finite set of all valid states \\
$\mathcal{O}$ & Operator set & Set of deterministic operators $\{o_1, \ldots, o_k\}$ \\
$\state_0$ & Initial state & Starting configuration $\state_0 \in \mathcal{S}$ \\
$\target$ & Target state & Goal configuration $\target \in \mathcal{S}$ \\
$d$ & Reasoning depth & Number of reasoning steps in chain \\
$d^*$ & Deterministic Horizon & Depth at which the Decoherence Bound (Theorem~\ref{thm:decoherence}) crosses the target probability $\alpha$; operationalized empirically as the 50\%-accuracy crossing \\
$\epsilon(d)$ & Error rate & Per-step state-tracking error at depth $d$ \\
$\epsilon_0$ & Baseline error & Constant component of per-step error \\
$\gamma$ & Attention decay rate & Coefficient of depth-dependent error growth \\
$\Leff$ & Effective decoherence length & Number of reasoning steps over which attention maintains usable state resolution; $\Leff = O(10^2) \ll L$ \\
$L$ & Context length & Maximum context window size (tokens) \\
$H$ & Number of heads & Attention heads per layer \\
$d_h$ & Head dimension & Dimensionality per attention head \\
$\alpha$ & Target probability & Desired success probability threshold \\
\midrule
\multicolumn{3}{@{}l}{\textit{Metrics and evaluation}} \\
\sfe & Step-to-First-Error & Smallest step index with state mismatch \\
\ssj$(d)$ & State-Space Jaccard & $|\mathcal{S}_\text{true} \cap \hat{\mathcal{S}}_\text{model}| / |\mathcal{S}_\text{true} \cup \hat{\mathcal{S}}_\text{model}|$ \\
$\mathcal{S}_\text{true}^d$ & Ground-truth trajectory states & States reached by applying the model's operator sequence to the true state \\
$\hat{\mathcal{S}}_\text{model}^d$ & Model claimed states & States claimed by model at depth $d$ \\
$\mathcal{H}_d$ & Attention entropy & $-\sum_i p_i \log p_i$ at depth $d$ \\
$A_\text{anchor}(d)$ & Anchor attention & Attention weight on state anchor tokens \\
$|\mathcal{S}_\text{read}^{(1)}|$ & Per-layer read capacity & Max distinct states resolvable through a single attention layer's read channel \\
\midrule
\multicolumn{3}{@{}l}{\textit{Information-theoretic quantities}} \\
$I(X;Y)$ & Mutual information & Information shared between $X$ and $Y$ \\
$\delta$ & Weight cutoff & Per-position attention weight threshold ($\delta \approx 0.01$; the 95\%-mass window is a derived quantity) \\
$\rho_{\max}$ & Max correlation & Upper bound on value vector correlation \\
$c(\delta, \rho_{\max})$ & Capacity constant & Model-dependent constant in bottleneck bound \\
\bottomrule
\end{tabular}
\end{table}
\section{Extended Related Work}
\label{app:related}

This appendix provides detailed positioning relative to concurrent and prior work, organized by research theme. Table~\ref{tab:related_comparison} contrasts our theoretical framework with the primary competing explanation.

\begin{table}[ht]
	\centering
	\caption{Comparison of theoretical frameworks for reasoning failures. $^*$Our operationalized threshold; see Table~\ref{tab:predictions}.}
	\label{tab:related_comparison}
	\small
	\begin{tabular}{@{}p{2.5cm}p{3cm}p{3cm}@{}}
		\toprule
		\textbf{Aspect} & \textbf{Sub-task difficulty} & \textbf{Decoherence (Ours)} \\
		\midrule
		Primary cause & Per-step sub-task error & Attention bottleneck \\
		Mechanism & Error accumulation ($\propto$ difficulty) & Attention-entropy growth ($\propto$ depth) \\
		Fine-tuning prediction & $\geq$5 pp recovery$^*$ & $<$3 pp (ceiling) \\
		Intervention & Preference manipulation & Tool delegation \\
		\bottomrule
	\end{tabular}
\end{table}

\subsection{Chain-of-Thought Reasoning Foundations}

Scratchpads \citep{nye2021improving} introduced intermediate-step computation; CoT prompting \citep{wei2022chain} was subsequently extended by zero-shot CoT \citep{kojima2022large}, self-consistency \citep{wang2023selfconsistency}, and Tree/Graph-of-Thoughts \citep{yao2023tree,besta2024graph}; STaR \citep{zelikman2022star} bootstraps reasoning ability through iterative self-training on rationales. Theoretically, CoT expands expressivity beyond $\text{TC}^0$ to polynomial-time \citep{li2024chain,merrill2024expressive,feng2023expressiveness}; \citet{sanford2023representational} analyze attention-layer representational capacity. Complementing these expressivity results, we identify the \emph{failure regime} in which additional CoT steps degrade performance.

\subsection{Overthinking and Underthinking Literature}

The phenomenon of excessive reasoning causing performance degradation has been documented by several concurrent works. \citet{wu2026more} provide the most directly related work, documenting inverted-U performance curves explained by per-step error accumulation whose rate depends on sub-task difficulty. Their ``Simplicity Bias'' result shows that optimal CoT length shrinks with model capability. \citet{chen2025do} examine overthinking in o1-like models, showing excessive reasoning on simple tasks. \citet{wang2025thoughts} identify ``underthinking'', premature abandonment of reasoning paths, with incorrect responses using 225\% more tokens and 418\% more thought switches across challenging benchmarks including AIME2024.

\citet{sui2025stop} provide a broad survey categorizing efficient reasoning methods across model-based, prompt-based, and output-based approaches. \citet{marjanovic2026thoughtology} identify a ``sweet spot'' for DeepSeek-R1's inference length and document a tendency to persistently ruminate on previously explored problem formulations. \citet{su2025between} analyze the overthinking-underthinking spectrum.

Against this literature we offer a complementary explanation that locates the error source in attention capacity rather than sub-task difficulty; Table~\ref{tab:related_comparison} states the resulting divergence in predictions, which Section~\ref{sec:finetune} tests.

\subsection{Transformer Expressivity and Theoretical Limits}

Expressivity bounds include CoT length requirements \citep{merrill2024expressive}, SSM limitations \citep{merrill2024illusion}, CoT lower bounds \citep{amiri2025lower}, Turing completeness \citep{perez2021turing}, universal approximation \citep{yun2020universal}, formal language limits \citep{hahn2020theoretical,deletang2023chomsky,strobl2024formal}, communication complexity \citep{peng2024limitations}, and attention power \citep{bhattamishra2020computational}. We translate these asymptotic bounds into a practical threshold via the Deterministic Horizon (Theorem~\ref{thm:horizon}), connecting architecture parameters to reasoning depth limits.

\subsection{Working Memory and Attention Entropy}

Attention-based working memory limits \citep{gong2024attention}, over-squashing under causal masking \citep{barbero2024oversquash}, entropy and parallel encoding \citep{zhang2025attention}, lost-in-the-middle effects \citep{liu2024lost}, attention sinks \citep{xiao2024efficient}, reduced later-layer representational diversity \citep{gerasimov2025fullyutilizetransformersrepresentation}, length-dependent degradation \citep{levy2024same}, induction heads \citep{olsson2022context}, circuit-level analysis \citep{elhage2021mathematical}, and memory formation \citep{bietti2023birth} provide the mechanistic context. We synthesize these into the Attention Bottleneck Theorem (Theorem~\ref{thm:bottleneck}), with direct entropy measurements ($r = -0.68$ to $-0.74$; Appendix~\ref{app:attention}) across five architectures.

\subsection{Tool-Augmented Reasoning}

Program-aided reasoning \citep{gao2023pal,li2024chaincode,chen2023program}, integrated symbolic solvers \citep{gou2024tora,pan2023logiclm}, tool-augmented agents \citep{luo2026agentmath}, tool-use learning \citep{schick2023toolformer,parisi2022talm}, API connectivity \citep{patil2024gorilla}, and comprehensive surveys \citep{qin2025tool,mialon2023augmented} establish tool delegation's empirical superiority. We provide a diagnosis of \emph{when} tool delegation becomes necessary, identifying the threshold beyond which neural reasoning does not succeed in our experiments regardless of model scale.

\subsection{Compositional Reasoning and Information Theory}

Compositional failures \citep{dziri2023faith,petty2024depth}, benchmarks \citep{lake2018generalization,keysers2020measuring,kim2020cogs}, multi-hop compositionality measurement \citep{press2023measuring}, and architectural investigations \citep{ontanon2022making} document the symptoms; information-theoretic frameworks \citep{tishby2015information,shwartzziv2017opening,goldfeld2020information} provide the analytical tools. State-Space Decoherence supplies a candidate mechanism for why errors compound: the attention bottleneck cannot maintain sufficient mutual information between early reasoning steps and late outputs, a concrete instance of the Simulator Fallacy (Section~\ref{sec:intro}).
\section{Theoretical Proofs and Supporting Lemmas}
\label{app:proofs}

\subsection{Supporting Lemmas for the Decoherence Bound}

\begin{lemma}[Error Accumulation]
\label{lem:error_accum}
Let $\{X_i\}_{i=1}^m$ be indicator variables for correct state tracking at step $i$. Under context-dependent error $\epsilon(i) = \epsilon_0 + \gamma i/\Leff$, the cumulative error probability satisfies:
\begin{equation}
\sum_{i=1}^{m} \epsilon(i) = m\epsilon_0 + \frac{\gamma m(m+1)}{2\Leff}
\end{equation}
\end{lemma}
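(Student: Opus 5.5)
The identity is a finite arithmetic-series computation, so the plan is to split the sum by linearity and evaluate each piece in closed form. Substituting Definition~\ref{def:context_error} at each step index $i$ gives
\begin{equation}
\sum_{i=1}^{m}\epsilon(i) = \sum_{i=1}^{m}\epsilon_0 + \frac{\gamma}{\Leff}\sum_{i=1}^{m} i .
\end{equation}
The indicators $X_i$ play no role in the computation itself. They only fix the interpretation of $\epsilon(i)$ as the probability that $X_i=0$ given a correct history. No independence assumption enters here, consistent with Remark~\ref{rem:error_prop}.

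The first sum is $m\epsilon_0$. For the second I would use Gauss's formula $\sum_{i=1}^m i = m(m+1)/2$, proved by pairing terms $i$ and $m+1-i$ or by a one-line induction on $m$. Multiplying by $\gamma/\Leff$ and adding the two pieces yields $m\epsilon_0 + \gamma m(m+1)/(2\Leff)$, which is exactly the stated expression.

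The only point needing care is indexing and its link to Theorem~\ref{thm:decoherence}, where the lemma is actually used. The sum must run over $i=1,\dots,m$, not $0,\dots,m-1$; the latter would give $m(m-1)/2$ and change the quadratic coefficient. I would also note that the lemma is a statement about the sum of per-step rates, not yet a probability bound. The downstream step uses the chain rule, $P(\text{all correct}) = \prod_{i}(1-\epsilon(i)) \le \exp\left(-\sum_i \epsilon(i)\right)$, via $1-x\le e^{-x}$. That requires $0\le\epsilon(i)\le 1$, which holds for $i<980$ at the canonical parameters as stated in Definition~\ref{def:context_error}. There is no substantive obstacle.
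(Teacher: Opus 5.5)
Your proposal is correct and matches the paper's proof: both split the sum by linearity into $m\epsilon_0 + (\gamma/\Leff)\sum_{i=1}^{m} i$ and evaluate the second sum with Gauss's formula $m(m+1)/2$. Your remarks on indexing from $i=1$ and on needing $0\le\epsilon(i)\le 1$ when the sum is used in Lemma~\ref{lem:product} are accurate, but the identity itself does not depend on them.
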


\begin{proof}
Direct computation:
\begin{align}
\sum_{i=1}^{m} \epsilon(i) &= \sum_{i=1}^{m} \left(\epsilon_0 + \frac{\gamma i}{\Leff}\right) \\
&= m\epsilon_0 + \frac{\gamma}{\Leff}\sum_{i=1}^{m} i \\
&= m\epsilon_0 + \frac{\gamma}{\Leff} \cdot \frac{m(m+1)}{2}
\end{align}
\end{proof}

\begin{lemma}[Product Bound]
\label{lem:product}
For any sequence of events $\{A_i\}_{i=1}^m$ with $P(A_i | A_1 \cap \cdots \cap A_{i-1}) = 1 - \epsilon_i$:
\begin{equation}
P\left(\bigcap_{i=1}^m A_i\right) \leq \exp\left(-\sum_{i=1}^m \epsilon_i\right)
\end{equation}
\end{lemma}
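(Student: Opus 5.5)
The plan is to use the chain rule of conditional probability and then bound each factor with the elementary inequality $1 - x \le e^{-x}$. No independence across steps is needed; we only use the conditional-on-history rates $\epsilon_i$ given in the statement.

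First, write the intersection as a telescoping product. For events with $P(A_1 \cap \cdots \cap A_{i-1}) > 0$,
\begin{equation*}
P\Bigl(\bigcap_{i=1}^m A_i\Bigr) = \prod_{i=1}^m P(A_i \mid A_1 \cap \cdots \cap A_{i-1}) = \prod_{i=1}^m (1 - \epsilon_i).
\end{equation*}
The chain rule gives this by induction on $m$. The base case $m=1$ reads $P(A_1) = 1 - \epsilon_1$, with the empty conditioning event read as the whole space.

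Second, handle the degenerate case separately. Suppose some prefix intersection has probability zero. Then the conditional probability there is not defined by the ratio formula, but the left-hand side is $0$, and the bound holds trivially because the right-hand side is positive. So we may assume every prefix has positive probability.

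Third, apply $1 - x \le e^{-x}$ factorwise. This inequality holds for all real $x$, by convexity of $e^{-x}$ and tangency at $0$. It is applied to each $\epsilon_i \in [0,1]$. Each factor satisfies $0 \le 1 - \epsilon_i \le e^{-\epsilon_i}$, and both sides are nonnegative, so the inequalities multiply. This yields
\begin{equation*}
\prod_{i=1}^m (1-\epsilon_i) \le \prod_{i=1}^m e^{-\epsilon_i} = \exp\Bigl(-\sum_{i=1}^m \epsilon_i\Bigr).
\end{equation*}

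The only step needing care is this nonnegativity, which is what allows the factorwise inequalities to be multiplied. It is guaranteed because each $\epsilon_i$ is a probability complement, so $1 - \epsilon_i \in [0,1]$. The same fact is what later restricts the application to $d < 980$ at the canonical parameters, where $\epsilon(d) \le 1$.

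Combining this with Lemma~\ref{lem:error_accum}, where $A_i = \{X_i = 1\}$ and $\epsilon_i = \epsilon(i)$, gives Theorem~\ref{thm:decoherence} immediately.
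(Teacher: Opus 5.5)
Your proposal is correct and follows the paper's proof: the chain rule turns the intersection into $\prod_{i=1}^m (1-\epsilon_i)$, and the factorwise bound $1-\epsilon_i \le e^{-\epsilon_i}$ finishes it. Your handling of zero-probability prefixes and of the nonnegativity needed to multiply the factor inequalities covers details the paper leaves implicit, and both are done correctly.
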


\begin{proof}
Using $1-x \leq e^{-x}$ for $x \geq 0$ and the chain rule:
\begin{align}
P\left(\bigcap_{i=1}^m A_i\right) &= \prod_{i=1}^m P(A_i | A_1 \cap \cdots \cap A_{i-1}) \\
&= \prod_{i=1}^m (1 - \epsilon_i) \\
&\leq \prod_{i=1}^m e^{-\epsilon_i} = \exp\left(-\sum_{i=1}^m \epsilon_i\right)
\end{align}
\end{proof}

\begin{lemma}[Super-Exponential Characterization]
\label{lem:superexp}
The function $f(m) = \exp(-am - bm^2)$ for $a, b > 0$ decays super-exponentially, satisfying:
\begin{equation}
\lim_{m \to \infty} \frac{f(m)}{e^{-cm}} = 0 \quad \text{for all } c > 0
\end{equation}
\end{lemma}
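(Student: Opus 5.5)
The plan is to compute the ratio directly and show that its logarithm tends to $-\infty$. For fixed $a,b>0$ and an arbitrary $c>0$, write
\begin{equation}
\frac{f(m)}{e^{-cm}} = \exp\bigl((c-a)m - bm^2\bigr),
\end{equation}
so it suffices to show that the exponent $g(m) = (c-a)m - bm^2$ diverges to $-\infty$ as $m\to\infty$. Continuity of $\exp$, together with $\lim_{t\to-\infty} e^t = 0$, then gives the claimed limit.

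To handle the exponent, factor it as $g(m) = m\bigl((c-a) - bm\bigr)$. Since $b>0$, the second factor is eventually negative. Concretely, for $m \geq m_0 := \max\{1, 2|c-a|/b\}$ we have $(c-a) - bm \leq |c-a| - bm \leq -bm/2$. Hence $g(m) \leq -bm^2/2$ for all $m \geq m_0$, and this bound tends to $-\infty$.

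This also gives the explicit domination $f(m)/e^{-cm} \leq e^{-bm^2/2}$ for $m\ge m_0$. That estimate is worth recording, because it makes the ``super-exponential'' statement quantitative and uniform in any bounded range of $c$. We then connect the lemma to Theorem~\ref{thm:decoherence}. There, by Lemmas~\ref{lem:error_accum} and~\ref{lem:product}, the bound has exactly this form with $a = \epsilon_0 + \gamma/(2\Leff)$ and $b = \gamma/(2\Leff)$, since $m(m+1)/2 = m^2/2 + m/2$. The lemma therefore applies whenever $\gamma>0$; the hypothesis $a>0$ is not actually needed, only $b>0$.

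There is no real obstacle here; the only point needing care is choosing $m_0$ so that the argument works regardless of the sign of $c-a$, which is why the bound uses $|c-a|$. If $c \le a$ the conclusion is immediate even without the quadratic term. The quadratic term $bm^2$ is what handles the case $c>a$, where a purely exponential bound would fail.
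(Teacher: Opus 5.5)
Your proof is correct and follows the same route as the paper: you write the ratio as $\exp((c-a)m - bm^2)$ and let the quadratic term drive the exponent to $-\infty$, and your explicit threshold $m_0$ makes precise the domination of the linear term that the paper's proof leaves implicit. One small slip is in your closing aside: the case that is immediate without the quadratic term is $c < a$, not $c \le a$, because at $c = a$ the linear part of the exponent vanishes and the limit comes only from the $-bm^2$ term.
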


\begin{proof}
\begin{equation}
\frac{f(m)}{e^{-cm}} = \exp(-am - bm^2 + cm) = \exp(-(a-c)m - bm^2)
\end{equation}
Since $bm^2 \to \infty$ as $m \to \infty$, the ratio approaches 0 regardless of $c$.
\end{proof}

\subsection{Proof of Theorem~\ref{thm:decoherence} (Decoherence Bound)}

\begin{proof}
Let $X_i$ be the indicator that step $i$ is correct. By the chain rule of conditional probability,
\begin{equation}
P(\text{all correct}) = \prod_{i=1}^{m} P(X_i = 1 \mid X_1 = \cdots = X_{i-1} = 1) = \prod_{i=1}^{m} (1 - \epsilon(i)).
\end{equation}
Applying Lemma~\ref{lem:product} with $\epsilon_i = \epsilon(i)$, and then Lemma~\ref{lem:error_accum} to evaluate the resulting sum,
\begin{equation}
P(\text{all correct}) \leq \exp\left(-\sum_{i=1}^{m} \epsilon(i)\right) = \exp\left(-m\epsilon_0 - \frac{\gamma m(m+1)}{2\Leff}\right).
\end{equation}
The quadratic term $m(m+1)/2 \sim m^2/2$ dominates for large $m$, yielding super-exponential decay (Lemma~\ref{lem:superexp}). Note that the argument uses only the error rates \emph{conditional on a correct history}, and so requires neither independence across steps nor errors that are strictly absorbing; the measured step-to-step error dependence (Appendix~\ref{app:error_correlation}) is therefore a diagnostic of the mechanism rather than a correction to this bound (Remark~\ref{rem:error_prop}).
\end{proof}

\subsection{Supporting Lemmas for the Attention Bottleneck}

\paragraph{Attention Concentration (Empirical).} Our direct measurements show that the number of positions carrying 95\% of the attention mass grows sub-linearly in $L$, with a power-law exponent of $0.53 \pm 0.08$ (Appendix~\ref{app:sensitivity}), consistent with Assumption~\ref{ass:window}'s $O(\sqrt{L})$ ansatz. This empirical observation motivates the capacity picture without entering the bound's exponent directly (see Section~\ref{sec:theory}).

\begin{lemma}[Value Information Bound]
\label{lem:value_info}
Under Assumption~\ref{ass:decorr} with correlation bound $\rho_{\max}$, the information one attention head conveys about the retrieved value is bounded by
\begin{equation}
I(\mathbf{q}; \mathbf{v}_\text{retrieved}) \lesssim d_h^{1/2} \cdot \log_2(L),
\end{equation}
the product of an effective value rank $\Theta(d_h^{1/2})$ (the number of reliably distinguishable value directions under decorrelation) and the $\log_2(L)$ bits required to address one of $L$ positions (each head attends over all $L$ positions in standard multi-head attention).
\end{lemma}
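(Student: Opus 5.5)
The bound splits into two independent factors, an addressing term and a value term, and I would bound each separately.

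\emph{Addressing.} A single head's read at a query position is determined by two things: which positions it selects, and what value content it aggregates from them. Each head attends over all $L$ positions, so the identity of one attended position can be specified with at most $\log_2(L)$ bits. I would make this precise by discretizing the attention distribution at the weight cutoff $\delta$. Positions with weight below $\delta$ contribute at most a $\delta$-perturbation to the output, so they do not change which resolvable state is read out. This lets us treat the head's read, up to constants depending on $\delta$, as selecting from a bounded number of resolvable positions, each indexed by $\log_2(L)$ bits. The factor $1/\delta$ in the number of effective positions is absorbed into the constant.

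\emph{Value content.} Here I would use Assumption~\ref{ass:decorr}. Consider the value vectors in $\mathbb{R}^{d_h}$ after layer normalization. Pairwise correlations bounded by $\rho_{\max}$ mean the retrieved vector is a near-orthogonal mixture. I would then take reliably distinguishable directions to be those separated by more than the fluctuation scale of an aggregated, normalized vector. For $d_h$ roughly independent coordinates at unit norm, that fluctuation scale is $d_h^{-1/2}$. This suggests an effective rank of $\Theta(d_h^{1/2})$, with the constant depending on $\rho_{\max}$ through a Gram-matrix eigenvalue bound of Gershgorin type.

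\emph{Combining.} Reading off $\Theta(d_h^{1/2})$ distinguishable directions, each anchored at an addressed position, gives at most $d_h^{1/2}\log_2(L)$ bits. I would phrase this as a data-processing bound: the query $\mathbf{q}$ influences $\mathbf{v}_{\text{retrieved}}$ only through (position choice, value direction). Therefore $I(\mathbf{q};\mathbf{v}_{\text{retrieved}})$ is at most the log-cardinality of that discretized pair, $\log_2\big(L^{c\,d_h^{1/2}}\big)$ with $c=c(\delta,\rho_{\max})$.

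The main obstacle is the value term. Turning $\rho_{\max}\ll 1$ into exactly $d_h^{1/2}$ distinguishable directions, rather than $d_h$, needs a noise model. Without one, a continuous vector carries unbounded information. So I would state the result explicitly as an ansatz at a specified resolution: I would fix a quantization scale tied to $d_h^{-1/2}$ and derive the count as a packing argument at that scale. That is why the statement is written with $\lesssim$.
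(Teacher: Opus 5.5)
Your outline is the paper's. Its proof also splits the bound into a $\log_2(L)$ addressing term and a value term whose effective rank is simply posited to be $\Theta(d_h^{1/2})$; the paper says explicitly that this is a modeling assumption, not a consequence of Assumption~\ref{ass:decorr}. It then gets the product by declaring that each of the $\Theta(d_h^{1/2})$ value channels addresses one of $L$ positions. Your fallback of stating the value count as an ansatz is exactly where the paper ends up. The problem is that the steps you add to make this precise do not work as written, so they should not be presented as a derivation.

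(i) The packing argument gives the wrong count. A maximal $\eta$-separated set on the unit sphere of $\mathbb{R}^{d_h}$ with $\eta=d_h^{-1/2}$ has at least roughly $d_h^{(d_h-1)/2}$ points, i.e.\ $\Theta(d_h\log d_h)$ bits, not $\Theta(d_h^{1/2})$ directions. Going from a fluctuation scale $d_h^{-1/2}$ to an effective rank $d_h^{1/2}$ confuses the number of resolvable levels along one axis (worth only $\log_2 d_h^{1/2}$ bits) with a number of directions. A Gershgorin bound only certifies that about $1/\rho_{\max}$ decorrelated vectors are well conditioned. That is an achievability statement and cannot cap the count from above. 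Natural noise models give value content linear in $d_h$ (fixed per-coordinate SNR) or $O(1)$ (fixed total SNR), so $d_h^{1/2}$ has to be assumed, as the paper does.

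(ii) Sub-threshold positions are individually $\delta$-small, but their total mass can be close to $1$ (e.g.\ near-uniform attention with $1/L<\delta$), so dropping them is not a $\delta$-perturbation. Using Assumption~\ref{ass:decorr} as a pairwise bound, the sub-threshold set $T$ gives $\|\sum_{i\in T}a_i\mathbf{v}_i\|^2\le(\delta+\rho_{\max})\max_i\|\mathbf{v}_i\|^2$. That is a perturbation of order $\sqrt{\delta+\rho_{\max}}$, which is guaranteed negligible at resolution $d_h^{-1/2}$ only when $\delta+\rho_{\max}\ll 1/d_h$.

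(iii) A (position, direction) pair has log-cardinality $\log_2 L+\log_2(c\,d_h^{1/2})$, not $\log_2(L^{c\,d_h^{1/2}})$. The product form would require $c\,d_h^{1/2}$ independent position choices, which one head, with a single softmax shared by all value coordinates, does not make. Indeed, conditionally on the context the values are fixed, so the query acts only through the attention weights and the direction coordinate adds nothing. The additive bound is smaller and still implies the stated $\lesssim$, but it shows the multiplicative form is slack rather than derived. The paper introduces that form by fiat, and your data-processing argument does not supply it either.
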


\begin{proof}
The retrieved value is $\mathbf{v}_\text{ret} = \sum_{i=1}^L a_i \mathbf{v}_i$, where $\mathbf{a}$ is the attention distribution. Its information content factors into an addressing term and a value-content term.

\emph{Addressing.} Each head attends over all $L$ context positions (standard multi-head attention does not partition positions across heads), so specifying which content a head reads carries at most $\log_2(L)$ bits.

\emph{Value content.} Under Assumption~\ref{ass:decorr}, value vectors satisfy $|\mathbb{E}[\mathbf{v}_i^\top \mathbf{v}_j]| \leq \rho_{\max} \|\mathbf{v}_i\| \|\mathbf{v}_j\|$ for $i \neq j$. Soft averaging over interfering positions reduces the number of reliably distinguishable value directions below the nominal $d_h$; we model this effective rank as $\Theta(d_h^{1/2})$. This $\sqrt{d_h}$ scaling is a modeling assumption rather than a consequence of the decorrelation bound alone: we adopt it as a tractable summary and test the resulting $\sqrt{d_h \cdot H}$ horizon scaling directly on open-weight models in Section~\ref{sec:ablations}.

Combining the $\Theta(d_h^{1/2})$ distinguishable value channels, each addressing one of $L$ positions, gives $I \lesssim d_h^{1/2} \cdot \log_2(L)$.
\end{proof}

\begin{lemma}[Multi-Head Aggregation]
\label{lem:multihead}
For $H$ attention heads with approximately independent value subspaces, the total information capacity satisfies:
\begin{equation}
I_\text{total} \leq H \cdot I_\text{single-head} = H \cdot \log_2(L) \cdot d_h^{1/2}
\end{equation}
\end{lemma}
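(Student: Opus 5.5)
The plan is to obtain Lemma~\ref{lem:multihead} by decomposing the layer's read output into per-head contributions and applying the single-head bound of Lemma~\ref{lem:value_info} to each. First, write the multi-head output as the concatenation $(\mathbf{v}_{\text{ret}}^{(1)}, \ldots, \mathbf{v}_{\text{ret}}^{(H)})$ followed by the output projection $W_O$. The projection is a deterministic function of the concatenated head outputs, so by the data-processing inequality
\[
I(\mathbf{q}; W_O[\mathbf{v}_{\text{ret}}^{(1)};\ldots;\mathbf{v}_{\text{ret}}^{(H)}]) \leq I(\mathbf{q}; \mathbf{v}_{\text{ret}}^{(1)},\ldots,\mathbf{v}_{\text{ret}}^{(H)}).
\]
This reduces the claim to bounding the joint information carried by the $H$ head outputs.

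Second, apply the chain rule, $I(\mathbf{q}; \mathbf{v}^{(1)},\ldots,\mathbf{v}^{(H)}) = \sum_{h=1}^H I(\mathbf{q}; \mathbf{v}^{(h)} \mid \mathbf{v}^{(1)},\ldots,\mathbf{v}^{(h-1)})$. Each conditional term must then be bounded by the unconditional single-head quantity. This is where the hypothesis of approximately independent value subspaces enters. If the head outputs are conditionally independent given $\mathbf{q}$ (each head reads through its own projections $W_V^{(h)}$ onto disjoint subspaces), then
\[
I(\mathbf{q}; \mathbf{v}^{(h)} \mid \mathbf{v}^{(<h)}) = H(\mathbf{v}^{(h)}\mid \mathbf{v}^{(<h)}) - H(\mathbf{v}^{(h)}\mid \mathbf{q}) \leq H(\mathbf{v}^{(h)}) - H(\mathbf{v}^{(h)}\mid\mathbf{q}) = I(\mathbf{q};\mathbf{v}^{(h)}).
\]
This is the standard subadditivity of mutual information for conditionally independent channels.

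Third, invoke Lemma~\ref{lem:value_info} for each head, which gives $I(\mathbf{q};\mathbf{v}^{(h)}) \lesssim d_h^{1/2}\log_2(L)$. This uses the same addressing argument (each head attends over all $L$ positions) and the same effective-rank ansatz for every head. Summing over the $H$ heads yields $I_{\text{total}} \leq H\cdot\log_2(L)\cdot d_h^{1/2}$, with the implied constant from $\rho_{\max}$ and $\delta$ carried through unchanged. Since the resolvable state count satisfies $\log_2|\mathcal{S}_{\text{read}}^{(1)}| \leq I_{\text{total}}$, this is also the bound needed for Theorem~\ref{thm:bottleneck}.

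I expect the second step to be the main obstacle. Real heads share the query stream and the key/value inputs, so their outputs are not conditionally independent, and the conditional terms can exceed the unconditional ones only through synergy. My first attempt would be to state conditional independence given $\mathbf{q}$ explicitly as the precise meaning of ``approximately independent value subspaces.'' I would then argue via Assumption~\ref{ass:decorr}: with cross-head value correlations at most $\rho_{\max}$, the excess of the sum over the single-head terms is $O(H^2\rho_{\max})$ times the single-head bound. That excess can be absorbed into the constant $c(\delta,\rho_{\max})$ when $\rho_{\max}\ll 1/H$. Otherwise the lemma should be presented, like Lemma~\ref{lem:value_info}, as holding under a modeling assumption.
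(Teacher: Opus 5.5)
Your proposal is essentially the paper's argument: split the layer output into per-head contributions, sum their information under the approximate-independence hypothesis, and substitute the single-head bound of Lemma~\ref{lem:value_info}. Your data-processing step through $W^O$ and your chain-rule bound under conditional independence given $\mathbf{q}$ are a cleaner version of the paper's informal equality $I_{\text{total}} = \sum_h I_h$. Only the inequality is needed, and conditional independence given $\mathbf{q}$ excludes exactly the synergistic case the paper flags while still allowing redundant heads.

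Drop the $O(H^2\rho_{\max})$ patch, however, because pairwise correlation cannot control synergy. Take independent uniform bits $\mathbf{v}^{(1)},\mathbf{v}^{(2)}$ and set $\mathbf{q}=\mathbf{v}^{(1)}\oplus\mathbf{v}^{(2)}$. The two bits are uncorrelated and each carries zero information about $\mathbf{q}$, yet together they determine it, so the claimed bound fails already at $\rho_{\max}=0$. This is why the paper keeps approximate independence as a modeling assumption, supported by but not derived from Assumption~\ref{ass:decorr}.
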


\begin{proof}
Multi-head attention computes:
\begin{equation}
\text{MHA}(\mathbf{q}, \mathbf{K}, \mathbf{V}) = \text{Concat}(\text{head}_1, \ldots, \text{head}_H) W^O
\end{equation}

We assume approximate independence of value subspaces across heads (a modeling assumption supported by, but not strictly implied by, the low inter-head value correlation in Assumption~\ref{ass:decorr}; positively correlated heads would make this sum a loose over-count, which is harmless for the upper bound, but synergistic heads, where joint information exceeds the sum of marginals, could violate it). Synergy requires heads to encode complementary partial representations that are jointly decodable; the low measured inter-head correlation ($|\rho_{ij}| = 0.08 \pm 0.03$) makes this unlikely for state-tracking, though we cannot rule it out. Under this assumption, information from each head can be summed:
\begin{equation}
I_\text{total} = \sum_{h=1}^H I_h \leq H \cdot \max_h I_h
\end{equation}

Substituting the single-head bound from Lemma~\ref{lem:value_info}:
\begin{equation}
I_\text{total} \leq H \cdot \log_2(L) \cdot d_h^{1/2}
\end{equation}

The factor $\log_2(L)$ arises because each head attends over all $L$ positions.
\end{proof}

\subsection{Proof of Theorem~\ref{thm:bottleneck} (Attention Bottleneck)}

\begin{proof}
We derive an information-theoretic upper bound on state-tracking capacity.

\textbf{Step 1: Attention mass distribution.} Each attention head $h \in [H]$ produces a probability distribution over $L$ context positions via softmax. Empirically, the effective window (the smallest set carrying 95\% of the mass) grows sub-linearly in $L$ (Appendix~\ref{app:sensitivity}), but this concentration does not enter the bound; the addressing term uses the full $L$ positions below.

\textbf{Step 2: Information per head.} Information transmitted through one attention head is bounded by the mutual information $I(\text{query}; \text{retrieved value})$. By Lemma~\ref{lem:value_info}, this decomposes into the $\log_2(L)$ bits needed to address one of $L$ positions (each head attends over all $L$ positions in standard multi-head attention) and an effective value rank modeled as $\Theta(d_h^{1/2})$, giving:
\begin{equation}
I \lesssim \log_2(L) \cdot d_h^{1/2} \text{ bits per head.}
\end{equation}
(Assumption~\ref{ass:window} constrains which positions carry usable weight, making the bound conservative with respect to the $O(\sqrt{L})$ effective window.)

\textbf{Step 3: Aggregation across heads.} With $H$ heads operating in parallel with approximately independent value subspaces (Lemma~\ref{lem:multihead}):
\begin{equation}
I_{\text{total}} \leq H \cdot \log_2(L) \cdot d_h^{1/2}
\end{equation}

\textbf{Step 4: State-tracking capacity.} At each reasoning step, resolving the current state among $|\mathcal{S}|$ candidates requires routing at least $\log_2 |\mathcal{S}|$ bits through the attention channel:
\begin{equation}
\log_2 |\mathcal{S}_{\text{read}}^{(1)}| \leq H \cdot \log_2(L) \cdot d_h^{1/2}
\end{equation}

Exponentiating and including constant $c(\delta, \rho_{\max})$ for approximations yields the theorem.
\end{proof}

\subsection{Proof of Theorem~\ref{thm:lower} (Lower-Bound Construction)}

\paragraph{Proof sketch.}
We construct a family of state-tracking tasks achieving the bound.

Consider sparse parity functions over $n$ bits with sparsity $k$. A transformer can track states corresponding to all $\binom{n}{k}$ possible $k$-sparse parities using the following construction:

\textbf{Construction.} Encode each parity subset in a separate attention head's query-key space. With $H$ heads and dimension $d_h$, we can represent $H \cdot d_h^{1/2}$ independent directions. Each direction can track $\log_2(\sqrt{L})$ bits of parity information over context length $L$, consistent with Assumption~\ref{ass:window}'s $O(\sqrt{L})$ effective window.

\textbf{Achievability.} The total number of distinguishable states is:
\begin{equation}
|\mathcal{S}| = 2^{H \cdot \log_2(\sqrt{L}) \cdot d_h^{1/2}}
\end{equation}

This achieves the functional form of the lower bound stated in Theorem~\ref{thm:lower}; the $d_h^{1/2}$ exponent matches by the shared effective-rank ansatz (Lemma~\ref{lem:value_info}) rather than by an independent derivation.

The construction follows \citet{sanford2023representational}'s representational capacity analysis for transformers, extended to the multi-head setting.

\subsection{Supporting Lemmas for the Deterministic Horizon}

\begin{lemma}[Quadratic Inversion]
\label{lem:quadratic}
For the equation $\alpha = \exp(-d\epsilon_0 - \gamma d^2/(2\Leff))$, the solution for $d$ is:
\begin{equation}
d = \frac{-\epsilon_0 \Leff + \sqrt{\epsilon_0^2 \Leff^2 + 2\gamma \Leff \ln(1/\alpha)}}{\gamma}
\end{equation}
\end{lemma}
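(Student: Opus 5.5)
The plan is to take logarithms and reduce the equation to a quadratic in $d$, then apply the quadratic formula and use a sign argument to select the relevant root. Since $\alpha\in(0,1)$, $\ln$ is strictly increasing, and $\exp$ is a bijection onto $(0,\infty)$, the equation $\alpha = \exp(-d\epsilon_0 - \gamma d^2/(2\Leff))$ is equivalent to
\begin{equation}
d\epsilon_0 + \frac{\gamma}{2\Leff} d^2 = \ln(1/\alpha).
\end{equation}
Multiplying through by $2\Leff/\gamma$ (legitimate since $\gamma,\Leff>0$) gives the monic quadratic $d^2 + (2\epsilon_0\Leff/\gamma)\, d - 2\Leff\ln(1/\alpha)/\gamma = 0$.

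Next, apply the quadratic formula:
\begin{equation}
d = -\frac{\epsilon_0\Leff}{\gamma} \pm \sqrt{\frac{\epsilon_0^2\Leff^2}{\gamma^2} + \frac{2\Leff\ln(1/\alpha)}{\gamma}}.
\end{equation}
Pulling the factor $1/\gamma$ out of the square root (valid because $\gamma>0$) puts this in the form $\bigl(-\epsilon_0\Leff \pm \sqrt{\epsilon_0^2\Leff^2 + 2\gamma\Leff\ln(1/\alpha)}\bigr)/\gamma$. The discriminant is positive because $\ln(1/\alpha)>0$ for $\alpha<1$, so both roots are real.

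The step that needs care is root selection. The product of the roots is $-2\Leff\ln(1/\alpha)/\gamma < 0$, so exactly one root is positive. Since $\epsilon_0\ge 0$, the "$-$" root is negative and has no meaning as a depth. Only the "$+$" root is admissible, which gives the stated formula. It is also worth noting why this root is the threshold. The left-hand side $g(d)=d\epsilon_0+\gamma d^2/(2\Leff)$ is strictly increasing on $d\ge 0$ and satisfies $g(0)=0<\ln(1/\alpha)$. Hence this positive root is the unique $d\ge 0$ solution, and the bound of Theorem~\ref{thm:decoherence} falls below $\alpha$ exactly for $d$ beyond it. That is what Theorem~\ref{thm:horizon} needs.

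Finally, as a sanity check, plug in the GPT-4o values $\epsilon_0=0.02$, $\gamma=0.15$, $\Leff=150$, $\alpha=0.5$. This gives $(-3+\sqrt{9+31.19})/0.15\approx 22.3$, which matches the value quoted in Theorem~\ref{thm:horizon}.
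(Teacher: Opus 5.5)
Your proposal is correct and follows the paper's proof essentially step for step: take logarithms, multiply through by $2\Leff/\gamma$, apply the quadratic formula, and keep the positive root. Your product-of-roots and monotonicity argument spells out the root selection and uniqueness that the paper handles with the single remark ``we take the positive root since $d>0$.''
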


\begin{proof}
Taking $\ln$ of both sides:
\begin{equation}
\ln \alpha = -d\epsilon_0 - \frac{\gamma d^2}{2\Leff}
\end{equation}

Rearranging:
\begin{equation}
\frac{\gamma d^2}{2\Leff} + d\epsilon_0 + \ln \alpha = 0
\end{equation}

Multiplying by $2\Leff/\gamma$:
\begin{equation}
d^2 + \frac{2\epsilon_0 \Leff}{\gamma} d + \frac{2\Leff \ln \alpha}{\gamma} = 0
\end{equation}

Applying the quadratic formula with $a=1$, $b=2\epsilon_0 \Leff/\gamma$, $c=2\Leff\ln\alpha/\gamma$:
\begin{align}
d &= \frac{-b + \sqrt{b^2 - 4ac}}{2a} \\
&= \frac{-\frac{2\epsilon_0 \Leff}{\gamma} + \sqrt{\frac{4\epsilon_0^2 \Leff^2}{\gamma^2} - \frac{8\Leff \ln \alpha}{\gamma}}}{2} \\
&= \frac{-\epsilon_0 \Leff + \sqrt{\epsilon_0^2 \Leff^2 + 2\gamma \Leff \ln(1/\alpha)}}{\gamma}
\end{align}

We take the positive root since $d > 0$.
\end{proof}

\begin{lemma}[Scaling Law -- Limiting Behavior]
\label{lem:scaling}
The Deterministic Horizon from Lemma~\ref{lem:quadratic} satisfies:
\begin{equation}
\lim_{\Leff \to \infty} d^* = \frac{\ln(1/\alpha)}{\epsilon_0}
\end{equation}
That is, $d^*$ \emph{saturates} as $\Leff$ grows, approaching $\ln(1/\alpha)/\epsilon_0 \approx 34.7$ at the canonical $\epsilon_0 = 0.02$, $\alpha = 0.5$. In the opposite regime ($\epsilon_0^2 \Leff \ll \gamma \ln(1/\alpha)$, i.e., small $\Leff$), $d^* \approx \sqrt{2\Leff \ln(1/\alpha)/\gamma}$ grows as $\sqrt{\Leff}$, but this regime does not apply at realistic parameter values.
\end{lemma}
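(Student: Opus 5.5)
The plan is to avoid the cancellation in the numerator of the formula from Lemma~\ref{lem:quadratic} by rationalizing it. Write $\lambda = \ln(1/\alpha) > 0$ (since $\alpha \in (0,1)$). Multiplying numerator and denominator by $\epsilon_0 \Leff + \sqrt{\epsilon_0^2 \Leff^2 + 2\gamma\Leff\lambda}$ turns the numerator into $(\epsilon_0^2\Leff^2 + 2\gamma\Leff\lambda) - \epsilon_0^2\Leff^2 = 2\gamma\Leff\lambda$. The factor $\gamma$ cancels, giving the equivalent form
\begin{equation}
d^* = \frac{2\Leff\lambda}{\epsilon_0\Leff + \sqrt{\epsilon_0^2\Leff^2 + 2\gamma\Leff\lambda}} = \frac{2\lambda}{\epsilon_0 + \sqrt{\epsilon_0^2 + 2\gamma\lambda/\Leff}} .
\end{equation}
This form is numerically stable and makes both regimes transparent. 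It also shows directly that $d^*$ is increasing in $\Leff$, since the denominator decreases in $\Leff$.

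For the limit, let $\Leff \to \infty$ in the second expression. With $\epsilon_0 > 0$ fixed, the term $2\gamma\lambda/\Leff \to 0$. By continuity of the square root the denominator tends to $2\epsilon_0$, so $d^* \to \lambda/\epsilon_0 = \ln(1/\alpha)/\epsilon_0$. Because $d^*$ is increasing, this limit is also a supremum, which justifies the word ``saturates.'' Plugging in $\epsilon_0 = 0.02$ and $\alpha = 0.5$ gives $\ln 2/0.02 \approx 34.66$, matching the stated $34.7$.

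For the opposite regime, use the first rationalized form. The hypothesis $\epsilon_0^2\Leff \ll \gamma\lambda$ is equivalent to $\epsilon_0^2\Leff^2 \ll 2\gamma\Leff\lambda$ up to the constant $2$. Under it, the radical is $\sqrt{2\gamma\Leff\lambda}\,(1+o(1))$, and the additive term $\epsilon_0\Leff$ is $o(\sqrt{2\gamma\Leff\lambda})$ because its square is negligible relative to $2\gamma\Leff\lambda$. Hence
\[
d^* = \frac{2\Leff\lambda}{\sqrt{2\gamma\Leff\lambda}}\,(1+o(1)) = \sqrt{2\Leff\lambda/\gamma}\,(1+o(1)),
\]
which grows as $\sqrt{\Leff}$. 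This is consistent with simply dropping the linear term in the defining equation of Lemma~\ref{lem:quadratic}.

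The only step needing care is the final remark that this regime ``does not apply at realistic parameter values.'' This is a numerical comparison, not an asymptotic statement. At the canonical values, $\epsilon_0^2\Leff = 0.06$ and $\gamma\lambda \approx 0.104$, so the two quantities are of comparable size and neither pure regime is accurate. I would state this comparison explicitly. I would then note that the exact formula, rather than either asymptote, is what produces $d^* = 22.3$ in Theorem~\ref{thm:horizon}.
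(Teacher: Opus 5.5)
Your proof is correct, but it reaches the result by a different and cleaner route than the paper.

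The paper factors the closed form as $d^* = \frac{\epsilon_0\Leff}{\gamma}\bigl(\sqrt{1+y}-1\bigr)$ with $y = 2\gamma\ln(1/\alpha)/(\epsilon_0^2\Leff)$. It then substitutes the first-order expansion $\sqrt{1+y}\approx 1+y/2$ as $y\to 0$. In the small-$\Leff$ regime it drops $\epsilon_0^2\Leff^2$ under the radical, giving $d^*\approx\sqrt{2\Leff\ln(1/\alpha)/\gamma}-\epsilon_0\Leff/\gamma$.

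Your rationalization instead gives the exact, cancellation-free identity $d^* = 2\lambda/\bigl(\epsilon_0+\sqrt{\epsilon_0^2+2\gamma\lambda/\Leff}\bigr)$. This buys you two things:
\begin{itemize}
\item The limit follows by continuity rather than through an ``$\approx$'' step.
\item It shows $d^*$ is strictly increasing in $\Leff$. Hence $\ln(1/\alpha)/\epsilon_0$ is a strict upper bound for every finite $\Leff$. This justifies ``saturates'' more fully than the paper, which supports it only through the existence of a finite limit.
\end{itemize}

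The paper's route does give something yours leaves implicit: the explicit subleading correction $-\epsilon_0\Leff/\gamma$ in the small-$\Leff$ regime. Its relative size is $\sqrt{x}$ with $x=\epsilon_0^2\Leff/(2\gamma\ln(1/\alpha))$, and your $(1+o(1))$ statement does not show it. You can recover it from your first rationalized form, which equals $\sqrt{2\Leff\lambda/\gamma}\,\bigl(\sqrt{1+x}-\sqrt{x}\bigr)$, by expanding one order further.

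Your closing numerical comparison ($\epsilon_0^2\Leff = 0.06$ versus $\gamma\lambda\approx 0.104$) is the same check as the paper's $x\approx 0.29$.
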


\begin{proof}
From Lemma~\ref{lem:quadratic}:
\begin{align}
d^* &= \frac{-\epsilon_0 \Leff + \sqrt{\epsilon_0^2 \Leff^2 + 2\gamma \Leff \ln(1/\alpha)}}{\gamma} \\
&= \frac{\epsilon_0 \Leff}{\gamma}\left(\sqrt{1 + \frac{2\gamma \ln(1/\alpha)}{\epsilon_0^2 \Leff}} - 1\right)
\end{align}

\textbf{Large-$\Leff$ limit.} For $\Leff \gg 2\gamma\ln(1/\alpha)/\epsilon_0^2$, set $y = 2\gamma\ln(1/\alpha)/(\epsilon_0^2 \Leff) \to 0$ and expand $\sqrt{1+y} \approx 1 + y/2$:
\begin{equation}
d^* \approx \frac{\epsilon_0 \Leff}{\gamma} \cdot \frac{y}{2} = \frac{\epsilon_0 \Leff}{\gamma} \cdot \frac{\gamma \ln(1/\alpha)}{\epsilon_0^2 \Leff} = \frac{\ln(1/\alpha)}{\epsilon_0}
\end{equation}
which is independent of $\Leff$ and $\gamma$. The horizon therefore saturates; it does not grow without bound.

\textbf{Small-$\Leff$ regime.} Under $\epsilon_0^2 \Leff \ll \gamma \ln(1/\alpha)$ the $\sqrt{\Leff}$ approximation holds:
\begin{equation}
d^* \approx \sqrt{\frac{2\Leff \ln(1/\alpha)}{\gamma}} - \frac{\epsilon_0 \Leff}{\gamma}
\end{equation}
but the canonical values give $x = \epsilon_0^2 \Leff / (2\gamma \ln(1/\alpha)) \approx 0.29$, which is not small, so the operating point lies between the two regimes. We therefore compare the exact horizon (Lemma~\ref{lem:quadratic}) directly against measured $d^*$ rather than relying on either asymptotic form.
\end{proof}

\subsection{Proof of Theorem~\ref{thm:horizon} (Deterministic Horizon)}

\begin{proof}
We solve for the threshold $d^*$ beyond which the upper bound of Theorem~\ref{thm:decoherence} falls below $\alpha$. Setting the bound equal to $\alpha$ (continuous approximation $d(d+1) \approx d^2$):
\begin{equation}
\alpha = \exp\left(-d^*\epsilon_0 - \frac{\gamma (d^*)^2}{2\Leff}\right)
\end{equation}
This identifies the depth at which even the upper bound cannot exceed $\alpha$; for $d > d^*$, no success probability $\geq \alpha$ is achievable under this error model. Taking logarithms gives $\ln(1/\alpha) = d^*\epsilon_0 + \gamma (d^*)^2/(2\Leff)$, which is quadratic in $d^*$; Lemma~\ref{lem:quadratic} inverts it to the positive root
\begin{equation}
d^* = \frac{-\epsilon_0 \Leff + \sqrt{\epsilon_0^2 \Leff^2 + 2\gamma \Leff \ln(1/\alpha)}}{\gamma}
\end{equation}

Evaluating for GPT-4o ($\epsilon_0 = 0.02$, $\gamma = 0.15$, $\Leff = 150$, $\alpha = 0.5$):
\begin{equation}
d^* = \frac{-3 + \sqrt{9 + 31.19}}{0.15} = \frac{-3 + 6.34}{0.15} \approx 22.3,
\end{equation}
\end{proof}

\subsection{The $O(d^*/d)$ Fine-Tuning Envelope}

The key insight is that fine-tuning modifies the error distribution $\epsilon(\cdot)$ but cannot exceed information-theoretic capacity bounds.

Let $\epsilon_{\text{base}}(d)$ and $\epsilon_{\text{ft}}(d)$ denote per-step error rates for baseline and fine-tuned models. Fine-tuning can reduce $\epsilon_0$ (baseline error) but cannot change the per-step bandwidth limits described in Section~\ref{sec:theory}.

For depths $d > d^*$, per-step error $\epsilon(d)$ exceeds the threshold at which accumulated errors dominate. Fine-tuning can reduce $\epsilon_0$ (baseline error) but cannot change the rate $\gamma/\Leff$ at which per-step bandwidth degrades with depth. Empirically, the improvement $\text{Acc}_{\text{ft}} - \text{Acc}_{\text{base}}$ decays approximately as $O(d^*/d)$.

\textbf{Empirical comparison.} For the fine-tuned Llama-3.1-8B ($d^* = 20$), the observed improvement decays from $6.2\%$ at depth 5 to $0.2\%$ at depth 45, remaining below the $O(d^*/d)$ fitted envelope at every depth (Figure~\ref{fig:finetune_decay}). With the fitted leading constant, the envelope at the 31--40 bin permits $\approx 3.8\%$ against an observed $+0.4$~pp, so fine-tuning gains vanish well inside the envelope as $d$ grows beyond $d^*$.
\section{Task Specifications, Instance Generation and Licensing}
\label{app:task_families}

We provide detailed specifications for each task family. PermutationProbe, which carries the depth sweep used throughout the paper, is described in full below; the remaining four synthetic families and the three real-world families are specified in Tables~\ref{tab:synthetic_specs} and~\ref{tab:realworld_specs}.

\subsection{PermutationProbe}

\textbf{State space:} $\mathcal{S} = S_n$ (symmetric group on $n$ elements)

\textbf{Operators:} Adjacent transpositions $\{(i, i+1) : i \in [1, n-1]\}$

\textbf{Complexity:} $|\mathcal{S}| = n!$; diameter $= \binom{n}{2}$

\textbf{Instance distribution:} $n \in \{8, 12, 16\}$; BFS-optimal depths 1--60; instances per depth bin as in Table~\ref{tab:instances}.

Algorithm~\ref{alg:permprobe} describes the instance-generation procedure.

\begin{algorithm}[h]
\caption{PermutationProbe Instance Generation}
\label{alg:permprobe}
\begin{algorithmic}[1]
\REQUIRE Size $n$, target depth $d_{\text{target}}$
\REPEAT
    \STATE $\tau \leftarrow$ uniform random permutation of $[1..n]$
    \STATE $d_{\text{BFS}} \leftarrow$ BFS-optimal depth from identity to $\tau$
\UNTIL{$d_{\text{BFS}} = d_{\text{target}}$}
\STATE \textbf{yield} $(\text{identity}, \tau, d_{\text{target}})$
\end{algorithmic}
\end{algorithm}

Instances are generated by rejection sampling: a random permutation is drawn uniformly and its BFS-optimal depth (= inversion count for adjacent transpositions) is computed; instances whose depth does not match the target are discarded. The outer loop draws target depths according to the distribution in Table~\ref{tab:instances}. The permutation size $n$ is matched to the target depth so that acceptance remains feasible: small $n$ for shallow targets, larger $n$ for deep ones.

\begin{table}[ht]
\centering
\caption{Instance distribution by BFS-optimal depth.}
\label{tab:instances}
\small
\begin{tabular}{@{}lcccccc@{}}
\toprule
\textbf{Depth} & 1--10 & 11--20 & 21--30 & 31--40 & 41--50 & $>$50 \\
\midrule
Count & 500 & 1000 & 1500 & 1000 & 500 & 500 \\
\bottomrule
\end{tabular}
\end{table}

\paragraph{Depth and state-space size.} The attainable depth range for $n$-element permutations is bounded by the diameter $\binom{n}{2}$: 28, 66 and 120 for $n = 8, 12, 16$. Rejection sampling at a target depth is efficient only for a matching $n$, so depth and instantaneous state size ($\log_2 n!$ bits) covary by construction, and PermutationProbe alone cannot separate their contributions. The other synthetic families supply that control: FSA-Sim varies sequence length over 10--100 at fixed $k \in \{4, 8, 16\}$ and CircuitTrace varies circuit depth over 5--50 at fixed width $n \in \{8, 16, 32\}$, and both yield horizons in the same range (Table~\ref{tab:results_synthetic}).

\subsection{Remaining Synthetic Families}

\begin{table}[ht]
\centering
\caption{Specifications for the remaining four synthetic task families. In each case the operator set is deterministic and the final state is graded by exact match.}
\label{tab:synthetic_specs}
\small
\begin{tabular}{@{}lp{3.0cm}p{3.2cm}p{2.4cm}p{3.4cm}@{}}
\toprule
\textbf{Family} & \textbf{State space $\mathcal{S}$} & \textbf{Operators} & \textbf{Complexity} & \textbf{Instance distribution} \\
\midrule
FSA-Sim & Automaton states $\{q_1, \ldots, q_k\}$ & Transitions $\delta: \mathcal{S} \times \Sigma \to \mathcal{S}$ & $|\mathcal{S}| = k$; sequence length sets depth & $k \in \{4, 8, 16\}$; $|\Sigma| = 4$; lengths 10--100 \\
\addlinespace
ArithChain & Integers mod $p$, $\mathbb{Z}_p$ & $\{+a, -a, \times b\}$ for fixed $a, b$ ($b$ coprime to $p$) & $|\mathcal{S}| = p$; carry propagation raises effective depth & $p = 10^6$ (forces multi-digit tracking); chains of 5--50 steps \\
\addlinespace
CircuitTrace & Bit vectors $\{0,1\}^n$ & Boolean gates (AND, OR, XOR, NOT) & $|\mathcal{S}| = 2^n$; depth = circuit depth & $n \in \{8, 16, 32\}$; depths 5--50 layers \\
\addlinespace
CodeProbe & Variable bindings $\{v_1: x_1, \ldots, v_k: x_k\}$ & Assignment, arithmetic, conditionals & $|\mathcal{S}| = \prod_i |D_i|$, $D_i$ the domain of $v_i$ & 3--8 variables; 5--40 statements; no loops (deterministic execution) \\
\bottomrule
\end{tabular}
\end{table}

\subsection{Real-World Families}

\begin{table}[ht]
\centering
\caption{Specifications for the three real-world task families, each a 500-instance subset derived from an existing benchmark under the selection criteria shown.}
\label{tab:realworld_specs}
\small
\begin{tabular}{@{}lp{2.6cm}p{5.4cm}p{4.2cm}@{}}
\toprule
\textbf{Family} & \textbf{Source} & \textbf{Selection criteria} & \textbf{Tracked state / notes} \\
\midrule
SWE-Bench-State & SWE-Bench \citep{jimenez2024swebench} & (i) bug fix requires tracking $\geq$3 variables; (ii) state changes span $\geq$2 files; (iii) deterministic execution (no randomness); (iv) ground-truth trace available & Variable and object state across call chains \\
\addlinespace
WebArena-Nav & WebArena \citep{zhou2024webarena} & (i) task requires $\geq$5 navigation steps; (ii) session state (cart, auth, forms) must be tracked; (iii) deterministic success criteria & URL, cart contents, form fields, authentication status \\
\addlinespace
SQL-Multi & Spider \citep{yu2018spider} with complexity augmentation & (i) requires 3+ table joins; (ii) schema tracking essential (column types, foreign keys); (iii) aggregation across multiple levels & Schema bindings across joins; average 4.2 tables and 7.8 join conditions per query \\
\bottomrule
\end{tabular}
\end{table}

\paragraph{Representative instances.} \emph{SWE-Bench-State:} a bug in the Django ORM requiring tracking of QuerySet state through filter chains, annotation, and aggregation across \texttt{models.py}, \texttt{views.py} and \texttt{tests.py}. \emph{WebArena-Nav:} an e-commerce checkout requiring login $\rightarrow$ add items $\rightarrow$ apply coupon $\rightarrow$ select shipping $\rightarrow$ confirm payment, where each step modifies session state. \emph{SQL-Multi:} a query joining customers $\rightarrow$ orders $\rightarrow$ order\_items $\rightarrow$ products $\rightarrow$ categories with filtering and aggregation at each level.

\subsection{Licensing and Ethics}

\paragraph{Licensing of derived subsets.} The three real-world families are derived subsets of existing public benchmarks and inherit their licenses. SWE-Bench \citep{jimenez2024swebench} is distributed under the MIT License; WebArena \citep{zhou2024webarena} under the Apache License 2.0; and Spider \citep{yu2018spider} under CC BY-SA 4.0. We redistribute only instance identifiers and our state-tracking annotations, not the underlying benchmark content, so that each subset can be reconstructed by users who have obtained the source benchmark under its own license. Our annotations and the five synthetic generators are released under the MIT License with the code (Appendix~\ref{app:reproduce}). The evaluated models were accessed under their respective terms of service; the open-weight checkpoints are used under the Llama~3.1/3.3 Community Licenses and the Apache~2.0 license for the Qwen-2.5 releases.

\paragraph{Ethical note.} All WebArena-Nav evaluations used the self-hosted WebArena environment; no interactions with live commercial websites were conducted. The synthetic families are generated programmatically and contain no human-subject or personally identifying data.
\section{Parameter Estimation, Sensitivity and Worked Examples}
\label{app:sensitivity_ext}

This section reports how each parameter entering the theoretical results was estimated, how sensitive the predictions are to it, and what the bounds evaluate to at realistic values. Each bound is treated in turn: its parameters, its sensitivity, and a worked numerical instance.

\subsection{Decoherence Bound: Parameters and Worked Example}

The key parameters in Theorem~\ref{thm:decoherence} are $\epsilon_0$ (baseline error) and $\gamma$ (attention decay rate). We estimated these from empirical data using maximum likelihood.

\begin{table}[ht]
\centering
\caption{Parameter sensitivity for Decoherence Bound. CIs are from the per-instance MLE fit ($n = 5{,}000$ GPT-4o PermutationProbe instances), not from run-level variation; the $\pm$ values in the results tables (e.g., Table~\ref{tab:full_results}) reflect a different randomness source (run-level nondeterminism).}
\label{tab:sensitivity_decoherence}
\small
\begin{tabular}{@{}lccccc@{}}
\toprule
\textbf{Parameter} & \textbf{Estimate} & \textbf{95\% CI} & $\boldsymbol{\pm 10\%}$ & $\boldsymbol{\pm 20\%}$ & \textbf{Impact on $d^*$} \\
\midrule
$\epsilon_0$ & 0.020 & [0.017, 0.023] & $\pm$1.1 & $\pm$2.1 & High \\
$\gamma' = \gamma/\Leff$ & $1.0 \times 10^{-3}$ & [$0.85, 1.15$]$\times 10^{-3}$ & $\pm$0.6 & $\pm$1.2 & Moderate \\
\bottomrule
\end{tabular}
\end{table}

\textbf{Key finding:} The Deterministic Horizon $d^*$ is most sensitive to the baseline error $\epsilon_0$: a 20\% change in $\epsilon_0$ shifts $d^*$ by about $\pm$2 steps, while the same change in $\gamma$ or in the fixed $\Leff$ shifts it by only $\pm$1.2 steps. Because Remark~\ref{rem:error_prop} treats step-to-step error dependence as a measured diagnostic rather than a multiplier, it does not enter the horizon formula and has no direct effect on $d^*$. The qualitative conclusion, that $d^* \in [19, 31]$, is robust across the entire confidence interval. We note that only the ratio $\gamma' = \gamma/\Leff$ is identified from the data (Table~\ref{tab:model_comparison}); the separate values $\gamma = 0.15$ and $\Leff = 150$ are fixed by assumption and their individual CIs should be read as conditional on this decomposition.

\paragraph{Worked example.} At $\epsilon_0 = 0.02$, $\gamma = 0.15$, $\Leff = 150$ and target depth $d = 30$, using the continuous approximation $d(d+1) \approx d^2$ adopted in Theorem~\ref{thm:horizon}:
\begin{equation}
d\epsilon_0 + \frac{\gamma d^2}{2\Leff} = 30 \cdot 0.02 + \frac{0.15 \cdot 900}{2 \cdot 150} = 0.60 + 0.45 = 1.05,
\end{equation}
so the bound gives $P(\text{correct at depth } 30) \leq \exp(-1.05) \approx 0.350$.

\textbf{Observed:} 34\% accuracy at depth 30 for GPT-4o, consistent with the fitted model (within the $\pm$3~pp error bar at that depth). Note that with the small effective decoherence length $\Leff = 150$, the quadratic term ($0.45$) is comparable to the linear term ($0.60$) already at depth 30, which is what drives the accelerating decay; using the raw context window $L = O(10^5)$ here would render the quadratic term negligible and incorrectly predict near-perfect accuracy. The quadratic term overtakes the linear term at $d = 2\Leff\epsilon_0/\gamma = 40$, which is beyond all observed horizons ($d^* \leq 31$); the decay is therefore only mildly super-exponential in the regime we study (at $d^* = 22$ the quadratic term contributes 35\% of the exponent).

\subsection{Bottleneck Bound: Parameters and Worked Example}

The Attention Bottleneck (Theorem~\ref{thm:bottleneck}) depends on $\delta$ (weight cutoff) and $\rho_{\max}$ (value correlation bound); Table~\ref{tab:sensitivity_bottleneck} reports the sensitivity.

\begin{table}[ht]
\centering
\caption{Parameter sensitivity for Bottleneck Bound.}
\label{tab:sensitivity_bottleneck}
\small
\begin{tabular}{@{}lccc@{}}
\toprule
\textbf{Parameter} & \textbf{Range Tested} & \textbf{Log Capacity Change} & \textbf{Qualitative Impact} \\
\midrule
$\delta$ & [0.001, 0.01] & $\pm$8\% & Low \\
$\rho_{\max}$ & [0.05, 0.20] & $\pm$12\% & Moderate \\
$H$ (architecture) & [32, 128] & Linear scaling & High \\
$d_h$ (architecture) & [64, 256] & $\sqrt{\cdot}$ scaling & Moderate \\
\bottomrule
\end{tabular}
\end{table}

\textbf{Key finding:} The capacity bound is dominated by architectural parameters ($H$, $d_h$, $L$), which are known exactly. The uncertainty in $\delta$ and $\rho_{\max}$ contributes only to the constant factor $c(\delta, \rho_{\max})$.

\paragraph{Worked example.} At the GPT-4o parameters used for order-of-magnitude intuition in Section~\ref{sec:theory} ($H = 96$, $L = 128{,}000$, $d_h = 128$), the bound evaluates to $96 \cdot \log_2(128{,}000) \cdot \sqrt{128} = 96 \cdot 16.97 \cdot 11.31 \approx 18{,}425$ bits, i.e.\ $|\mathcal{S}_\text{read}^{(1)}| \leq 2^{18{,}425} \approx 10^{5546}$ states, against a per-step requirement of $\log_2(16!) \approx 44.3$ bits for a 16-element permutation (instantaneous state space $16! \approx 2.09 \times 10^{13}$). The task is within theoretical capacity by a wide margin, which is why total capacity is not the binding constraint. Note that the $0.314\sqrt{d_h \cdot H}$ empirical fit is calibrated on the four open-weight models with known architectures (Table~\ref{tab:arch_ablation}); it is not applied to closed-source models such as GPT-4o, whose architecture parameters are community estimates.

\subsection{Cross-Validation of Parameter Estimates}

We performed 5-fold cross-validation of the error model (Table~\ref{tab:cv_error_model}):

\begin{table}[ht]
\centering
\caption{Cross-validation results for error model.}
\label{tab:cv_error_model}
\small
\begin{tabular}{@{}lccccc@{}}
\toprule
\textbf{Fold} & $\hat{\epsilon}_0$ & $\hat{\gamma}$ & \textbf{Train $R^2$} & \textbf{Test $R^2$} & $\hat{d}^*$ \\
\midrule
1 & 0.019 & 0.147 & 0.87 & 0.83 & 23.1 \\
2 & 0.021 & 0.152 & 0.86 & 0.84 & 21.8 \\
3 & 0.020 & 0.149 & 0.88 & 0.85 & 22.4 \\
4 & 0.020 & 0.151 & 0.86 & 0.82 & 22.1 \\
5 & 0.021 & 0.148 & 0.87 & 0.84 & 22.0 \\
\midrule
\textbf{Mean} & 0.020 & 0.149 & 0.87 & 0.84 & 22.3 \\
\textbf{Std} & 0.001 & 0.002 & 0.01 & 0.01 & 0.5 \\
\bottomrule
\end{tabular}
\end{table}

The small standard deviations ($\text{std}(\hat{d}^*) = 0.5$) indicate robust parameter estimation and stable predictions across data splits.

\subsection{Model Selection Robustness}

We compared alternative error models to assess our context-dependent formulation (Table~\ref{tab:model_comparison}):

\begin{table}[ht]
\centering
\caption{Model comparison for error accumulation. The linear model $\epsilon(d) = \epsilon_0 + \gamma' d$ is equivalent to the context-dependent form $\epsilon_0 + \gamma d/\Leff$ with $\gamma' = \gamma/\Leff$; both are 2-parameter fits with $\Leff$ absorbed into the slope coefficient.}
\label{tab:model_comparison}
\small
\begin{tabular}{@{}lcccc@{}}
\toprule
\textbf{Model} & \textbf{Form} & \textbf{Parameters} & $\boldsymbol{R^2}$ & \textbf{AIC} \\
\midrule
Constant & $\epsilon(d) = \epsilon_0$ & 1 & 0.71 & 1,247 \\
Linear (context-dep.)\ & $\epsilon(d) = \epsilon_0 + \gamma d/\Leff$ & 2 & 0.89 & 892 \\
Quadratic & $\epsilon(d) = \epsilon_0 + \gamma_1 d + \gamma_2 d^2$ & 3 & 0.92 & 861 \\
\bottomrule
\end{tabular}
\end{table}

AIC favours the free quadratic model ($\Delta\text{AIC} = 31$). We retain the linear context-dependent form because its two parameters are interpretable and because the quadratic's third parameter is not independently identifiable over our depth range; we do not claim the linear form is better supported by the data.

\paragraph{Per-instance model comparison.} The model comparison in the main text is based on a likelihood-ratio test on per-instance binary correctness ($n = 5{,}000$ instances, GPT-4o PermutationProbe C1). The quadratic model (with quadratic depth term) significantly improves fit over a simple exponential ($p < 0.01$). The 8 depth-bin means plotted in Figure~\ref{fig:decay} are visual summaries; averaging over $\sim$500--1{,}500 instances per bin inflates apparent goodness-of-fit, so we conduct all statistical comparisons on the full per-instance data.

\subsection{Empirical Grounding of Assumptions~\ref{ass:window}--\ref{ass:decorr}}
\label{app:sensitivity}

\paragraph{Effective attention window scaling.} The effective attention window (the smallest set of positions carrying 95\% of the attention mass) was measured across context lengths $L \in \{2\text{K}, 4\text{K}, 8\text{K}, 16\text{K}, 32\text{K}\}$ on Llama-3.3-70B across 1{,}000 traces. The window size grows sub-linearly: from $\sim$32 positions at $L = 2\text{K}$ to $\sim$140 positions at $L = 32\text{K}$. A power-law fit yields an exponent of $0.53 \pm 0.08$, consistent with the $O(\sqrt{L})$ ansatz of Assumption~\ref{ass:window} though not distinguishable from $O(L^{0.45})$ to $O(L^{0.6})$ over this range. We therefore treat $O(\sqrt{L})$ as a tractable summary rather than an exact scaling law. For comparison, the addressing term in Theorem~\ref{thm:bottleneck} uses $\log_2(L)$, which at $L = 32\text{K}$ gives $\log_2(32{,}768) = 15$ bits; the measured 140-position window gives $\log_2(140) \approx 7.1$ bits, so the bound (15~bits) overestimates the measured window (7.1~bits) by ${\sim}2.1\times$, making Theorem~\ref{thm:bottleneck} conservative with respect to the empirical attention concentration.

\paragraph{Value Vector Correlation.} Pairwise correlation of value vectors after layer normalization: mean $|\rho_{ij}| = 0.08 \pm 0.03$, supporting the decorrelation assumption (Assumption~\ref{ass:decorr}).

\subsection{Horizon Scaling: Predicted versus Observed}

Table~\ref{tab:scaling_verify} compares predicted and observed $d^*$ for open-weight models.

\begin{table}[ht]
\centering
\caption{Predicted vs.\ observed $d^*$ for open-weight models, using the empirical fit $\hat{d}^* = 0.314\sqrt{d_{\mathrm{model}}}$ (equivalently $0.314\sqrt{d_h H}$). Restricted to open-weight models because closed-model architectures are not public. The ``Error'' column reports in-sample fit residuals; these are smaller than the measurement uncertainty on the observed $d^*$ ($\pm$1.2--1.5 steps, Table~\ref{tab:arch_ablation}), so they quantify curve agreement, not predictive accuracy.}
\label{tab:scaling_verify}
\small
\begin{tabular}{@{}lcccccc@{}}
\toprule
\textbf{Model} & $H$ & $d_h$ & $\sqrt{d_h H}$ & \textbf{Pred.\ $d^*$} & \textbf{Obs.\ $d^*$} & \textbf{Error} \\
\midrule
Llama-3.1-8B & 32 & 128 & 64.0 & 20.1 & 20 & 0.5\% \\
Qwen-2.5-7B & 28 & 128 & 59.9 & 18.8 & 19 & 1.0\% \\
Llama-3.3-70B & 64 & 128 & 90.5 & 28.4 & 28 & 1.5\% \\
Qwen-2.5-72B & 64 & 128 & 90.5 & 28.4 & 28 & 1.5\% \\
\bottomrule
\end{tabular}
\end{table}

Within-family ratios also match: for Llama the fit predicts $28.4/20.1 = 1.41$ against an observed $28/20 = 1.40$, and for Qwen $28.4/18.8 = 1.51$ against $28/19 = 1.47$; all within 4\%. Mean absolute error: 1.1\%, which is well inside both the $\pm$1.2--1.5 step run-to-run SD on the observed $d^*$ and the 2--5 step task-to-task variation; the fit should be read as a leading-order trend rather than a validated prediction. The fit constant $k = 0.314$ is chosen by inspection over the four open-weight models. We emphasize this is an \emph{approximate empirical} relationship: the linear $\epsilon_0$ term in the exact horizon formula breaks an exact $\sqrt{d_h H}$ proportionality (Lemma~\ref{lem:scaling}), so the $\sqrt{d_h H}$ law should be read as a leading-order trend rather than an identity.

\paragraph{Scaling caveats.} All four open-weight checkpoints share $d_h = 128$, so the fit is identified in $H$ alone; the $d_h$ factor is inherited from Theorem~\ref{thm:bottleneck}, not measured. Head count covaries with parameter count, so an architectural ($\sqrt{H}$) and a scale account ($d^* \propto N^{0.16}$) fit equally well; separating them requires checkpoints varying $H$ and $d_h$ independently of size. This bears on \emph{why} the horizon grows, not on the horizons themselves, which are read off the accuracy--depth curves. $\Leff$ is estimated from the same data the horizon is evaluated against, so Table~\ref{tab:arch_ablation} reports goodness of fit rather than out-of-sample prediction.
\section{Experimental Protocol and Reproducibility}
\label{app:reproduce}

\paragraph{Code.} Our code is available at: \url{https://github.com/bettyguo/deterministic-horizon}.

\subsection{Model Versions}

All twelve models evaluated with their exact version identifiers. 

\paragraph{General-Purpose Models (4).}
\begin{itemize}
\item GPT-4o: \texttt{gpt-4o-2024-11-20}
\item Claude Sonnet 4.5: \texttt{claude-sonnet-4-5-20250929}
\item Claude Opus 4.5: \texttt{claude-opus-4-5-20251101}
\item Gemini-1.5-Pro: \texttt{gemini-1.5-pro-002} (2024-09-24)
\end{itemize}

\paragraph{Reasoning-Specialized Models (4).}
\begin{itemize}
\item o3: \texttt{o3-2025-04-16}
\item o3-mini: \texttt{o3-mini-2025-01-31}
\item DeepSeek-R1: \texttt{deepseek-reasoner} (DeepSeek-R1-0528)
\item Gemini-2.0-Flash-Thinking: \texttt{gemini-2.0-flash-thinking-exp-01-21}
\end{itemize}

\paragraph{Open-Weight Models (4).}
\begin{itemize}
\item Llama-3.1-8B: \texttt{meta-llama/Llama-3.1-8B-Instruct}
\item Llama-3.3-70B: \texttt{meta-llama/Llama-3.3-70B-Instruct}
\item Qwen-2.5-7B: \texttt{Qwen/Qwen2.5-7B-Instruct}
\item Qwen-2.5-72B: \texttt{Qwen/Qwen2.5-72B-Instruct}
\end{itemize}

\subsection{Hyperparameters}
\begin{itemize}
\item Temperature: 0 (greedy decoding)
\item Top-$p$: 1.0 (no nucleus sampling)
\item Max tokens: 8,192
\item Reasoning effort (o3/o3-mini): \texttt{medium}; DeepSeek-R1 and Gemini-2.0-Flash-Thinking: default.
\item Random seeds (3 runs): \{42, 2024, 2025\}. At temperature 0, the seed parameter has no effect on the decoding process; the three seeds serve only as labels to distinguish the three runs.
\item Instance generation seed: 42
\item System prompt: ``You are solving a state-tracking task. Show each step as: Step N: operator $\rightarrow$ resulting state.''
\item Grading: the model's emitted operator sequence is applied to $\state_0$; the resulting state is compared to $\target$ by exact match. Partial credit not awarded
\end{itemize}

\paragraph{Source of variance.} Although temperature is set to 0, two sources of residual variance produce the reported standard deviations: (i)~API providers do not guarantee bitwise-deterministic outputs even at temperature~0, due to batching, quantization, and server-side nondeterminism; (ii)~for open-weight models, nondeterministic CUDA kernels (e.g., \texttt{cublas} atomics) introduce small run-to-run variation. Across the 49 cells in Table~\ref{tab:full_results}, the reported SDs are well-fit by the functional form $\sigma \approx k\sqrt{p(1-p)}$ with $k \approx 4.1$ ($R^2 = 0.90$), corresponding to an effective sample size of $n_{\mathrm{eff}} \approx 600$. This is far smaller than the nominal $n = 5{,}000$ PermutationProbe instances and $n = 500$ for the other tasks, indicating correlated, session-level nondeterminism: batching or quantization choices that affect groups of instances together within a run, effectively reducing the number of independent ``draws.'' The SDs therefore have binomial-like structure at the session level rather than the instance level.

\paragraph{Per-run raw accuracies.} Table~\ref{tab:perrun} reports per-run accuracy for the four open-weight models on PermutationProbe (depth${}>{}$20, C1), so that the variance structure can be inspected directly.

\begin{table}[ht]
\centering
\caption{Per-run accuracy (\%) on PermutationProbe (depth${}>{}$20, C1) for the four open-weight models. Runs are labeled by the seed parameter passed to the API; at temperature~0 the seed does not influence decoding but distinguishes the three evaluation passes.}
\label{tab:perrun}
\small
\begin{tabular}{@{}lcccc@{}}
\toprule
\textbf{Model} & \textbf{Run 1 (42)} & \textbf{Run 2 (2024)} & \textbf{Run 3 (2025)} & \textbf{Mean $\pm$ SD} \\
\midrule
Llama-3.1-8B & 16.8 & 18.6 & 19.8 & 18.4$\pm$1.5 \\
Llama-3.3-70B & 22.8 & 24.8 & 26.2 & 24.6$\pm$1.7 \\
Qwen-2.5-7B & 15.8 & 17.2 & 18.6 & 17.2$\pm$1.4 \\
Qwen-2.5-72B & 25.9 & 28.0 & 29.5 & 27.8$\pm$1.8 \\
\bottomrule
\end{tabular}
\end{table}

\paragraph{Generation budget.} Traces were capped at 8{,}192 \emph{visible output} tokens, which at large $d$ can end a trace before a final state is reached, so part of the decay in the deepest bins may reflect the budget rather than state drift. For reasoning-specialized models (o3, o3-mini, DeepSeek-R1), hidden reasoning tokens are billed separately and do not count against this cap; however, we did not record per-model truncation rates across the full corpus. Among the depth-stratified failure-analysis sample (Appendix~\ref{app:failure}), 12 of 2{,}880 sampled traces were excluded as truncated by the cap or an API timeout. The cap binds only on long unaided traces, so any residual effect lowers C1 at large $d$ and widens the C1--C3 gap rather than narrowing it.

\subsection{Prompt Templates}
\label{app:prompts}

We provide exact prompt templates used for each experimental condition. Prompt templates were fixed a priori from the task specification and were not tuned against evaluation instances.

\paragraph{C1: Unconstrained Neural CoT.}
\begin{verbatim}
You are solving a state-space search problem.

Initial state: {initial_state}
Target state: {target_state}
Available operators: {operators}

Find a sequence of operators that transforms the initial 
state into the target state. Show your reasoning step by 
step, including the state after each operation.

Format each step as:
Step N: operator -> resulting_state
\end{verbatim}

\paragraph{C2: Depth-Limited CoT.}
\begin{verbatim}
[Same as C1, with addition:]

Note: The optimal solution requires exactly {optimal_depth} 
steps. Focus on finding this solution.
\end{verbatim}

\paragraph{C3: Tool-Integrated.}
\begin{verbatim}
You have access to a BFS solver tool. Use it to find the 
optimal path from initial to target state.

Initial state: {initial_state}
Target state: {target_state}

To use the solver, call: solve({initial_state}, {target_state})

After receiving the solution, verify and explain each step.
\end{verbatim}

\paragraph{C4: Explicit Length Encouragement.}
\begin{verbatim}
[Same as C1, with addition:]

Important: Take as many reasoning steps as you need. Longer, 
more careful reasoning is encouraged. Don't rush to a 
conclusion - explore the problem thoroughly.
\end{verbatim}

\paragraph{C5: Fine-Tuned.} Same prompt as C1; model weights modified through fine-tuning.

\subsection{Hardware and Runtime}
\label{app:runtime}

\textbf{Hardware:} API experiments, cloud (no local hardware); fine-tuning, 4$\times$ NVIDIA A100 80GB; attention extraction, 2$\times$ NVIDIA A100 80GB.

Table~\ref{tab:runtime} summarizes average runtime per evaluation across representative model-condition cells.

\begin{table}[ht]
\centering
\caption{Average runtime per evaluation across representative model-condition cells on PermutationProbe. Total hours correspond to 10{,}000 sequential-equivalent evaluations per model-condition pair (5{,}000 instances $\times$ 2 of 3 runs, for which per-instance timing was captured); wall-clock totals in the Computation Summary reflect concurrent execution. Token counts for reasoning models (o3, o3-mini, DeepSeek-R1) exclude hidden reasoning tokens, which are billed separately.}
\label{tab:runtime}
\small
\begin{tabular}{@{}llccc@{}}
\toprule
\textbf{Model} & \textbf{Condition} & \textbf{Time/Instance (s)} & \textbf{Tokens/Instance} & \textbf{Total Hours} \\
\midrule
\multicolumn{5}{l}{\textit{API Models}} \\
GPT-4o & C1 & 12.3 $\pm$ 4.2 & 1,847 $\pm$ 623 & 34.2 \\
GPT-4o & C3 & 3.1 $\pm$ 0.8 & 312 $\pm$ 89 & 8.6 \\
Claude Opus 4.5 & C1 & 14.7 $\pm$ 5.1 & 2,134 $\pm$ 712 & 40.8 \\
Claude Opus 4.5 & C3 & 3.4 $\pm$ 0.9 & 341 $\pm$ 94 & 9.4 \\
o3-mini & C1 & 18.2 $\pm$ 6.3 & 2,891 $\pm$ 943 & 50.6 \\
DeepSeek-R1 & C1 & 21.4 $\pm$ 7.8 & 3,247 $\pm$ 1,102 & 59.4 \\
\midrule
\multicolumn{5}{l}{\textit{Open-Weight Models (4$\times$ A100 80GB)}} \\
Llama-3.3-70B & C1 & 8.7 $\pm$ 2.9 & 1,423 $\pm$ 478 & 24.2 \\
Qwen-2.5-72B & C1 & 9.1 $\pm$ 3.1 & 1,512 $\pm$ 502 & 25.3 \\
\bottomrule
\end{tabular}
\end{table}

\paragraph{Computation Summary.} Total experiment time: approximately 420 GPU-hours for open-weight model inference and fine-tuning on 4$\times$ A100 80GB, plus approximately 280 hours of API wall-clock time for the 8 API-accessed models. Of the 420 GPU-hours, approximately 250 are inference and 170 are fine-tuning (including training-data ablations). Not all cells in the $12 \times 5 \times 8$ grid were populated (173 cells were run; see Appendix~\ref{app:statistics}); open-weight inference used tensor-parallel serving across 4 GPUs with dynamic batching (effective batch size $\sim$32), yielding a $\sim$30--50$\times$ throughput gain over sequential per-instance times (Table~\ref{tab:runtime}). API experiments used up to 16 concurrent requests per model. Attention entropy extraction required an additional 48 GPU-hours on 2$\times$ A100 80GB.

\subsection{Use of AI Tools}
\label{app:disclosures}

Generative AI tools (GitHub Copilot and Claude) were used for coding assistance (data-processing scripts, plotting, and LaTeX formatting). The authors wrote and take responsibility for all scientific content. All theoretical derivations, experimental results, and scientific conclusions were produced and verified by the authors.

\section{Extended Empirical Results}
\label{app:results_ext}

This section provides complete experimental results across all models, tasks, and conditions, followed by the two secondary experiments (fine-tuning and context truncation).

\subsection{Full Results: All Models, PermutationProbe}
\begin{table}[ht]
\centering
\caption{Complete accuracy (\%) across all 12 models and 5 conditions on PermutationProbe (depth${}>{}$20 subset), with the empirical Deterministic Horizon $d^*$ (50\%-accuracy crossing). Open-weight models (top) provide verified architectures for theoretical analysis; closed-source models serve as consistency checks. $^\dagger$C5 fine-tuning conducted on Llama-3.1-8B only (Section~\ref{sec:finetune}).}
\label{tab:full_results}
\small
\setlength{\tabcolsep}{2.5pt}
\begin{tabular}{@{}lcccccc@{}}
\toprule
\textbf{Model} & \textbf{C1} & \textbf{C2} & \textbf{C3} & \textbf{C4} & \textbf{C5$^\dagger$} & \textbf{$d^*$} \\
\midrule
\multicolumn{7}{l}{\textit{Open-Weight (local)}} \\
Llama-3.1-8B & 18.4$\pm$1.5 & 22.1$\pm$1.6 & 78.3$\pm$1.7 & 19.0$\pm$1.5 & 19.3$\pm$1.5 & 20 \\
Llama-3.3-70B & 24.6$\pm$1.7 & 29.8$\pm$1.8 & 85.2$\pm$1.4 & 25.3$\pm$1.6 & --- & 28 \\
Qwen-2.5-7B & 17.2$\pm$1.4 & 20.8$\pm$1.5 & 76.1$\pm$1.8 & 17.8$\pm$1.4 & --- & 19 \\
Qwen-2.5-72B & 27.8$\pm$1.8 & 33.4$\pm$1.9 & 87.9$\pm$1.2 & 28.5$\pm$1.7 & --- & 28 \\
\midrule
\multicolumn{7}{l}{\textit{Closed-Source: General-Purpose (consistency checks)}} \\
GPT-4o & 28.3$\pm$1.8 & 34.2$\pm$1.9 & 89.7$\pm$1.2 & 29.1$\pm$1.7 & --- & 22 \\
Claude Sonnet 4.5 & 31.2$\pm$1.9 & 38.1$\pm$2.0 & 91.4$\pm$1.1 & 32.4$\pm$1.8 & --- & 25 \\
Claude Opus 4.5 & 34.8$\pm$2.0 & 41.3$\pm$2.1 & 93.6$\pm$0.9 & 35.6$\pm$1.9 & --- & 27 \\
Gemini-1.5-Pro & 29.7$\pm$1.9 & 35.8$\pm$2.0 & 88.3$\pm$1.3 & 30.4$\pm$1.8 & --- & 23 \\
\midrule
\multicolumn{7}{l}{\textit{Closed-Source: Reasoning-Specialized (consistency checks)}} \\
o3 & 38.4$\pm$2.1 & 44.7$\pm$2.2 & 92.8$\pm$1.0 & 39.2$\pm$2.0 & --- & 29 \\
o3-mini & 42.1$\pm$2.2 & 48.3$\pm$2.3 & 94.2$\pm$1.3 & 43.1$\pm$2.1 & --- & 31 \\
Gemini-2.0-Flash-Thinking & 37.2$\pm$2.1 & 43.1$\pm$2.1 & 91.7$\pm$1.2 & 38.0$\pm$2.0 & --- & 28 \\
\midrule
\multicolumn{7}{l}{\textit{Open-Weight (API-accessed)}} \\
DeepSeek-R1 & 39.7$\pm$2.1 & 45.9$\pm$2.2 & 93.1$\pm$1.1 & 40.4$\pm$2.0 & --- & 29 \\
\bottomrule
\end{tabular}
\end{table}

\subsection{Full Results: Synthetic Tasks}

Table~\ref{tab:results_synthetic} presents accuracy results for the four synthetic task domains other than PermutationProbe, whose complete twelve-model breakdown appears in Table~\ref{tab:full_results} above. Open-weight and closed-source models are reported together so that the two can be compared directly on the same task; the entropy--accuracy correlation column is populated only for the locally-run open-weight models, for which attention can be extracted (Appendix~\ref{app:attention}). Each cell reports mean $\pm$ standard deviation across 3 repeated runs.

\begin{table}[ht]
\centering
\caption{Representative results on the non-PermutationProbe synthetic tasks. All accuracy values are percentages with standard deviations over 3 runs; ``---'' denotes a cell not evaluated. Condition C5 was run on PermutationProbe only (Table~\ref{tab:full_results}) and is therefore omitted here. The $d^*$ column reports the empirical horizon where measured: for GPT-4o these are 21 (FSA-Sim), 24 (ArithChain), 23 (CircuitTrace) and 19 (CodeProbe), against 22 on PermutationProbe. Entropy--accuracy $r$ is from the same extraction pipeline as Table~\ref{tab:entropy_full}.}
\label{tab:results_synthetic}
\small
\begin{tabular}{@{}llccccccc@{}}
\toprule
\textbf{Task} & \textbf{Model} & \textbf{C1} & \textbf{C2} & \textbf{C3} & \textbf{C4} & \textbf{$d^*$} & \textbf{$\mathcal{H}$-Acc $r$} \\
\midrule
\multirow{7}{*}{FSA-Sim}
& Llama-3.1-8B & 20.3$\pm$1.6 & --- & 79.4$\pm$1.6 & --- & 21 & $-0.69$ \\
& Llama-3.3-70B & 26.8$\pm$1.8 & --- & 86.1$\pm$1.3 & --- & 29 & $-0.72$ \\
& Qwen-2.5-72B & 29.7$\pm$1.9 & --- & 88.7$\pm$1.1 & --- & 29 & $-0.71$ \\
& GPT-4o & 31.2$\pm$1.9 & 37.4$\pm$1.7 & 91.2$\pm$1.3 & 31.9$\pm$1.8 & 21 & --- \\
& Claude Opus 4.5 & 37.4$\pm$2.1 & 43.8$\pm$1.9 & 94.1$\pm$1.0 & 38.2$\pm$2.0 & --- & --- \\
& o3-mini & 44.8$\pm$2.3 & 51.2$\pm$2.1 & 95.3$\pm$1.2 & 45.6$\pm$2.2 & --- & --- \\
& DeepSeek-R1 & 41.3$\pm$2.2 & 47.6$\pm$2.0 & 94.2$\pm$1.1 & 42.1$\pm$2.1 & --- & --- \\
\midrule
\multirow{4}{*}{ArithChain}
& GPT-4o & 26.4$\pm$1.7 & 32.1$\pm$1.5 & 88.3$\pm$1.4 & 27.1$\pm$1.6 & 24 & --- \\
& Claude Opus 4.5 & 32.1$\pm$1.9 & 38.4$\pm$1.7 & 92.1$\pm$1.1 & 32.8$\pm$1.8 & --- & --- \\
& o3-mini & 39.6$\pm$2.1 & 45.8$\pm$1.9 & 93.8$\pm$1.3 & 40.4$\pm$2.0 & --- & --- \\
& DeepSeek-R1 & 36.8$\pm$2.0 & 42.9$\pm$1.8 & 92.7$\pm$1.2 & 37.5$\pm$1.9 & --- & --- \\
\midrule
\multirow{4}{*}{CircuitTrace}
& GPT-4o & 29.7$\pm$1.8 & 35.6$\pm$1.6 & 90.1$\pm$1.3 & 30.4$\pm$1.7 & 23 & --- \\
& Claude Opus 4.5 & 35.9$\pm$2.0 & 42.1$\pm$1.8 & 93.4$\pm$1.0 & 36.7$\pm$1.9 & --- & --- \\
& o3-mini & 43.2$\pm$2.2 & 49.4$\pm$2.0 & 94.7$\pm$1.2 & 44.0$\pm$2.1 & --- & --- \\
& DeepSeek-R1 & 40.1$\pm$2.1 & 46.2$\pm$1.9 & 93.6$\pm$1.1 & 40.9$\pm$2.0 & --- & --- \\
\midrule
\multirow{7}{*}{CodeProbe}
& Llama-3.1-8B & 16.8$\pm$1.4 & --- & 77.1$\pm$1.8 & --- & 19 & $-0.70$ \\
& Llama-3.3-70B & 22.9$\pm$1.6 & --- & 84.0$\pm$1.5 & --- & 27 & $-0.73$ \\
& Qwen-2.5-72B & 26.2$\pm$1.7 & --- & 86.8$\pm$1.3 & --- & 27 & $-0.72$ \\
& GPT-4o & 27.8$\pm$1.8 & 33.4$\pm$1.6 & 89.2$\pm$1.3 & 28.5$\pm$1.7 & 19 & --- \\
& Claude Opus 4.5 & 33.6$\pm$2.0 & 39.8$\pm$1.8 & 92.8$\pm$1.0 & 34.4$\pm$1.9 & --- & --- \\
& o3-mini & 41.2$\pm$2.2 & 47.3$\pm$2.0 & 94.1$\pm$1.2 & 42.0$\pm$2.1 & --- & --- \\
& DeepSeek-R1 & 38.4$\pm$2.1 & 44.1$\pm$1.9 & 93.3$\pm$1.1 & 39.2$\pm$2.0 & --- & --- \\
\bottomrule
\end{tabular}
\end{table}

\textbf{Analysis:} Across all synthetic tasks, tool delegation (C3) achieves 77--95\% accuracy while neural CoT (C1) plateaus at 17--45\%. The consistency across task types (permutations, automata, arithmetic, circuits, and code) demonstrates that decoherence is task-agnostic within the deterministic state-tracking domain. Preference manipulation (C4) provides negligible improvement ($<$2~pp). The open-weight rows further show that the per-task horizon and the per-task entropy--accuracy correlation ($r \approx -0.72$) track each other across task families, not only across models: within each open-weight model $d^*$ and $|r|$ vary inversely across the two tasks with entropy data (the direction the context-dependent error model predicts), though the sample ($n = 2$ tasks) is too small for this ordering to be informative. Qwen-2.5-7B is absent from the per-task synthetic results because only C1 and C3 were run on 7B models for these tasks (resource constraint); its PermutationProbe results appear in Table~\ref{tab:full_results}.

\subsection{Full Results: Real-World Tasks}

\begin{table}[ht]
\centering
\caption{Representative results on real-world tasks (four models).}
\label{tab:results_realworld}
\small
\begin{tabular}{@{}llccccc@{}}
\toprule
\textbf{Task} & \textbf{Model} & \textbf{C1} & \textbf{C2} & \textbf{C3} & \textbf{C4} & \textbf{$d^*$} \\
\midrule
\multirow{4}{*}{SWE-Bench-State}
& GPT-4o & 24.1$\pm$2.3 & 29.8$\pm$2.1 & 86.4$\pm$1.8 & 24.8$\pm$2.2 & 19 \\
& Claude Opus 4.5 & 29.6$\pm$2.5 & 35.7$\pm$2.3 & 91.2$\pm$1.4 & 30.2$\pm$2.4 & 24 \\
& o3-mini & 36.8$\pm$2.6 & 42.9$\pm$2.4 & 92.7$\pm$1.3 & 37.4$\pm$2.5 & 28 \\
& DeepSeek-R1 & 34.2$\pm$2.5 & 40.1$\pm$2.3 & 90.8$\pm$1.5 & 34.9$\pm$2.4 & 26 \\
\midrule
\multirow{4}{*}{WebArena-Nav}
& GPT-4o & 21.3$\pm$2.4 & 26.8$\pm$2.2 & 84.2$\pm$1.9 & 21.9$\pm$2.3 & 19 \\
& Claude Opus 4.5 & 26.8$\pm$2.6 & 32.4$\pm$2.4 & 89.7$\pm$1.5 & 27.4$\pm$2.5 & 23 \\
& o3-mini & 33.4$\pm$2.7 & 39.1$\pm$2.5 & 91.3$\pm$1.4 & 34.0$\pm$2.6 & 26 \\
& DeepSeek-R1 & 31.1$\pm$2.6 & 36.7$\pm$2.4 & 89.2$\pm$1.6 & 31.7$\pm$2.5 & 24 \\
\midrule
\multirow{4}{*}{SQL-Multi}
& GPT-4o & 31.4$\pm$2.1 & 37.2$\pm$1.9 & 88.9$\pm$1.6 & 32.0$\pm$2.0 & 21 \\
& Claude Opus 4.5 & 36.2$\pm$2.3 & 42.3$\pm$2.1 & 93.1$\pm$1.2 & 36.8$\pm$2.2 & 26 \\
& o3-mini & 43.7$\pm$2.4 & 49.8$\pm$2.2 & 94.6$\pm$1.1 & 44.3$\pm$2.3 & 30 \\
& DeepSeek-R1 & 40.8$\pm$2.3 & 46.7$\pm$2.1 & 93.2$\pm$1.3 & 41.4$\pm$2.2 & 28 \\
\bottomrule
\end{tabular}
\end{table}

\textbf{Analysis:} Real-world tasks exhibit $d^*$ values in the range 19--30, comparable to synthetic tasks (19--31). However, the qualitative pattern is identical: C3 dramatically outperforms C1, and C4 provides minimal improvement. The Deterministic Horizon varies by $\pm$2--5 steps across task types within each model; the horizon is therefore better read as a model-level regime indicator than an exact per-model constant.

\subsection{Results by Depth Bin}

Table~\ref{tab:results_by_depth} provides fine-grained accuracy breakdowns across reasoning depth bins, enabling direct comparison with theoretical predictions.

\begin{table}[ht]
\centering
\caption{Accuracy (\%) by reasoning depth bin (PermutationProbe, GPT-4o, conditions C1--C4). The functional-form comparison with the fitted error model appears in Figure~\ref{fig:decay}. Fine-tuning results by depth appear in Table~\ref{tab:finetune_depth}.}
\label{tab:results_by_depth}
\small
\begin{tabular}{@{}lcccccc@{}}
\toprule
\textbf{Condition} & \textbf{$\leq$10} & \textbf{11--20} & \textbf{21--30} & \textbf{31--40} & \textbf{41--50} & \textbf{$>$50} \\
\midrule
C1 (Neural CoT) & 87.5$\pm$3.2 & 64.5$\pm$3.8 & 43.0$\pm$2.9 & 25.8$\pm$2.1 & 13.9$\pm$1.4 & 6.6$\pm$0.9 \\
C2 (Oracle) & 91.5$\pm$3.0 & 70.0$\pm$3.6 & 51.4$\pm$3.1 & 30.4$\pm$2.3 & 16.3$\pm$1.5 & 8.0$\pm$1.0 \\
C3 (Tool) & 94.2$\pm$1.8 & 93.8$\pm$1.9 & 92.1$\pm$2.1 & 89.4$\pm$2.4 & 86.7$\pm$2.7 & 82.3$\pm$3.1 \\
C4 (Encourage) & 88.2$\pm$3.1 & 65.3$\pm$3.7 & 43.7$\pm$2.9 & 26.2$\pm$2.1 & 14.1$\pm$1.4 & 6.8$\pm$0.9 \\
\bottomrule
\end{tabular}
\end{table}

\textbf{Analysis:} C3 also declines with depth (94.2\% to 82.3\%), but at roughly one-seventh the rate of C1 (87.5\% to 6.6\%); delegation attenuates rather than eliminates the depth dependence. C1 and C4 converge at high depths, consistent with prompt-level length encouragement being unable to overcome the architectural ceiling. The functional-form comparison with the fitted error model (Figure~\ref{fig:decay}) shows residuals of 0.3--3.1~pp across the eight plotted depth bins, with the bound above every empirical point as required.

\subsection{Fine-Tuning Experiment}
\label{app:finetune}

\paragraph{Data Generation.} We generated 5,000 optimal-length CoT traces by running BFS and recording the solution path with intermediate states. Format:
\begin{verbatim}
Problem: initial=[3,1,4,2,5], target=[1,2,3,4,5]
Step 1: swap(0,1) -> [1,3,4,2,5]
Step 2: swap(2,3) -> [1,3,2,4,5]
Step 3: swap(1,2) -> [1,2,3,4,5]
Solution: [1,2,3,4,5] (correct)
\end{verbatim}

\paragraph{Training configuration.} Table~\ref{tab:training_config} gives the fine-tuning configuration.

\begin{table}[ht]
\centering
\caption{Fine-tuning configuration (C5).}
\label{tab:training_config}
\small
\begin{tabular}{@{}lp{6cm}@{}}
\toprule
\textbf{Parameter} & \textbf{Value} \\
\midrule
Model & Llama-3.1-8B-Instruct \\
Learning rate & $2 \times 10^{-5}$, cosine decay \\
Batch size & 8 \\
Epochs & 3 \\
Early stopping & patience 2 on validation loss \\
Hardware & 4$\times$ A100 80GB \\
Input format & C1 prompt (Appendix~\ref{app:prompts}) \\
Output format & Full trace with intermediate states \\
\bottomrule
\end{tabular}
\end{table}

\paragraph{Reported C5 value.} C5 is reported over the depth${}>{}$20 subset throughout, for comparability with C1 and C3; on that subset the improvement is 0.9~pp (Section~\ref{sec:finetune}).

\begin{table}[ht]
\centering
\caption{Fine-tuning results by depth bin (Llama-3.1-8B, PermutationProbe). Baseline and Fine-tuned columns report accuracy (\%) on each depth-bin subset. The ``Pred.\ $\Delta$'' column gives the improvement predicted by the decoherence model using Llama-3.1-8B's fitted parameters ($\hat{\epsilon}_0 = 0.027$, $\hat{\gamma}' = 1.2 \times 10^{-3}$), obtained by maximum-likelihood fit to the Llama baseline depth curve.}
\label{tab:finetune_depth}
\small
\begin{tabular}{@{}lcccc@{}}
\toprule
\textbf{Depth} & \textbf{Baseline} & \textbf{Fine-tuned} & \textbf{$\Delta$} & \textbf{Pred.\ $\Delta$} \\
\midrule
$\leq$10 & 86.5 & 92.7 & +6.2 & +6.8 \\
11--20 & 61.2 & 64.9 & +3.7 & +3.4 \\
21--30 & 27.7 & 29.3 & +1.6 & +1.7 \\
31--40 & 16.5 & 16.9 & +0.4 & +0.5 \\
$>$40 & 6.4 & 6.6 & +0.2 & +0.2 \\
\bottomrule
\end{tabular}
\end{table}

Improvement decays with depth as predicted by the fine-tuning ceiling (Section~\ref{sec:finetune}). At depth $>$40, improvement is negligible despite training on optimal traces, consistent with an architectural ceiling.

\begin{figure}[t]
	\centering
	\definecolor{dhblue}{RGB}{0,114,178}
	\definecolor{dhvermilion}{RGB}{213,94,0}
	\definecolor{dhgreen}{RGB}{0,158,115}
	\begin{tikzpicture}
		\begin{axis}[
			width=0.68\columnwidth,
			height=5cm,
			xlabel={Reasoning depth $d$},
			ylabel={Improvement (\%)},
			xmin=0, xmax=50, ymin=0, ymax=8,
			xtick={0,10,20,30,40,50},
			ytick={0,2,4,6,8},
			tick align=outside,
			tick label style={font=\footnotesize},
			label style={font=\small},
			axis line style={gray!60},
			grid=major,
			grid style={line width=0.3pt, draw=gray!25},
			every axis plot/.append style={line width=1.1pt},
			legend style={
				at={(0.5,-0.30)}, anchor=north,
				draw=none, fill=none,
				legend columns=2,
				column sep=1.6ex,
				font=\footnotesize,
			},
			clip=false,
			]
			\addplot[dhvermilion, mark=*, mark size=2pt,
			mark options={fill=dhvermilion, draw=white, line width=0.4pt}]
			coordinates {(5,6.2) (15,3.7) (25,1.6) (35,0.4) (45,0.2)};
			\addlegendentry{Observed}
			\addplot[dhblue, dashed, domain=20:50, samples=120] {6.6*20/x};
			\addlegendentry{$O(d^*/d)$ fitted envelope}
		\end{axis}
	\end{tikzpicture}
	\caption{Fine-tuning improvement decays with depth following $O(d^*/d)$ fitted envelope from the fine-tuning ceiling (Section~\ref{sec:finetune}). The $O(d^*/d)$ curve is plotted for $d > d^* = 20$ where the bound applies.}
	\label{fig:finetune_decay}
\end{figure}

\subsection{Context-Window Insensitivity}
\label{app:context_scaling}

A natural alternative hypothesis is that the horizon $d^*$ is set by the raw context window $L$. We show it is not: $d^*$ is governed by the effective decoherence length $\Leff = O(10^2)$ steps, which is orders of magnitude smaller than $L = O(10^5)$ tokens and dissociates from it under controlled truncation. The evidence is the controlled truncation experiment on Llama-3.3-70B (Table~\ref{tab:truncation}), which uses an open-weight model with known architecture and isolates $L$ from the architectural and training differences that confound any cross-model comparison of context lengths.

\paragraph{Controlled truncation within a single model.} To isolate the effect of $L$ from architecture, we artificially cap the context window of a single model (Llama-3.3-70B) and re-measure $d^*$.

\begin{table}[ht]
\centering
\caption{Artificial context truncation for Llama-3.3-70B. $d^*$ is flat from the native 128K down to 8K.}
\label{tab:truncation}
\small
\begin{tabular}{@{}ccc@{}}
\toprule
\textbf{Context Cap} & \textbf{Obs.\ $d^*$} & \textbf{Note} \\
\midrule
128K (full) & 28 & native \\
64K & 28 & flat \\
32K & 28 & flat \\
16K & 28 & flat \\
8K & 27 & marginal \\
\bottomrule
\end{tabular}
\end{table}

The horizon is invariant to an order-of-magnitude reduction in $L$ (128K\,$\to$\,8K). This is the signature of an $\Leff$-bound rather than an $L$-bound phenomenon: decoherence is a property of attention's effective state resolution over reasoning steps, not of raw context capacity.

\section{Mechanistic Measurements}
\label{app:attention_extended}
\label{app:attention}

This section reports the trace-level measurements that probe the decoherence mechanism directly: attention entropy (Tables~\ref{tab:entropy_full}--\ref{tab:entropy_layer}), the correlation structure of errors along a trace (Appendix~\ref{app:error_correlation}), and the taxonomy of failures those errors produce (Appendix~\ref{app:failure}). The attention-window and value-correlation measurements that ground Assumptions~\ref{ass:window}--\ref{ass:decorr} come from the same extraction pipeline and are reported in Appendix~\ref{app:sensitivity}.

\subsection{Extraction Pipeline}

\textbf{Hardware:} 2$\times$ NVIDIA A100 80GB, PyTorch 2.1, Transformers 4.36

\textbf{Procedure:}
\begin{enumerate}
\item Load model in bfloat16 with \texttt{output\_attentions=True}
\item For each instance in test set ($n=500$):
\begin{enumerate}
\item Generate reasoning trace with temperature=0
\item Extract attention matrices at each decoding step
\item For each layer $\ell \in [1, N_\text{layers}]$ and head $h \in [1, H]$:
\begin{equation}
\mathcal{H}_{\ell,h,t} = -\sum_{i=1}^{t} a_{\ell,h,t,i} \log_2 a_{\ell,h,t,i}
\end{equation}
where $a_{\ell,h,t,i}$ is attention weight from position $t$ to position $i$
\item Aggregate across heads: $\mathcal{H}_{\ell,t} = \frac{1}{H}\sum_{h=1}^H \mathcal{H}_{\ell,h,t}$
\end{enumerate}
\item Correlate with per-step accuracy
\end{enumerate}

\textbf{Memory optimization:} We use gradient checkpointing and process attention matrices layer-by-layer to fit within 80GB.

\textbf{Numerical stability:} We add $\epsilon = 10^{-10}$ to avoid $\log(0)$, computing $\mathcal{H} = -\sum_i (a_i + \epsilon) \log_2(a_i + \epsilon)$. We report raw attention entropy $\mathcal{H}$ in bits throughout.

\subsection{Results by Model}

\begin{table}[ht]
\centering
\caption{Attention entropy analysis across open-weight models.}
\label{tab:entropy_full}
\small
\begin{tabular}{@{}lccccc@{}}
\toprule
\textbf{Model} & \textbf{Layers} & \textbf{Mean $H$} & \textbf{$H$-Acc $r$} & \textbf{Slope} & \textbf{$R^2$} \\
\midrule
Llama-3.3-70B & 80 & 4.21 & $-0.74$ & 0.023 & 0.55 \\
Llama-3.1-8B & 32 & 3.87 & $-0.71$ & 0.028 & 0.50 \\
Qwen-2.5-72B & 80 & 4.18 & $-0.73$ & 0.024 & 0.53 \\
Mistral-7B & 32 & 3.92 & $-0.68$ & 0.031 & 0.46 \\
Mixtral-8x7B & 32 & 4.03 & $-0.69$ & 0.029 & 0.48 \\
\bottomrule
\end{tabular}
\end{table}

Consistent negative correlations ($r \approx -0.70$) across architectures support the mechanistic hypothesis. The per-model $R^2$ of 0.46--0.55 indicates that entropy explains roughly half the variance in per-step accuracy; the remaining variance reflects factors beyond attention dilution.

\subsection{Results by Layer}

Table~\ref{tab:entropy_layer} reports entropy-accuracy correlations stratified by layer, with layers grouped into the first quarter, the middle half, and the final quarter of the stack. The correlation strengthens monotonically with depth in the stack, consistent with the reduced representational diversity in later layers documented by \citet{gerasimov2025fullyutilizetransformersrepresentation}.

\begin{table}[ht]
\centering
\caption{Entropy-accuracy correlation by layer (Llama-3.3-70B).}
\label{tab:entropy_layer}
\small
\begin{tabular}{@{}lc@{}}
\toprule
\textbf{Layer Range} & \textbf{Entropy-Acc $r$} \\
\midrule
First quarter & $-0.42$ \\
Middle half & $-0.68$ \\
Final quarter & $-0.74$ \\
\bottomrule
\end{tabular}
\end{table}

\subsection{Error Correlation Analysis}
\label{app:error_correlation}

We analyze the structure of errors to characterize the forward error propagation described in Remark~\ref{rem:error_prop}. These statistics are reported as diagnostics of the decoherence mechanism, not as corrections to the bound: the bound of Theorem~\ref{thm:decoherence} uses only the error rates conditional on a correct history, so it holds whatever the strength of the step-to-step dependence, and positive dependence cannot be used as a multiplier to tighten it, since such a factor would \emph{increase} survival probability.

\paragraph{Error propagation.} For each reasoning trace, we define $E_i = 1$ if step $i$ contains a state-tracking error and $E_i = 0$ otherwise. Table~\ref{tab:conditional_error} reports the error rate conditioned on the previous step's outcome.

\begin{table}[ht]
\centering
\caption{Error rate conditioned on previous step.}
\label{tab:conditional_error}
\small
\begin{tabular}{@{}lcc@{}}
\toprule
\textbf{Previous Step} & \textbf{$P(\text{error})$} & \textbf{95\% CI} \\
\midrule
Correct & 0.031 & [0.028, 0.034] \\
Error & 0.089 & [0.082, 0.096] \\
\textbf{Ratio} & \textbf{2.87} & [2.58, 3.19] \\
\bottomrule
\end{tabular}
\end{table}

Errors are 2.87$\times$ more likely following an error, consistent with the propagation mechanism. The conditional rate following a correct step (0.031) exceeds the baseline intercept $\epsilon_0 = 0.020$; this is consistent in direction, since the conditional rate averages over all depth indices where $\epsilon(d) > \epsilon_0$, and conditioning on a correct history induces a survivorship weighting toward shallower positions that we do not model exactly.

\subsection{Failure Mode Analysis}
\label{app:failure}

The \sfe\ metric characterizes how early in the reasoning chain state drift begins; the failure modes below categorize what happens at or after the first error, and so describe the qualitative form taken by the error propagation quantified above.

\paragraph{Sampling procedure.}
\begin{enumerate}[leftmargin=*,itemsep=1pt]
	\item For each model-task combination (12 models $\times$ 8 tasks = 96 combinations), we identified all failed instances (accuracy $<$100\% on the reasoning chain).
	\item Within each combination, we stratified by reasoning depth bins: $\leq$10, 11--20, 21--30, 31--40, 41--50, $>$50.
	\item We sampled up to 30 failed traces per model-task combination (5 per depth bin where available). Not all six bins are populated for every task family (e.g., ArithChain and CircuitTrace have no $>$50 bin; CodeProbe has no $>$40 bin), so the per-combination yield varies. This depth-stratified sample weights all available depth bins equally; the resulting failure-mode distribution does not reflect the marginal (instance-count-weighted) distribution.
\end{enumerate}

\paragraph{Exclusions.} Of the sampled traces, 33 were excluded:
\begin{itemize}[leftmargin=*,itemsep=1pt]
	\item \textbf{Parsing errors ($n=21$):} Model output did not conform to the expected ``Step N: operator $\rightarrow$ state'' format, preventing automated state extraction.
	\item \textbf{Incomplete outputs ($n=12$):} Traces truncated mid-reasoning due to maximum token limits (8,192 tokens) or API timeouts.
\end{itemize}

After exclusions, the final analysis corpus contains $N = 2{,}847$ traces. The failure-mode distribution appears in Table~\ref{tab:failure_modes}.

\begin{table}[ht]
	\centering
	\caption{Failure mode distribution across analyzed traces ($N = 2{,}847$). Percentages are over parseable traces with equal weight per depth bin (the depth-stratified sampling scheme); format violations were excluded before classification. Rounding to integer percentages may prevent the column from summing to exactly 100\%.}
	\label{tab:failure_modes}
	\small
	\begin{tabular}{@{}lcp{5.5cm}@{}}
		\toprule
		\textbf{Failure Mode} & \textbf{\%} & \textbf{Description} \\
		\midrule
		State transposition & 35\% & Single-element position errors that propagate \\
		Operator misapplication & 27\% & Correct intent, incorrect execution \\
		Premature termination & 21\% & Declaring success before reaching target \\
		Circular reasoning & 11\% & Revisiting previously explored states \\
		Complete hallucination & 6\% & States outside valid state space \\
		\midrule
		\multicolumn{3}{@{}l}{\textit{Excluded: format violations ($n = 21$), truncated traces ($n = 12$)}} \\
		\bottomrule
	\end{tabular}
\end{table}

The dominant failure mode, state transposition, directly reflects attention-based working memory limits: models mis-track element positions due to attention dilution across extended reasoning chains. Note that these percentages reflect the depth-stratified sample (equal weight per bin) and do not represent the marginal failure-mode distribution.

\paragraph{Example: state transposition.}
\begin{verbatim}
	Problem: Transform [3,1,4,2,5] to [1,2,3,4,5]
	Step 1: swap(0,1) -> [1,3,4,2,5] (correct)
	Step 2: swap(1,2) -> [1,4,2,3,5] (ERROR: true
	result is [1,4,3,2,5]; model mis-tracked
	elements at positions 2 and 3)
	Step 3: swap(2,3) -> [1,4,3,2,5] (propagated)
	...
\end{verbatim}

The model applies the correct operator but mis-tracks element positions, producing a state that is not the image of any legal adjacent transposition on the true state. This positional confusion propagates through subsequent steps. The circular-reasoning mode takes a complementary form: in a representative trace the model alternates between two states from step 15 onward (\texttt{[2,1,3,5,4,6,8,7]} and \texttt{[2,1,5,3,4,6,8,7]}), repeatedly revisiting the same pair without progress toward the target.

\paragraph{Failure mode by depth.} Table~\ref{tab:failure_by_depth} stratifies failure modes by reasoning depth.

\begin{table}[ht]
	\centering
	\caption{Failure mode distribution by reasoning depth. State transposition dominates at shallow depths; circular reasoning and hallucination increase with depth.}
	\label{tab:failure_by_depth}
	\small
	\begin{tabular}{@{}lccccc@{}}
		\toprule
		\textbf{Depth} & \textbf{Transp.} & \textbf{Misapp.} & \textbf{Premature} & \textbf{Circular} & \textbf{Halluc.} \\
		\midrule
		$\leq$10 & 42\% & 31\% & 18\% & 5\% & 4\% \\
		11--20 & 38\% & 28\% & 20\% & 9\% & 5\% \\
		21--30 & 35\% & 26\% & 21\% & 12\% & 6\% \\
		31--40 & 33\% & 25\% & 22\% & 13\% & 7\% \\
		$>$40 & 31\% & 24\% & 23\% & 14\% & 8\% \\
		\bottomrule
	\end{tabular}
\end{table}

The shift in failure mode distribution with depth is consistent with the decoherence hypothesis: at shallow depths, discrete errors (transposition, misapplication) dominate, while at greater depths, systemic failures (circular reasoning, hallucination) become more prevalent as accumulated attention entropy degrades state coherence.

\section{The SSJ Metric: Algorithm and Validation}
\label{app:ssj}

This section provides complete algorithmic details for computing the State-Space Jaccard (SSJ) metric introduced in Definition~\ref{def:ssj}.

\subsection{Algorithm Description}

The SSJ metric measures overlap between the ground-truth trajectory states and the model's claimed states at each reasoning depth. Algorithm~\ref{alg:ssj} provides the complete extraction procedure.

\begin{algorithm}[h]
\caption{State-Space Jaccard (SSJ) Extraction}
\label{alg:ssj}
\begin{algorithmic}[1]
\REQUIRE Initial state $\sigma_0$, operators $\mathcal{O}$, model trace $T = [(s_1, o_1), \ldots, (s_m, o_m)]$
\ENSURE SSJ values, precision, recall at each depth
\STATE $S_{\text{true}} \leftarrow \{\sigma_0\}$ \COMMENT{Ground-truth trajectory states}
\STATE $S_{\text{model}} \leftarrow \{\sigma_0\}$ \COMMENT{Model-claimed states}
\STATE $\sigma_{\text{curr}} \leftarrow \sigma_0$ \COMMENT{Current ground-truth state}
\FOR{$d = 1$ to $m$}
    \STATE $(s_d, o_d) \leftarrow T[d]$ \COMMENT{Model's claimed state and operator}
    \STATE $\sigma_{\text{curr}} \leftarrow o_d(\sigma_{\text{curr}})$ \COMMENT{Apply operator to ground truth}
    \STATE $S_{\text{true}} \leftarrow S_{\text{true}} \cup \{\sigma_{\text{curr}}\}$
    \STATE $S_{\text{model}} \leftarrow S_{\text{model}} \cup \{s_d\}$
    \STATE $\text{SSJ}[d] \leftarrow |S_{\text{true}} \cap S_{\text{model}}| / |S_{\text{true}} \cup S_{\text{model}}|$
    \STATE $\text{Precision}[d] \leftarrow |S_{\text{true}} \cap S_{\text{model}}| / |S_{\text{model}}|$
    \STATE $\text{Recall}[d] \leftarrow |S_{\text{true}} \cap S_{\text{model}}| / |S_{\text{true}}|$
\ENDFOR
\STATE \textbf{return} SSJ, Precision, Recall
\end{algorithmic}
\end{algorithm}

\subsection{Implementation Details}

\textbf{State representation:} For PermutationProbe, states are represented as tuples $(a_1, \ldots, a_n)$. For FSA-Sim, states are automaton state identifiers. For CodeProbe, states are variable binding dictionaries.

\textbf{Set operations:} We use hash-based sets for $O(1)$ membership testing. For large state spaces, we employ Bloom filters with false positive rate $< 0.1\%$ for set-intersection membership tests only; final grading uses exact-match comparison of the terminal state. The Bloom filter false-positive floor introduces a small upward bias ($< 0.001$) in reported SSJ values; this is below the reported precision of two decimal places.

\textbf{Parsing model output:} We extract claimed states using regex patterns matched to task-specific formats. For PermutationProbe:
\begin{verbatim}
pattern = r"Step \d+:.*-> \[([\d,\s]+)\]"
\end{verbatim}

\textbf{Error handling:} If the model output cannot be parsed, the step is marked as an error with $s_d = \emptyset$.

\subsection{Diagnostic Power}

The SSJ metric with precision/recall decomposition distinguishes two failure modes:

\begin{enumerate}
\item \textbf{Preference failure (``Simplicity Bias''):} The model \emph{could} track states but \emph{chooses} not to. Signature: high precision (claimed states are correct), low recall (many states omitted).

\item \textbf{Capability failure (Decoherence):} The model \emph{cannot} track states. Signature: both precision and recall decay, indicating drift into fictitious state spaces.
\end{enumerate}

Our empirical results (Table~\ref{tab:ssj}) show parallel decay of both metrics, consistent with capability failure; however, precision exceeds recall at every depth (Section~\ref{sec:ssj_validation}), which is the preference-failure signature, so the decomposition does not discriminate between the two. We note that under the incremental construction of Algorithm~\ref{alg:ssj}, both sets grow at the same rate, which limits the dynamic range of the precision/recall decomposition; the diagnostic is strongest when averaged over many traces.

\begin{figure}[t]
	\centering
	\definecolor{dhblue}{RGB}{0,114,178}
	\definecolor{dhvermilion}{RGB}{213,94,0}
	\definecolor{dhgreen}{RGB}{0,158,115}
	\begin{tikzpicture}
		\begin{axis}[
			width=0.68\columnwidth,
			height=5cm,
			xlabel={Reasoning depth $d$},
			ylabel={State-Space Jaccard},
			xmin=0, xmax=60, ymin=0, ymax=1,
			xtick={0,10,20,30,40,50,60},
			ytick={0,0.2,0.4,0.6,0.8,1.0},
			tick align=outside,
			tick label style={font=\footnotesize},
			label style={font=\small},
			axis line style={gray!60},
			grid=major,
			grid style={line width=0.3pt, draw=gray!25},
			every axis plot/.append style={line width=1.1pt},
			legend style={
				at={(0.5,-0.30)}, anchor=north,
				draw=none, fill=none,
				legend columns=2,
				column sep=1.6ex,
				font=\footnotesize,
			},
			clip=false,
			]
			\addplot[only marks, mark=*, mark size=2pt, dhvermilion,
			mark options={fill=dhvermilion, draw=white, line width=0.4pt},
			error bars/.cd, y dir=both, y explicit,
			error bar style={line width=0.7pt}]
			coordinates {
				(5,0.83)  +- (0,0.02)
				(10,0.72) +- (0,0.03)
				(20,0.45) +- (0,0.03)
				(30,0.23) +- (0,0.02)
				(40,0.11) +- (0,0.01)
				(50,0.04) +- (0,0.01)
			};
			\addlegendentry{Observed}
			\addplot[dhblue, domain=0:60, samples=120] {0.95*exp(-0.0196*x - 0.00087*x*x)};
			\addlegendentry{$ae^{-bd-cd^2}$}
		\end{axis}
	\end{tikzpicture}
	\caption{State-Space Jaccard decay with reasoning depth. Fitted curve ($R^2 = 0.99$) is consistent with the context-dependent error model. The free intercept $a = 0.95$ (vs the algorithmic $\ssj(0) = 1$) absorbs early-step parse failures; the SSJ quadratic coefficient ($c = 0.00087$) differs from the accuracy-curve coefficient ($c = 0.00045$, Figure~\ref{fig:decay}) because SSJ measures set overlap rather than trace-level accuracy.}
	\label{fig:ssj_decay}
\end{figure}
\section{Statistical Methodology and Power}
\label{app:statistics}

We report the parameters of every statistical procedure used, together with the limitations of the design under which they were applied. The procedures themselves are standard; what matters for interpreting our results is the sample structure they operate on, which we state explicitly throughout.

\subsection{Confidence Intervals and Multiple Comparisons}

\paragraph{Bootstrap.} With only $n = 3$ run-level observations, the percentile bootstrap is degenerate: there are only $\binom{3+2}{2} = 10$ distinct multisets of size 3, so the 2.5th percentile is almost surely the sample minimum, and the resulting CIs collapse to the sample range. We therefore report standard deviations rather than bootstrap CIs throughout. The per-instance likelihood-ratio tests (Sections~\ref{sec:ssj_validation}--\ref{sec:analysis}) operate on $n = 5{,}000$ and do not face this limitation.

\paragraph{Multiple comparisons.} Not every model-task-condition cell was populated (Section~\ref{sec:experiments}); counting the cells actually reported in Tables~\ref{tab:full_results}--\ref{tab:results_realworld} gives approximately 173 model-condition-task triples (49 for PermutationProbe including the single C5 cell, plus the populated cells in the four remaining synthetic and three real-world task tables). We apply the Holm-Bonferroni step-down procedure at $\alpha = 0.05$ to the pairwise comparisons of interest (C1 vs.\ C3, C1 vs.\ C4) across these 173 cells. Individual reported $p$-values are uncorrected; under Holm-Bonferroni the C1-vs-C4 comparisons remain non-significant and the C1-vs-C3 comparisons remain significant.

\subsection{Equivalence Testing}

\paragraph{TOST.} We test equivalence within a margin of $\Delta = 5\%$ using two one-sided $t$-tests against $H_{01}: \mu_1 - \mu_2 \leq -\Delta$ and $H_{02}: \mu_1 - \mu_2 \geq \Delta$, declaring equivalence when both are rejected at $\alpha = 0.05$. For the C1 vs.\ C4 comparison (paired, run-level, $n = 3$) both tails give $p < 0.05$, consistent with equivalence within the 5\% margin. We note that this margin equals the $\geq$5~pp preference-based threshold in Table~\ref{tab:predictions}; demonstrating equivalence within 5\% is therefore logically compatible with a preference effect at the threshold level. The fine-tuning result (0.9~pp, well within the margin) provides the sharper discrimination.

\subsection{Significance Tests and Effect Sizes}

Table~\ref{tab:stats} reports statistical tests for the C1 vs.\ C4 comparison.

\begin{table}[ht]
\centering
\caption{Statistical tests for C1 vs C4 comparison (preference manipulation). All tests operate on $n = 3$ run-level means.}
\label{tab:stats}
\small
\begin{tabular}{@{}lccc@{}}
\toprule
\textbf{Model} & \textbf{$\Delta$} & \textbf{$p$-value} & \textbf{TOST} \\
\midrule
GPT-4o & +0.8 & 0.38 & Equiv. \\
Claude Opus 4.5 & +0.8 & 0.42 & Equiv. \\
o3-mini & +1.0 & 0.37 & Equiv. \\
DeepSeek-R1 & +0.7 & 0.44 & Equiv. \\
\bottomrule
\end{tabular}
\end{table}

All comparisons are non-significant and TOST is consistent with equivalence within the $\Delta = 5\%$ margin. Effect sizes are reported as Cohen's $d$ with the usual pooled standard deviation; for C1 vs.\ C3 the run-level $d > 30$ for all models (reflecting the small run-level SD denominator; the magnitude should not be interpreted as a conventional effect size, since the three runs reflect server-side nondeterminism rather than independent sampling).

\paragraph{Limitation of the run-level design.} All inference above operates on $n = 3$ run-level means. Because temperature is set to 0, the three runs are not independent samples from a population but rather reflect residual server-side and CUDA nondeterminism (Appendix~\ref{app:reproduce}). A stronger design would apply McNemar's test or a permutation test on per-instance paired correctness ($n = 5{,}000$), which requires no additional model calls. Per-instance outcomes were retained per condition but not linked across conditions, preventing paired tests (McNemar, paired permutation); the per-instance likelihood-ratio tests reported in Sections~\ref{sec:ssj_validation}--\ref{sec:analysis} use unpaired per-condition outcomes; the equivalence claim for C1 vs.\ C4 should therefore be treated as suggestive pending such a test. The run-level tests in Table~\ref{tab:stats} should be read in the same light.

\end{document}